\theoremstyle{plain}
\theoremstyle{plain}
\theoremstyle{plain}
\newtheorem{lem}{\protect\lemmaname}
\theoremstyle{plain}
\theoremstyle{plain}
\theoremstyle{definition}
\theoremstyle{definition}
\theoremstyle{definition}
\newtheorem{rem}{\protect\remarkname}
\providecommand{\claimname}{Claim}
\providecommand{\lemmaname}{Lemma}
\providecommand{\propositionname}{Proposition}
\providecommand{\theoremname}{Theorem}
\providecommand{\corollaryname}{Corollary} 
\providecommand{\definitionname}{Definition}
\providecommand{\assumptionname}{Assumption}
\providecommand{\remarkname}{Remark}
\newcommand{\overbar}[1]{\mkern 1.25mu\overline{\mkern-1.25mu#1\mkern-0.25mu}\mkern 0.25mu}
\DeclareMathOperator*{\argmax}{arg\,max}
\DeclareMathOperator*{\argmin}{arg\,min}
\newcommand{\abf}{{\mathbf{a}}}
\newcommand{\bbf}{{\mathbf{b}}}
\newcommand{\ebf}{{\mathbf{e}}}
\newcommand{\kbf}{{\mathbf{k}}}
\newcommand{\ubf}{{\mathbf{u}}}
\newcommand{\vbf}{{\mathbf{v}}}
\newcommand{\wbf}{{\mathbf{w}}}
\newcommand{\xbf}{{\mathbf{x}}}
\newcommand{\ybf}{{\mathbf{y}}}
\newcommand{\zbf}{{\mathbf{z}}}
\newcommand{\Ibf}{{\mathbf{I}}}
\newcommand{\Kbf}{{\mathbf{K}}}
\newcommand{\RR}{{\mathbb{R}}}
\newcommand{\EE}{{\mathbb{E}}}
\newcommand{\NN}{{\mathbb{N}}}
\newcommand{\fbar}{{\overbar{f}}}
\newcommand{\gbar}{{\overbar{g}}}
\newcommand{\sigbar}{{\overbar{\sigma}}}
\newcommand{\xbfbar}{{\overbar{\mathbf{x}}}}
\newcommand{\zbfbar}{{\overbar{\mathbf{z}}}}
\newcommand{\gtil}{{\widetilde{g}}}
\newcommand{\htil}{{\widetilde{h}}}
\newcommand{\util}{{\widetilde{u}}}
\newcommand{\Dtil}{{\widetilde{D}}}
\newcommand{\Ltil}{{\widetilde{L}}}
\newcommand{\Otil}{{\widetilde{O}}}
\newcommand{\mutil}{{\widetilde{\mu}}}
\newcommand{\sigtil}{{\widetilde{\sigma}}}
\newcommand{\xbftil}{{\widetilde{\mathbf{x}}}}
\newcommand{\zbftil}{{\widetilde{\mathbf{z}}}}
\newcommand{\dhat}{{\widehat{d}}}
\newcommand{\Cc}{\mathcal{C}}
\newcommand{\Dc}{\mathcal{D}}
\newcommand{\Fc}{\mathcal{F}}
\newcommand{\Gc}{\mathcal{G}}
\newcommand{\Hc}{\mathcal{H}}
\newcommand{\Sc}{\mathcal{S}}
\newcommand{\Xc}{\mathcal{X}}
\newcommand{\dia}[1]{\mathrm{diam}\big(#1\big)}
\newcommand{\del}[1]{\Delta^{(#1)}}
\newcommand{\xx}[1]{\mathbf{x}^{(#1)}}
\newcommand{\zz}[1]{\mathbf{z}^{(#1)}}
\newcommand{\ff}[1]{f^{(#1)}}
\newcommand{\muu}[1]{\mu^{(#1)}}
\newcommand{\muutil}[1]{\widetilde{\mu}^{(#1)}}
\newcommand{\sig}[1]{\sigma^{(#1)}}
\newcommand{\UCB}{\mathrm{UCB}}
\newcommand{\LCB}{\mathrm{LCB}}
\newcommand{\UCBB}[1]{\mathrm{UCB}^{(#1)}}
\newcommand{\LCBB}[1]{\mathrm{LCB}^{(#1)}}
\newcommand{\UCBBB}[1]{\overline{\mathrm{UCB}}^{(#1)}}
\newcommand{\LCBBB}[1]{\overline{\mathrm{LCB}}^{(#1)}}
\renewcommand{\cite}{\citep}
\title{Regret Bounds for\\Noise-Free Cascaded Kernelized Bandits}
\author{\name Zihan Li \email lizihan@u.nus.edu \\
      \addr National University of Singapore
      \AND
      \name Jonathan Scarlett \email scarlett@comp.nus.edu.sg \\
      \addr National University of Singapore}
\begin{document}

\maketitle
\begin{abstract}
We consider optimizing a function network in the noise-free grey-box setting with RKHS function classes, where the exact intermediate results are observable. We assume that the structure of the network is known (but not the underlying functions comprising it), and we study three types of structures: (1) chain: a cascade of scalar-valued functions, (2) multi-output chain: a cascade of vector-valued functions, and (3) feed-forward network: a fully connected feed-forward network of scalar-valued functions. We propose a sequential upper confidence bound based algorithm GPN-UCB along with a general theoretical upper bound on the cumulative regret. In addition, we propose a non-adaptive sampling based method along with its theoretical upper bound on the simple regret for the Mat\'ern kernel. We also provide algorithm-independent lower bounds on the simple regret and cumulative regret. Our regret bounds for GPN-UCB have the same dependence on the time horizon as the best known in the vanilla black-box setting, as well as near-optimal dependencies on other parameters (e.g., RKHS norm and network length).
\end{abstract}

\section{Introduction}

Black-box optimization of an expensive-to-evaluate function based on point queries is a ubiquitous problem in machine learning. Bayesian optimization (or Gaussian process optimization) refers to a class of methods using Gaussian processes (GPs), whose main idea is to place a prior over the unknown function and update the posterior according to point query results. Bayesian optimization has a wide range of applications including parameter tuning \cite{snoek2012practical}, experimental design \cite{griffiths2020constrained}, and robotics \cite{lizotte2007automatic}.  While function evaluations are noisy in most applications, there are also scenarios where noise-free modeling can be suitable, such as simulation \cite{nguyen2016cascade}, goal-driven dynamics learning \cite{bansal2017goal}, and density map alignment \cite{singer2023alignment}.

In the literature on Bayesian optimization, the problem of optimizing a real-valued black-box function is usually studied under two settings: (1) Bayesian setting: the target function is sampled from a known GP prior, and (2) non-Bayesian setting: the target function has a low norm in reproducing kernel Hilbert space (RKHS). 

In this work, we consider a setting falling ``in between'' the white-box setting (where the full definition of the target function is known) and the black-box setting, namely, a grey-box setting in which the algorithms can leverage partial internal information of the target function beyond merely the final outputs or even slightly modify the target function  \cite{astudillo2021thinking}.  Existing grey-box optimization methods exploit internal information such as observations of intermediate outputs for composite functions \cite{astudillo2019bayesian} and lower fidelity but faster approximation of the final output for modifiable functions \cite{huang2006sequential}. Numerical experiments show that the grey-box methods significantly outperform standard black-box methods \cite{astudillo2021thinking}.

Though any real-valued network can be treated as a single black-box function and solved with classical Bayesian optimization methods, we explore the benefits offered by utilizing the network structure information and the exact intermediate results under the noise-free grey-box setting. Several practical applications of the cascaded setting (e.g., alloy heat treatment and simulation) are highlighted in \cref{sec:motivation}.

\subsection{Related Work}

Numerous works have proposed Bayesian optimization algorithms for optimizing a single real-valued black-box function under the RKHS setting. For the noisy setting, \cite{srinivas2010gaussian,chowdhury17kernelized,gupta2022regret} provided a typical cumulative regret $\Otil(\sqrt{T}\gamma_T)$, where $T$ is the time horizon, $\gamma_T$ is the maximum information gain associated to the underlying kernel, and the $\Otil(\cdot)$ notation hides the poly-logarithmic factors. Recently, \cite{camilleri2021high,salgia2021domain,li2022gaussian} achieved cumulative regret $\Otil(\sqrt{T\gamma_T})$, which nearly matches algorithm-independent lower bounds for the squared exponential and Mat\'ern kernels \cite{scarlett2017lower,cai2021lower}. For the noise-free setting, \cite{bull2011convergence} achieved a nearly optimal simple regret $O(T^{-\nu/d})$ for the Mat\'ern kernel with smoothness $\nu$. This result implies a two-batch algorithm that uniformly selects $T^\frac{d}{\nu+d}$ points in the first batch and repeatedly picks the returned point in the second batch, has cumulative regret $O(T^\frac{d}{\nu+d})$. In addition, \cite{lyu2019efficient} provided deterministic cumulative regret $O(\sqrt{T\gamma_T})$, with the rough idea being to substitute zero noise into the analysis of GP-UCB \cite{srinivas2010gaussian}. Recently, \cite{salgia2023random} proposed a batch algorithm based on random sampling, attaining cumulative regret $\Otil(T^{1-\nu/d})$ when $\nu<d$ and $O(\mathrm{poly}(\log T))$ when $\nu\ge d$.

Meanwhile, several Bayesian optimization algorithms for optimizing a composition of multiple functions under the noise-free setting have been proposed. \cite{nguyen2016cascade} provides a method for cascade Bayesian optimization; \cite{astudillo2019bayesian} studies optimizing a composition of a black-box function and a known cheap-to-evaluate function; and \cite{astudillo2021bayesian} studies optimizing a network of functions sampled from a GP prior under the grey-box setting. Both \cite{astudillo2019bayesian} and \cite{astudillo2021bayesian} prove the asymptotic consistency of their expected improvement sampling based methods. 

The most related works to ours are \cite{kusakawa2021bayesian} and \cite{sussex2023modelbased}. \cite{kusakawa2021bayesian} introduces two confidence bound based algorithms along with their regret guarantees for both noise-free and noisy settings, as well as an expected improvement based algorithm without theory. \cite{sussex2023modelbased} considers directed grey-box networks representing a causal structure, and proposes an expected improvement based method with regret guarantee.  A detailed comparison regarding problem setup and theoretical performance is provided in \cref{sec:comparison}. In short, we significantly improve certain dependencies in their regret for a UCB-type approach, and we study two new directions -- non-adaptive sampling and algorithm-independent lower bounds -- that were not considered therein (summarized in Section \ref{sec:contributions}).

For function networks with $m$ layers, our cumulative regret bounds are expressed in terms of
\begin{align*}
    \Sigma_T=\max_{i\in[m]}\max_{\zbf_1,\dots,\zbf_T\in\Xc^{(i)}}\sum_{t=1}^T \sig{i}_{t-1}(\zbf_t),
\end{align*}
where $\Xc^{(i)}$ and $\sig{i}_{t-1}(\cdot)$ denote the domain and posterior standard deviation  of the $i$-th layer respectively. This is a term for general layer-composed networks associated to domains $\Xc^{(1)},\dots,\Xc^{(m)}$ and kernel $k$. When $m=1$, the term $\max_{\xbf_1,\dots,\xbf_T\in\Xc}\sum_{t=1}^T \sigma_{t-1}(\xbf_t)$ often appears in the cumulative regret analysis of classic black-box optimization \cite{srinivas2010gaussian,lyu2019efficient,vakili2022open}. Explicit upper bounds on $\Sigma_T$ will be discussed in \cref{sec:ucb_disc}.

\subsection{Contributions} \label{sec:contributions}

We study the problem of optimizing an $m$-layer function network in the noise-free grey-box setting, where the exact intermediate results are observable. We focus on three types of network structures: (1) chain: a cascade of scalar-valued functions, (2) multi-output chain: a cascade of vector-valued functions, and (3) feed-forward network: a fully connected feed-forward network of scalar-valued functions.  Then:

\begin{itemize} \itemsep0ex
	\item We propose a fully sequential upper confidence bound based algorithm GPN-UCB along with its upper bound on cumulative regret for each network structure.  Our regret bound significantly reduces certain dependencies compared to \cite{kusakawa2021bayesian}, in particular showing that their dependence on a ``posterior standard deviation Lipschitz constant'' can be completely removed.
	\item We introduce a non-adaptive sampling based method, and provide its theoretical upper bound on the simple regret for the Mat\'ern kernel. 
	\item We provide algorithm-independent lower bounds on the simple and cumulative regret for an arbitrary algorithm optimizing any chain, multi-output chain, or feed-forward network associated to the Mat\'ern kernel. In broad regimes of interest, these provide evidence or even proof that our upper bounds are near-optimal.
	\item While the goals of this paper are essentially entirely theoretical, we show in \cref{sec:exp} that (slight variations of) our algorithms can be effective in at least simple experimental scenarios.
\end{itemize}

Let $d$ denote the dimension of the domain, $d_{\max}$ denote the maximum dimension among all the $m$ layers, and $D_{2,m}$ denote the product of dimensions from the second layer to the last layer. With $B>0$ restricting the magnitude and smoothness of each layer and $L>1$ restricting the slope of each layer, a partial summary of the proposed cumulative regret bounds for the Mat\'ern kernel with smoothness $\nu\ge 1$ when $d\ge\nu$ and $T=\Omega\big((B(cL)^{m-1})^{d/\nu}\big)$ for $c=\Theta(1)$ is displayed in \cref{tab:summary_cml}.  From this table, we note the following:
\begin{itemize}
	\item The upper and lower bounds share the same $BL^{m-1}T^{1-\nu/d}$ dependence when $d=d_{\max}$ and simultaneously a conjecture of \cite{vakili2022open} holds.
	\item Even without such a conjecture, the dependence on $\Sigma_T$ is precisely that given in state-of-the-art bounds for the vanilla black-box setting \cite{vakili2022open}, and rigorous upper bounds on it are known.
\end{itemize}
See \cref{sec:ucb_disc} for further details on the conjecture and rigorous bounds.

\begin{table*}[t!]
	\centering
	\begin{tabular}{lll}
		\toprule
		Lower bound & $\Omega(B(cL)^{m-1}T^{1-\nu/d})$ &\\
		\midrule
		Upper bound (chains) & $O(2^{m} BL^{m-1}\Sigma_T)$&$\stackrel{(c)}{=}O(2^{m} BL^{m-1}T^{1-\nu/d})$ \\
		Upper bound (multi-output chains) & $O(5^m BL^{m-1}\Sigma_T)$&$\stackrel{(c)}{=}O(5^m BL^{m-1}T^{1-\nu/d_{\max}})$ \\
		Upper bound (feed-forward networks) & $O(2^{m} \sqrt{D_{2,m}} BL^{m-1}\Sigma_T)$&$\stackrel{(c)}{=}O(2^{m} \sqrt{D_{2,m}} BL^{m-1}T^{1-\nu/d_{\max}})$ \\
		\bottomrule                 
	\end{tabular}
	\caption{Summary of cumulative regret bounds for the Mat\'ern kernel when $d\ge\nu\ge1$ and $T=\Omega\big((B(cL)^{m-1})^{d/\nu}\big)$ for some $c=\Theta(1)$. Here $\stackrel{(c)}{=}$ indicates the behavior when a conjecture of \cite{vakili2022open} on the black-box setting holds.}
	\label{tab:summary_cml}
\end{table*} 
\begin{figure*}[t!]
	\centering
	\includegraphics[width=0.8\textwidth]{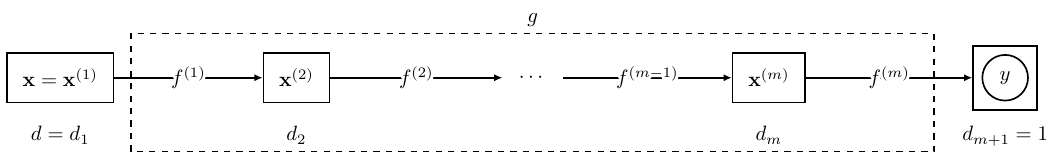}
	\caption{A function network $g$ of $m$ layers with input $\xbf$ and output $y$.}
	\label{fig:setup}
\end{figure*}
To our knowledge, we are the first to attain provably near-optimal scaling (in broad cases of interest), and doing so requires both improving the existing upper bounds and attaining novel lower bounds.  A full summary of our theoretical results is provided in \cref{sec:summary}. Perhaps our most restrictive assumption is noise-free observations, but we believe this is a crucial stepping stone towards the noisy setting (as was the case with regular black-box optimization, e.g., \cite{bull2011convergence}). 

\section{Problem Setup}\label{sec:setup}

We consider optimizing a real-valued grey-box function $g$ on $\Xc=[0,1]^d$ based on noise-free point queries. As shown in \cref{fig:setup}, the target function $g$ is known to be a network of $m$ unknown layers $\ff{i}$ with $i\in[m]$. In general, for any input $\xbf\in\Xc$, the network $g$ has $\xx{1}=\xbf$ and 
\begin{align*}
    \xx{i+1}&=\ff{i}(\xx{i})\hspace{12ex}\text{ for }i\in[m-1],\\
    y&=g(\xbf)=\ff{m}(\xx{m})\hspace{3ex}\in\RR,
\end{align*}
where $\xx{i}$ has dimension $d_i$ for each $i\in[m]$. The domain of $\ff{i}$ is $\Xc^{(i)}$, and the range of $\ff{i}$ is $\Xc^{(i+1)}$.\footnote{Note the equivalent notation $\Xc^{(1)} = \Xc$ and $d_1 = d$.} For any $\zbf\in\Xc^{(i)}$, there exists $\xbf\in\Xc$ such that $\xx{i}=\zbf$. 

We aim to find $\xbf^\ast=\argmax_{\xbf\in\Xc}g(\xbf)$ based on a sequence of point queries up to time horizon $T$. When we query $g$ with input $\xbf_t$ at time step $t$, the intermediate noise-free results $\xx{2}_t,\dots,\xx{m}_t$ and the final noise-free output $y_t$ are accessible. We measure the performance as follows: 
\begin{itemize}
    \item \textbf{Simple regret}: With $\xbf_T^\ast$ being the additional point returned after $T$ rounds, the simple regret is defined as $r_T^\ast=g(\xbf^\ast)-g(\xbf_T^\ast)$;
    \item \textbf{Cumulative regret}: The cumulative regret incurred over $T$ rounds is defined as $R_T=\sum_{i=1}^T r_t$ with $r_t=g(\xbf^\ast)-g(\xbf_t)$.
\end{itemize}

\subsection{Kernelized Bandits}

We assume $g$ is a composition of multiple constituent functions, for which we consider both scalar-valued functions and vector-valued functions based on a given kernel. 
For a scalar-valued kernel $k$ and a known constant $B>0$, we consider scalar-valued functions that lie in $\Hc_k(B)$, the reproducing kernel Hilbert space (RKHS) associated to $k$, with norm at most $B$. In this work, we focus on the Mat\'ern kernel $k_\text{Mat\'ern}$ with smoothness $\nu>0$. Similarly, for an operator-valued kernel $\Gamma$ and a known constant $B>0$, we consider vector-valued functions in $\Hc_\Gamma(B)$, the RKHS corresponding to $\Gamma$ with norm at most $B$. More details on RKHS and $k_\text{Mat\'ern}$ are given in \cref{sec:rkhs}.

\subsection{Surrogate GP Model}
As is common in kernelized bandit problems, our algorithms employ a surrogate Bayesian GP model for $f\in\Hc_k(B)$. For prior with zero mean and kernel $k$, given a sequence of points $(\xbf_1,\dots,\xbf_t)$ and their noise-free observations $\mathbf{y}_t=(y_1,\dots,y_t)$ up to time $t$, the posterior distribution of the function is a GP with mean and variance given by \cite{rasmussen2006gaussian}
\begin{align}
    \mu_t(\xbf) &= \mathbf{k}_t(\xbf)^T \mathbf{K}_t^{-1}\mathbf{y}_t \label{eq:mean}\\
    \sigma_t(\xbf)^2  &= k(\xbf,\xbf) - \mathbf{k}_t(\xbf)^T \mathbf{K}_t^{-1}\mathbf{k}_t(\xbf),\label{eq:var}
\end{align}
where $\mathbf{k}_t(\xbf) = [k(\xbf, \xbf_i)]_{i=1}^{t}\in\RR^{t\times 1}$ and $\mathbf{K}_t = [k(\xbf_i, \xbf_j)]_{i,j=1}^t\in\RR^{t\times t}$. The following lemma shows that the posterior confidence region defined with parameter $B$ is always deterministically valid.
\begin{lem}\textup{\citep[Corollary 3.11]{kanagawa2018gaussian}}
\label{lem:conf}
For $f\in\Hc_k(B)$, let $\mu_t(\xbf)$ and $\sigma_t(\xbf)^2$ denote the posterior mean and variance based on $t$ points $(\xbf_1,\dots,\xbf_t)$ and their noise-free observations $(y_1,\dots,y_t)$ using \eqref{eq:mean} and \eqref{eq:var}. Then, it holds for all $\xbf\in\Xc$ that 
\begin{align*}
    |f(\xbf)-\mu_t(\xbf)|\le B\sigma_t(\xbf).
\end{align*}
\end{lem}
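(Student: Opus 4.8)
The plan is to prove the bound purely from the RKHS geometry, by identifying both $\mu_t$ and $\sigma_t$ with Hilbert-space quantities and then applying Cauchy--Schwarz. The key observation is that, in the noise-free case, the posterior mean $\mu_t(\xbf)$ is the evaluation at $\xbf$ of the orthogonal projection of $f$ onto the subspace spanned by the data representers, while the posterior standard deviation $\sigma_t(\xbf)$ is exactly the norm of the residual of $k(\cdot,\xbf)$ after that same projection.

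First I would set $V_t=\mathrm{span}\{k(\cdot,\xbf_1),\dots,k(\cdot,\xbf_t)\}\subseteq\Hc_k$ and let $P_{V_t}$ denote the orthogonal projection onto $V_t$. By the reproducing property each observation satisfies $y_i=f(\xbf_i)=\langle f,k(\cdot,\xbf_i)\rangle_k$, so $\mathbf{y}_t$ is precisely the vector of inner products of $f$ against the spanning set of $V_t$. Writing $P_{V_t}f=\sum_{i=1}^t a_i k(\cdot,\xbf_i)$ and imposing that $f-P_{V_t}f\perp V_t$ yields the normal equations $\mathbf{K}_t\abf=\mathbf{y}_t$, i.e.\ $\abf=\mathbf{K}_t^{-1}\mathbf{y}_t$; evaluating $P_{V_t}f$ at $\xbf$ via the reproducing property then gives $(P_{V_t}f)(\xbf)=\mathbf{k}_t(\xbf)^T\mathbf{K}_t^{-1}\mathbf{y}_t=\mu_t(\xbf)$. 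Hence $f(\xbf)-\mu_t(\xbf)=\langle f-P_{V_t}f,\,k(\cdot,\xbf)\rangle_k$.

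Next, since $f-P_{V_t}f$ is orthogonal to $V_t$, I can subtract the projection of the representer without changing the inner product: with $k^{\perp}_{\xbf}:=k(\cdot,\xbf)-P_{V_t}k(\cdot,\xbf)$ we get $f(\xbf)-\mu_t(\xbf)=\langle f-P_{V_t}f,\,k^{\perp}_{\xbf}\rangle_k$. Applying Cauchy--Schwarz together with the non-expansiveness of projections, $\|f-P_{V_t}f\|_k\le\|f\|_k\le B$, gives $|f(\xbf)-\mu_t(\xbf)|\le B\,\|k^{\perp}_{\xbf}\|_k$. An analogous normal-equations solve shows $P_{V_t}k(\cdot,\xbf)=\sum_{i=1}^t b_i k(\cdot,\xbf_i)$ with $\bbf=\mathbf{K}_t^{-1}\mathbf{k}_t(\xbf)$, so by the Pythagorean identity $\|k^{\perp}_{\xbf}\|_k^2=\|k(\cdot,\xbf)\|_k^2-\|P_{V_t}k(\cdot,\xbf)\|_k^2=k(\xbf,\xbf)-\mathbf{k}_t(\xbf)^T\mathbf{K}_t^{-1}\mathbf{k}_t(\xbf)=\sigma_t(\xbf)^2$, which closes the argument.

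The main technical obstacle is ensuring that $\mathbf{K}_t$ is invertible, so that the two projection formulas are well-defined; this holds for strictly positive definite kernels (such as the Mat\'ern kernel) whenever the query points are distinct. In the degenerate case one replaces $\mathbf{K}_t^{-1}$ by the Moore--Penrose pseudoinverse and works on the range of $\mathbf{K}_t$, which leaves the two identities $\mu_t(\xbf)=(P_{V_t}f)(\xbf)$ and $\|k^{\perp}_{\xbf}\|_k=\sigma_t(\xbf)$ intact. Everything else reduces to the two routine normal-equation computations and the single application of Cauchy--Schwarz described above.
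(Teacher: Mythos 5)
Your proof is correct. Note that the paper itself contains no proof of Lemma~\ref{lem:conf}: the statement is imported verbatim from \citep[Corollary 3.11]{kanagawa2018gaussian}, and what you have written is essentially a self-contained reconstruction of the standard argument behind that cited result --- the noise-free posterior mean \eqref{eq:mean} is the minimum-norm interpolant, i.e.\ the evaluation of the orthogonal projection $P_{V_t}f$, the posterior standard deviation \eqref{eq:var} is the power function $\|k(\cdot,\xbf)-P_{V_t}k(\cdot,\xbf)\|_k$, and Cauchy--Schwarz plus non-expansiveness of the projection yields the bound with constant exactly $B$. Your remark about replacing $\mathbf{K}_t^{-1}$ by the pseudoinverse is also the right fix, since \eqref{eq:mean}--\eqref{eq:var} implicitly assume $\mathbf{K}_t$ is invertible. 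It is worth contrasting your route with the one the paper does spell out for the vector-valued analogue (Lemma~\ref{lem:mul}, proved in \cref{sec:cr_mul}): there the authors start from a high-probability confidence bound for the \emph{noisy} setting \citep[Theorem 1]{chowdhury2021no} and obtain the deterministic noise-free statement by sending the noise level, regularization parameter, and failure probability to zero along $\sigma=\delta=\lambda=1/a$, $a\to\infty$. Your direct geometric argument buys several things over that limiting strategy: it avoids probabilistic machinery and limit interchanges entirely, it makes transparent why the bound is deterministic and why the constant is exactly $B$ (no $\log(1/\delta)$ or information-gain remnants to dispose of), and it extends verbatim to the operator-valued setting of Lemma~\ref{lem:mul} by replacing the representers $k(\cdot,\xbf)$ with $\Gamma(\cdot,\xbf)\vbf$ and using the reproducing property \eqref{eq:mul_norm}, so it could serve as a unified proof of both lemmas.
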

We also impose a surrogate GP model for functions in $\Hc_\Gamma(B)$. The posterior mean and covariance matrix based on $(\xbf_1,\dots,\xbf_t)$ and the noise-free observations $Y_t=(\ybf_1,\dots, \ybf_t)$ are \cite{chowdhury2021no}
\begin{align}
    \mu_t(\xbf)&=G_t(\xbf)^T G_t^{-1}Y_t,\label{eq:multi_mean}\\
    \Gamma_t(\xbf,\xbf)&=\Gamma(\xbf,\xbf)-G_t(\xbf)^T G_t^{-1}G_t(\xbf)\label{eq:multi_var},
\end{align}
where $G_t(\xbf)=[\Gamma(\xbf,\xbf_i)]_{i=1}^t \in \RR^{nt\times n}$, $G_t=[\Gamma(\xbf_i,\xbf_j)]_{i,j=1}^t \in \RR^{nt\times nt}$, and $Y_t=[\ybf_i]_{i=1}^t\in\RR^{nt\times 1}$. With $\|\cdot\|_2$ denoting the spectral norm, the following lemma provides a deterministic confidence region.

\begin{restatable}{lem}{mulconf}
\label{lem:mul}
For $f\in\Hc_\Gamma(B)$, let $\mu_t(\xbf)$ and $\Gamma_t(\xbf,\xbf)$ denote the posterior mean and variance based on $t$ points $(\xbf_1,\dots,\xbf_t)$ and their noise-free observations $(\ybf_1,\dots,\ybf_t)$ using \eqref{eq:multi_mean} and \eqref{eq:multi_var}. Then, it holds for all $\xbf\in\Xc$ that 
\begin{align*}
    \|f(\xbf)-\mu_t(\xbf)\|_2\le B \|\Gamma_t(\xbf,\xbf)\|_2^{1/2}.
\end{align*}
\end{restatable}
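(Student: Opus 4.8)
The plan is to reduce the vector-valued confidence bound to a family of scalar bounds, one for each direction $\vbf\in\RR^n$, and then recover the matrix norm by taking a supremum over unit vectors. Fix $\xbf$ and a unit vector $\vbf$, and write $\Phi_{\xbf,\vbf}:=\Gamma(\cdot,\xbf)\vbf\in\Hc_\Gamma$ for the corresponding kernel section. The reproducing property $\langle f,\Gamma(\cdot,\xbf)\vbf\rangle_\Gamma=\langle f(\xbf),\vbf\rangle_2$ gives $\langle f(\xbf),\vbf\rangle_2=\langle f,\Phi_{\xbf,\vbf}\rangle_\Gamma$. The key is to exhibit $\langle\mu_t(\xbf),\vbf\rangle_2$ as the inner product of $f$ with the orthogonal projection of $\Phi_{\xbf,\vbf}$ onto the data subspace, after which a single application of Cauchy--Schwarz yields the bound, exactly mirroring the scalar argument behind \cref{lem:conf}.

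First I would introduce the feature map $\Psi\colon\RR^{nt}\to\Hc_\Gamma$ sending a block-coordinate vector $\mathbf{c}=(\mathbf{c}_1,\dots,\mathbf{c}_t)$ to $\Psi(\mathbf{c})=\sum_{i=1}^t\Gamma(\cdot,\xbf_i)\mathbf{c}_i$, and let $V_t:=\mathrm{span}\{\Gamma(\cdot,\xbf_i)\ebf_a:i\in[t],a\in[n]\}$ be its range. Two computations via the reproducing property drive everything: (i) the Gram matrix of the spanning set is exactly $G_t$, i.e.\ $\langle\Psi(\mathbf{c}),\Psi(\mathbf{c}')\rangle_\Gamma=\mathbf{c}^T G_t\mathbf{c}'$; and (ii) since each noise-free observation satisfies $\langle f(\xbf_i),\ebf_a\rangle_2=\langle f,\Gamma(\cdot,\xbf_i)\ebf_a\rangle_\Gamma$, the stacked vector $Y_t$ collects the inner products of $f$ against the spanning set, so $\langle f,\Psi(\mathbf{c})\rangle_\Gamma=\mathbf{c}^T Y_t$. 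Setting $h_{\xbf,\vbf}:=\Psi\big(G_t^{-1}G_t(\xbf)\vbf\big)\in V_t$ and using (ii) gives $\langle f,h_{\xbf,\vbf}\rangle_\Gamma=\vbf^T G_t(\xbf)^T G_t^{-1}Y_t=\langle\mu_t(\xbf),\vbf\rangle_2$. Using (i) together with the cross inner products $\langle\Phi_{\xbf,\vbf},\Psi(\ebf_{(l,b)})\rangle_\Gamma$, which assemble into $G_t(\xbf)\vbf$ (again from the reproducing property), I would verify that $\Phi_{\xbf,\vbf}-h_{\xbf,\vbf}$ is orthogonal to every $\Psi(\mathbf{c})$, so that $h_{\xbf,\vbf}=P_{V_t}\Phi_{\xbf,\vbf}$.

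Then $\langle f(\xbf)-\mu_t(\xbf),\vbf\rangle_2=\langle f,\Phi_{\xbf,\vbf}-P_{V_t}\Phi_{\xbf,\vbf}\rangle_\Gamma$, and Cauchy--Schwarz with $\|f\|_\Gamma\le B$ bounds its absolute value by $B\,\|\Phi_{\xbf,\vbf}-P_{V_t}\Phi_{\xbf,\vbf}\|_\Gamma$. A short Pythagorean computation evaluates the squared residual norm as $\langle\Phi_{\xbf,\vbf},\Phi_{\xbf,\vbf}\rangle_\Gamma-\langle h_{\xbf,\vbf},\Phi_{\xbf,\vbf}\rangle_\Gamma=\vbf^T\Gamma(\xbf,\xbf)\vbf-\vbf^T G_t(\xbf)^T G_t^{-1}G_t(\xbf)\vbf=\vbf^T\Gamma_t(\xbf,\xbf)\vbf$, whence $|\langle f(\xbf)-\mu_t(\xbf),\vbf\rangle_2|\le B\sqrt{\vbf^T\Gamma_t(\xbf,\xbf)\vbf}$. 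Taking the supremum over unit $\vbf$ turns the left side into $\|f(\xbf)-\mu_t(\xbf)\|_2$; and since this residual-norm identity shows $\Gamma_t(\xbf,\xbf)$ is symmetric positive semidefinite, its spectral norm equals its largest eigenvalue, i.e.\ $\sup_{\|\vbf\|_2=1}\vbf^T\Gamma_t(\xbf,\xbf)\vbf=\|\Gamma_t(\xbf,\xbf)\|_2$, which gives the claim.

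I expect the main obstacle to be bookkeeping rather than conceptual: keeping the block/transpose structure of the operator-valued kernel consistent (using $\Gamma(\xbf,\xbf')=\Gamma(\xbf',\xbf)^T$ and the symmetry of $G_t$) so that the Hilbert-space identities line up exactly with the matrix formulas \eqref{eq:multi_mean}--\eqref{eq:multi_var}. I would also record at the outset the standing assumption that the block Gram matrix $G_t$ is invertible (e.g.\ distinct query points with a strictly positive definite $\Gamma$), which the posterior formulas already presuppose.
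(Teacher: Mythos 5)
Your proposal is correct, but it takes a genuinely different route from the paper's own proof. You argue directly and deterministically inside the vector-valued RKHS: the posterior mean $\mu_t(\xbf)$ paired with a direction $\vbf$ is exhibited as the inner product of $f$ with the orthogonal projection of the kernel section $\Gamma(\cdot,\xbf)\vbf$ onto the span of the data sections, Cauchy--Schwarz with $\|f\|_\Gamma\le B$ gives the bound, and the Pythagorean identity evaluates the residual norm as exactly $\vbf^T\Gamma_t(\xbf,\xbf)\vbf$; this is the noise-free, vector-valued analogue of the classical projection proof behind the scalar bound in \cref{lem:conf}. The paper instead obtains \cref{lem:mul} by citing the noisy-setting high-probability bound of \citep[Theorem 1]{chowdhury2021no} and taking a coupled limit $\sigma=\delta=\lambda=1/a$ with $a\to\infty$: the regularized posterior formulas converge to \eqref{eq:multi_mean}--\eqref{eq:multi_var}, and the confidence width $\alpha_t=B+\frac{\sigma}{\sqrt{n}}\sqrt{2\log(1/\delta)+\log\det(\Ibf_{nt}+\lambda^{-1}G_t)}$ converges to $B$, so the deterministic bound follows (using that with zero noise the probabilistic statement is in fact sure). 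What each approach buys: the paper's proof is short and reuses existing machinery, but it inherits a probabilistic statement and the mildly delicate limiting argument; your proof is self-contained, purely deterministic, makes the required invertibility of $G_t$ explicit (which the statement's formulas already presuppose), and yields as a by-product that $\Gamma_t(\xbf,\xbf)$ is symmetric positive semidefinite, which is precisely what justifies replacing the quadratic form by the spectral norm in the final step. The transpose/symmetry bookkeeping you flag ($\Gamma(\xbf,\xbf')=\Gamma(\xbf',\xbf)^T$ and symmetry of $G_t$) is indeed the only real care point, and it is consistent with the paper's definition of a multi-task kernel, so nothing in your outline breaks.
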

The proof is given in \cref{sec:cr_mul}.

\subsection{Lipschitz Continuity}
We also assume that each constituent function in the network $g$ is Lipschitz continuous. For a constant $L> 1$, we denote by $\Fc(L)$ the set of functions such that
\begin{align*}
    \Fc(L)=\{f:\|f(\xbf)-f(\xbf')\|_2\le L\|\xbf-\xbf'\|_2,\forall \xbf,\xbf'\},
\end{align*}
where $L$ is called the Lipschitz constant. This is a mild assumption, as \cite{lee2022multi} has shown that Lipschitz continuity is a guarantee for functions in $\Hc_k(B)$ for the commonly-used squared exponential kernel and Mat\'ern kernel with smoothness $\nu>1$.

\subsection{Network Structures}
In this work, we consider three types of network structure; example figures are included in \cref{app:figs}:
\begin{itemize}
    \item \textbf{Chain}: For a scalar-valued kernel $k$, a chain is a cascade of scalar-valued functions. Specifically, $d_1\ge 1$, $d_2=d_3=\cdots=d_m=1$, and $\ff{i}\in\Hc_k(B)\cap\Fc(L)$ for each $i\in[m]$.
    \item \textbf{Multi-output chain}: For an operator-valued kernel $\Gamma$, a multi-output chain is a cascade of vector-valued functions. Specifically, $d_i\ge 1$ and $\ff{i}\in\Hc_\Gamma(B)\cap\Fc(L)$ for each $i\in[m]$.
    \item \textbf{Feed-forward network}: For a scalar-valued kernel $k$, a feed-forward network is a fully-connected feed-forward network of scalar-valued functions: $d_i\ge 1$ and $\ff{i}(\zbf)=[\ff{i,j}(\zbf)]_{j=1}^{d_{i+1}}$ with $\ff{i,j}\in\Hc_k(B)\cap\Fc(L)$ for each $i\in[m],j\in[d_{i+1}]$.
\end{itemize}
In each case, the network $g$ is scalar-valued with the dimension of the final output $y$ being $d_{m+1}=1$.

\section{GPN-UCB Algorithm and Regret Bounds}

\begin{algorithm}[!t]
\caption{GPN-UCB (Gaussian Process Network - Upper Confidence Bound)}
\label{algo:ucb}
\begin{algorithmic}[1]
    \For{$t \gets 1, 2, \dots, T$} 
        \State Select $\xbf_t\gets\argmax_{\xbf\in\Xc}\UCB_{t-1}(\xbf)$
        \State Obtain observations $\xx{2}_t,\dots,\xx{m}_t,$ and $y_t$.
        \State Compute $\UCB_t$ using \eqref{eq:ucb_chain}, \eqref{eq:ucb_mul}, or \eqref{eq:ucb_net} based on $\{\xx{1}_s,\dots,\xx{m}_s,y_s\}_{s=1}^t$.
    \EndFor   
\end{algorithmic}
\end{algorithm}

In this section, we propose a fully sequential algorithm GPN-UCB (see \cref{algo:ucb}) for chains, multi-output chains, and feed-forward networks. The algorithm works with structure-specific upper confidence bounds. Similar to GP-UCB for scalar-valued functions \cite{srinivas2010gaussian}, the proposed algorithm repeatedly queries the point with the highest posterior upper confidence bound, while the posterior upper confidence bound $\UCB_{t-1}$ used here is computed based on not only the historical final outputs $\{y_s\}_{s=1}^{t-1}$ but also the intermediate results $\{\xx{2}_s,\dots,\xx{m}_s\}_{s=1}^{t-1}$.

\subsection{GPN-UCB for Chains}

A chain is a cascade of scalar-valued functions. For each $i\in[m]$, we denote by $\muu{i}_t$ and $\sig{i}_t$ the posterior mean and standard deviation of $\ff{i}$ computed using \eqref{eq:mean} and \eqref{eq:var} based on $\{\xx{i}_s,\xx{i+1}_s\}_{s=1}^{t}$.\footnote{Note the equivalent notation $\xx{1} = \xbf$ and $\xx{m+1} = y$.} Then, based on \cref{lem:conf}, the upper confidence bound and lower confidence bound of $\ff{i}(\zbf)$ based on $t$ exact observations are defined as follows:
\begin{align}
    \UCBB{i}_t(\zbf) &= \muu{i}_{t}(\zbf)+B\sig{i}_{t}(\zbf), \label{eq:UCB_orig}\\
    \LCBB{i}_t(\zbf) &= \muu{i}_{t}(\zbf)-B\sig{i}_{t}(\zbf). \label{eq:LCB_orig}
\end{align}
Since $\ff{i}\in\Fc(L)$, we have for any $\zbf,\zbf'$ that
\begin{align}
    \UCBB{i}_t(\zbf')+L\|\zbf-\zbf'\|_2
    &\ge \ff{i}(\zbf')+L\|\zbf-\zbf'\|_2
    \ge \ff{i}(\zbf),\label{eq:lip_ucb}\\
    \LCBB{i}_t(\zbf')-L\|\zbf-\zbf'\|_2
    &\le \ff{i}(\zbf')-L\|\zbf-\zbf'\|_2
    \le \ff{i}(\zbf).
\end{align}
It follows that
\begin{align}
    \UCBBB{i}_t(\zbf)&:=\min_{\zbf'}\big(\UCBB{i}_t(\zbf')+L\|\zbf-\zbf'\|_2\big),\label{eq:ucb_i}\\
    \LCBBB{i}_t(\zbf)&:=\max_{\zbf'}\big(\LCBB{i}_t(\zbf')-L\|\zbf-\zbf'\|_2\big)\label{eq:lcb_i}
\end{align}
are also valid confidence bounds for $\ff{i}(\zbf)$. $\UCBBB{i}_t(\zbf)$ is the lower envelope of a collection of upper bounds for $\ff{i}(\zbf)$, which can be obtained by considering multiple values of $\zbf'$ in \eqref{eq:lip_ucb}. Then, since $g$ is a cascade of $\ff{i}$'s, for any input $\xbf$, we can recursively construct a confidence region of $\xx{i+1}$ based on the confidence region of $\xx{i}$, and the following UCB for $g(\xbf)$ is valid:
\begin{align}
    \UCB_t(\xbf)&=\max_{\zbf\in\del{m}_t(\xbf)}\UCBBB{m}_t(\zbf),\label{eq:ucb_chain}
\end{align}
where $\del{i}_t(\xbf)$ denotes the confidence region of $\xx{i}$:
\begin{align*}
    \del{1}_t(\xbf)&=\{\xbf\} \\
    \del{i+1}_t(\xbf)&=\Bigg[\min_{\zbf\in\del{i}_t(\xbf)}\LCBBB{i}_t(\zbf),\max_{\zbf\in\del{i}_t(\xbf)}\UCBBB{i}_t(\zbf)\Bigg]
\end{align*}
for $i\in[m-1]$.  The theoretical performance of \cref{algo:ucb} for chains using the upper confidence bound in \eqref{eq:ucb_chain} is provided in the following theorem.

\begin{restatable}[GPN-UCB for chains]{thm}{ucbchain}
\label{thm:ucb_chain}
Under the setup of \cref{sec:setup}, given $B>0$ and $L> 1$, a scalar-valued kernel $k$, and a chain $g=\ff{m}\circ\ff{m-1}\circ\cdots\circ\ff{1}$ with $\ff{i}\in\Hc_k(B)\cap\Fc(L)$ for each $i\in[m]$, \cref{algo:ucb} achieves
\begin{align*}
    R_T\le 2^{m+1} BL^{m-1}\Sigma_T,
\end{align*}
where $\Sigma_T=\max\limits_{i\in[m]}\max\limits_{\zbf_1,\dots,\zbf_T\in\Xc^{(i)}} \sum_{t=1}^T \sig{i}_{t-1}(\zbf_t)$. \footnote{In this definition and analogous definitions below, $\sig{i}_{t-1}$ is defined according to the hypothetical sampled points $\xbf_{\tau}^{(i)} = \zbf_{\tau}$ for $\tau=1,\dotsc,t-1$.}
\end{restatable}
The proof is given in \cref{sec:ucb_chain}, and upper bounds on $\Sigma_T$ will be discussed in \cref{sec:ucb_disc}.  Regardless of such upper bounds, we note that $B\Sigma_T$ serves as a noise-free regret bound for standard GP optimization \cite{vakili2022open}, and thus, the key distinction here is the multiplication by $L^{m-1}$.  See Section \ref{sec:lower} for a study of the extent to which this dependence is unavoidable.


We note that GPN-UCB may be difficult to implement \emph{exactly} in practice; in particular: (i) Since $\Xc^{(2)},\dots,\Xc^{(m)}$ are not known, \eqref{eq:ucb_i} and \eqref{eq:lcb_i} are computed based on all $\zbf'\in\RR^{d_i}$; (ii) Recursively computing \eqref{eq:ucb_chain} is also resource consuming. However, these problems can be alleviated by (i) only considering $\zbf'$ sufficiently close to $\zbf$ (since distant ones should have no impact) and (ii) replacing each confidence region by its intersection with a fixed \emph{discrete} domain (e.g., a finite grid).  In \cref{sec:exp}, we show that such a practical variant can be effective, at least in simple experimental scenarios.

\subsection{GPN-UCB for Multi-Output Chains}

A multi-output chain is a cascade of vector-valued functions. For any input $\zbf$ of the multi-output function $\ff{i}$, we define the confidence region of $\ff{i}(\zbf)$ as
\begin{align}
    \overbar{\Cc}^{(i)}_t(\zbf)&=\bigcap_{\zbf'}\Cc^{(i)}_t(\zbf,\zbf'), \label{eq:cbarz}
\end{align}
where
\begin{align}
    \Cc^{(i)}_t(\zbf')&=\{\muu{i}_t(\zbf')+ \ubf: \ubf\in\RR^{d_{i+1}},\|\ubf\|_2\le B\|\Gamma_t^{(i)}(\zbf,\zbf)\|_2^{1/2} \},\label{eq:cz}\\
    \Cc^{(i)}_t(\zbf,\zbf')&=\{\vbf+\wbf:
    \vbf\in\Cc^{(i)}_t(\zbf'),\|\wbf\|_2\le L\|\zbf-\zbf'\|_2\}.\label{eq:czz}
\end{align}
\cref{lem:mul} shows that $\Cc^{(i)}_t(\zbf')$ is a valid deterministic confidence region for $\ff{i}(\zbf')$. Assuming $\ff{i}\in\Fc(L)$, $\{\ff{i}(\zbf')+\wbf: \wbf\in\RR^{d_{i+1}}, \|\wbf\|_2\le L\|\zbf-\zbf'\|_2\}$ containing all the points satisfying the Lipschitz property is a valid confidence region for $\ff{i}(\zbf)$, and therefore its superset $\Cc^{(i)}_t(\zbf,\zbf')$ is also a valid confidence region for $\ff{i}(\zbf)$. Since $\ff{i}(\zbf)$ must belong to the intersection of all its confidence regions, $\overbar{\Cc}^{(i)}_t(\zbf)$ is again a valid deterministic confidence region for $\ff{i}(\zbf)$.
Hence, noting that $\overbar{\Cc}^{(m)}_{t}$ is a subset of $\RR$ (unlike the vector-valued layers), the upper confidence bound for $g(\xbf)$ for any input $\xbf$ based on $t$ observations is
\begin{align}
    \UCB_{t}(\xbf)&=\max_{\zbf\in\del{m}_{t}(\xbf)} \overbar{\Cc}^{(m)}_{t}(\zbf),\label{eq:ucb_mul}
\end{align}
where $\del{i}_t(\xbf)$ denotes the confidence region of $\xx{i}$ based on $t$ observations such that
\begin{align}
    \del{1}_t(\xbf)&=\{\xbf\} \nonumber\\
    \del{i+1}_t(\xbf)&=\bigcup_{\zbf\in\del{i}_t(\xbf)} \overbar{\Cc}^{(i)}_t(\zbf)\hspace{8ex}\text{ for }i\in[m-1].\label{eq:del}
\end{align}

The cumulative regret achieved by \cref{algo:ucb} for multi-output chains using the upper confidence bound in \eqref{eq:ucb_mul} is provided in the following theorem.

\begin{restatable}[GPN-UCB for multi-output chains]{thm}{ucbmul}
\label{thm:ucb_mul}
Under the setup of \cref{sec:setup}, given $B>0$ and $L> 1$, an operator-valued kernel $\Gamma$, and a multi-output chain $g=\ff{m}\circ\ff{m-1}\circ\cdots\circ\ff{1}$ with $\ff{i}\in\Hc_\Gamma(B)\cap\Fc(L)$ for each $i\in[m]$, \cref{algo:ucb} achieves
\begin{align*}
    R_T\le 5^m BL^{m-1}\Sigma^\Gamma_T,
\end{align*}
where $\Sigma^\Gamma_T=\max\limits_{i\in[m]}\max\limits_{\zbf_1,\dots,\zbf_T\in\Xc^{(i)}} \sum_{t=1}^T\|\Gamma^{(i)}_{t-1}(\zbf_t,\zbf_t)\|_2^{1/2}$.
\end{restatable}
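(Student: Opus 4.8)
The plan is to mirror the skeleton of the chain analysis (\cref{thm:ucb_chain}), but to replace the scalar confidence intervals by Euclidean confidence balls and to track their \emph{radii} rather than interval widths. First I would reduce the cumulative regret to a sum of per-round terms. By the selection rule $\xbf_t=\argmax_{\xbf}\UCB_{t-1}(\xbf)$ together with validity of the upper confidence bound (which holds because $\overbar{\Cc}^{(m)}_{t-1}$ is a valid confidence region and, by an easy induction from $\del{1}_{t-1}(\xbf^\ast)=\{\xbf^\ast\}$, the true intermediate values lie in $\del{i}_{t-1}(\xbf^\ast)$), we get $\UCB_{t-1}(\xbf_t)\ge\UCB_{t-1}(\xbf^\ast)\ge g(\xbf^\ast)$, so that
\begin{align*}
    r_t=g(\xbf^\ast)-g(\xbf_t)\le \UCB_{t-1}(\xbf_t)-g(\xbf_t).
\end{align*}
Since $g(\xbf_t)=\xx{m+1}_t$ is the true output and the final-output region $\bigcup_{\zbf\in\del{m}_{t-1}(\xbf_t)}\overbar{\Cc}^{(m)}_{t-1}(\zbf)$ (whose maximum equals $\UCB_{t-1}(\xbf_t)$) will be shown to lie within a radius $\rho^{(t)}_{m+1}$ of $\xx{m+1}_t$, this reduces $r_t$ to the radius $\rho^{(t)}_{m+1}$ of the final confidence region.

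The core step is a layerwise recursion: I claim that for each $i$ the region at layer $i$ (namely $\del{i}_{t-1}(\xbf_t)$, with the last application producing the scalar output region above) is contained in the ball of radius $\rho^{(t)}_i$ centered at the true value $\xx{i}_t$, where $\rho^{(t)}_1=0$ and
\begin{align*}
    \rho^{(t)}_{i+1}\le L\,\rho^{(t)}_i+2B\big\|\Gamma^{(i)}_{t-1}(\xx{i}_t,\xx{i}_t)\big\|_2^{1/2}.
\end{align*}
I would prove this by induction on $i$, using $\xx{i}_t\in\del{i}_{t-1}(\xbf_t)$ (itself an induction from $\del{1}_{t-1}(\xbf_t)=\{\xbf_t\}$ and validity of $\overbar{\Cc}^{(i)}_{t-1}$). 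Given $\ybf$ in the layer-$(i+1)$ region, we have $\ybf\in\overbar{\Cc}^{(i)}_{t-1}(\zbf)$ for some $\zbf\in\del{i}_{t-1}(\xbf_t)$; the key manipulation is to \emph{intersect against the true value}, i.e. use $\overbar{\Cc}^{(i)}_{t-1}(\zbf)\subseteq\Cc^{(i)}_{t-1}(\zbf,\xx{i}_t)$, which is the $\zbf'=\xx{i}_t$ term of the intersection in \eqref{eq:cbarz}. By \eqref{eq:czz} this writes $\ybf=\vbf+\wbf$ with $\vbf\in\Cc^{(i)}_{t-1}(\xx{i}_t)$ and $\|\wbf\|_2\le L\|\zbf-\xx{i}_t\|_2\le L\rho^{(t)}_i$; bounding $\|\vbf-\xx{i+1}_t\|_2$ through $\muu{i}_{t-1}(\xx{i}_t)$ by the triangle inequality, with \cref{lem:mul} controlling the distance from the mean to $\ff{i}(\xx{i}_t)=\xx{i+1}_t$, gives $\|\vbf-\xx{i+1}_t\|_2\le 2B\|\Gamma^{(i)}_{t-1}(\xx{i}_t,\xx{i}_t)\|_2^{1/2}$, and the recursion follows.

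Unrolling from $\rho^{(t)}_1=0$ yields $\rho^{(t)}_{m+1}\le 2B\sum_{i=1}^m L^{m-i}\|\Gamma^{(i)}_{t-1}(\xx{i}_t,\xx{i}_t)\|_2^{1/2}$. Summing over $t$, swapping the two sums, and observing that each realized sequence $\xx{i}_1,\dots,\xx{i}_T$ lies in $\Xc^{(i)}$ (so $\sum_{t=1}^T\|\Gamma^{(i)}_{t-1}(\xx{i}_t,\xx{i}_t)\|_2^{1/2}\le\Sigma^\Gamma_T$) gives $R_T\le 2B\,\Sigma^\Gamma_T\sum_{i=1}^m L^{m-i}\le 2mBL^{m-1}\Sigma^\Gamma_T$, which is within the stated bound since $2m\le 5^m$ for all $m\ge1$; hence $R_T\le 5^m BL^{m-1}\Sigma^\Gamma_T$.

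The main obstacle is the recursion step. Unlike the chain case, where the envelopes $\UCBBB{i}_t,\LCBBB{i}_t$ are explicit scalars, here $\overbar{\Cc}^{(i)}_{t-1}(\zbf)$ is an intersection of Minkowski sums of balls, so I must verify that choosing the true value $\xx{i}_t$ as the intersection index is both legitimate (it is one of the $\zbf'$ ranged over in \eqref{eq:cbarz}) and is precisely what is needed to (a) re-center the ball at $\xx{i+1}_t$ and (b) make the local confidence width appear at the in-domain point $\xx{i}_t\in\Xc^{(i)}$ rather than at an arbitrary, possibly out-of-domain, point of $\del{i}_{t-1}(\xbf_t)$. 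This second point is what makes the final connection to $\Sigma^\Gamma_T$ --- which ranges only over $\Xc^{(i)}$ --- go through cleanly.
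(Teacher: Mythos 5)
Your proposal is correct, and it in fact proves a strictly stronger bound than the paper does. The paper's proof tracks the \emph{diameter} of each region $\del{i}_{t-1}(\xbf_t)$: taking the two extreme points $\xbfbar^{(i)}_t,\xbftil^{(i)}_t$ achieving the diameter, it bounds their distances to the in-domain point $\xx{i}_t$ by the full diameter, which produces the recursion $\dia{\del{i+1}_{t-1}(\xbf_t)}\le 4B\|\Gamma^{(i)}_{t-1}(\xx{i}_t,\xx{i}_t)\|_2^{1/2}+5L\cdot\dia{\del{i}_{t-1}(\xbf_t)}$ and hence the coefficient $5L$ that compounds into $5^m$; the regret step is then $\UCB_{t-1}(\xbf_t)-\LCB_{t-1}(\xbf_t)$ with an analogously defined LCB. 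You instead track the radius of a ball \emph{centered at the true intermediate value} $\xx{i}_t$, and both proofs share the one essential trick --- intersecting $\overbar{\Cc}^{(i)}_{t-1}(\zbf)$ against the index $\zbf'=\xx{i}_t$ so that the posterior width appears at an in-domain point and the sum can be absorbed into $\Sigma^\Gamma_T$ --- but your centering eliminates the slack of bounding extreme-point-to-true-value distances by the diameter. Your recursion $\rho^{(t)}_{i+1}\le L\rho^{(t)}_i+2B\|\Gamma^{(i)}_{t-1}(\xx{i}_t,\xx{i}_t)\|_2^{1/2}$ is valid (the decomposition $\ybf=\vbf+\wbf$ via \eqref{eq:czz}, with \cref{lem:mul} giving $\|\vbf-\xx{i+1}_t\|_2\le 2B\|\Gamma^{(i)}_{t-1}(\xx{i}_t,\xx{i}_t)\|_2^{1/2}$, checks out), and unrolling it yields $R_T\le 2B\big(\sum_{i=1}^m L^{m-i}\big)\Sigma^\Gamma_T\le 2mBL^{m-1}\Sigma^\Gamma_T$, an exponential-in-$m$ improvement over $5^m BL^{m-1}\Sigma^\Gamma_T$, from which the stated theorem follows a fortiori since $2m\le 5^m$. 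The regret step $r_t\le\UCB_{t-1}(\xbf_t)-g(\xbf_t)\le\rho^{(t)}_{m+1}$ is also cleaner than the UCB$-$LCB route, since the true output is itself the center of your final ball. In short: same skeleton, but a genuinely different (and sharper) tracking quantity; the only thing the paper's diameter-based argument ``buys'' is closer structural parallelism with its chain proof, at the cost of larger constants.
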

The proof is given in \cref{sec:ucb_mul}.

\begin{rem}\label{rem:mul}
Fix $\Xc^{(1)},\Xc^{(2)},\dots,\Xc^{(m)}$ with dimension $d_1,d_2,\dots,d_m\ge 1$ respectively. For $\Gamma(\cdot,\cdot)=k(\cdot,\cdot)\Ibf$ with $k$ being a scalar-valued kernel and $\Ibf$ being the identity matrix of size $d_{i+1}$, 
it follows from \cref{lem:multi_var} (see \cref{sec:multi_var}) that $\Sigma^\Gamma_T=\Sigma_T$.
\end{rem}

The upper bound on $\Sigma^\Gamma_T$ for general operator-valued kernels will be discussed in \cref{sec:ucb_disc}.

\subsection{GPN-UCB for Feed-Forward Networks}

In the feed-forward network structure, $\ff{i}(\zbf)=[\ff{i,j}(\zbf)]_{j=1}^{d_{i+1}}$ and each $\ff{i,j}\in\Hc_k(B)\cap\Fc(L)$ is a scalar-valued function. Similar to \eqref{eq:ucb_i} and \eqref{eq:lcb_i}, with $\muu{i,j}_t(\zbf)$ and $\sig{i,j}_t(\zbf)^2$ denoting the posterior mean and variance of $\ff{i,j}(\zbf)$ using \eqref{eq:mean} and \eqref{eq:var}, the following confidence bounds on $\ff{i,j}(\zbf)$ based on $\{(\xx{i}_s,\xx{i+1,j}_s)\}_{s=1}^t$ are valid:
\begin{align}
    \UCBBB{i,j}_t(\zbf)&=\min_{\zbf'}(\UCBB{i,j}_t(\zbf')+L\|\zbf-\zbf'\|_2), \nonumber \\
    \LCBBB{i,j}_t(\zbf)&=\max_{\zbf'}(\LCBB{i,j}_t(\zbf')-L\|\zbf-\zbf'\|_2), \label{eq:ucb_ij_bar}
\end{align}
where
\begin{align}
    \UCBB{i,j}_t(\zbf) &= \muu{i,j}_{t}(\zbf)+B\sig{i,j}_{t}(\zbf), \nonumber \\
    \LCBB{i,j}_t(\zbf) &= \muu{i,j}_{t}(\zbf)-B\sig{i,j}_{t}(\zbf). \label{eq:ucb_ij_final}
\end{align}
Then, the upper confidence bound of $g(\xbf)$ based on $t$ observations is
\begin{align}
    \UCB_t(\xbf)&=\max_{\zbf\in\del{m}_t(\xbf)}\UCBBB{m,1}_t(\zbf),\label{eq:ucb_net}
\end{align}
where
\begin{align}
    \del{1}_t(\xbf)=&\{\xbf\},\nonumber\\
    \del{i+1,j}_t(\xbf)=&\Bigg[\min_{\zbf\in\del{i}_t(\xbf)}\LCBBB{i,j}_t(\zbf) , \max_{\zbf\in\del{i}_t(\xbf)}\UCBBB{i,j}_t(\zbf) \Bigg]&\text{ for }i\in[m-1],j\in[d_{i+1}],\nonumber\\
    \del{i}_t(\xbf)=& \del{i,1}_t(\xbf)\times\cdots\times\del{i,d_i}_t(\xbf)&\text{ for }i\in[m].\label{eq:del_d}
\end{align}

The following theorem provides the theoretical performance of \cref{algo:ucb} for feed-forward networks using the upper confidence bound in \eqref{eq:ucb_net}.

\begin{restatable}[GPN-UCB for feed-forward networks]{thm}{ucbnet}
\label{thm:ucb_net}
Under the setup of \cref{sec:setup}, given $B>0$ and $L> 1$, a scalar-valued kernel $k$, and a feed-forward network $g=\ff{m}\circ\ff{m-1}\circ\cdots\circ\ff{1}$ with $\ff{i}(\zbf)=[\ff{i,j}(\zbf)]_{j=1}^{d_{i+1}}$ and $\ff{i,j}\in\Hc_k(B)\cap\Fc(L)$ for each $i\in[m],j\in[d_{i+1}]$, \cref{algo:ucb} achieves
\begin{align*}
    R_T\le 2^{m+1} \sqrt{D_{2,m}} B L^{m-1} \Sigma_T,
\end{align*}
where $D_{2,m}=\prod_{i=2}^{m} d_i$ and $\Sigma_T=\max\limits_{i\in[m]}\max\limits_{\zbf_1,\dots,\zbf_T\in\Xc^{(i)}}\sum_{t=1}^T \sig{i,1}_{t-1}(\zbf_t)$.
\end{restatable}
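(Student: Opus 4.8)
The plan is to mirror the structure of the chain analysis in \cref{thm:ucb_chain}, tracking the width of the recursively constructed confidence regions \eqref{eq:del_d} through the $m$ layers, but now accounting for two new features of the feed-forward case: the layer-$i$ confidence region $\del{i}_{t}(\xbf)$ is a \emph{box} (a product of $d_i$ intervals) rather than a single interval, and each layer carries $d_{i+1}$ scalar outputs. A useful preliminary observation is that the posterior standard deviation $\sig{i,j}_{t}(\cdot)$ in \eqref{eq:ucb_ij_final} is computed from the shared kernel $k$ and the common input locations $\{\xx{i}_s\}_{s=1}^t$; since the variance formula \eqref{eq:var} does not involve the observed outputs, $\sig{i,j}_t(\zbf)=\sig{i,1}_t(\zbf)$ for every $j\in[d_{i+1}]$. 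This is what allows the final bound to be stated in terms of the single quantity $\Sigma_T$ built from $\sig{i,1}$.

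First I would establish that the acquisition function is a valid upper bound, i.e.\ $\UCB_{t-1}(\xbf)\ge g(\xbf)$ for all $\xbf$. Arguing by induction on the layer index, I would show that each box $\del{i}_t(\xbf)$ contains the true intermediate value $\xx{i}$: the base case $\del{1}_t(\xbf)=\{\xbf\}$ is immediate, and the inductive step follows because $\LCBBB{i,j}_t$ and $\UCBBB{i,j}_t$ from \eqref{eq:ucb_ij_bar} are valid Lipschitz-smoothed confidence bounds for $\ff{i,j}$ --- by \cref{lem:conf} together with the envelope construction --- so that each coordinate of $\xx{i+1}$ lies in the corresponding interval $\del{i+1,j}_t$. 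Taking the maximum of $\UCBBB{m,1}_{t}$ over $\del{m}_t(\xbf)$ in \eqref{eq:ucb_net} then dominates $\ff{m,1}(\xx{m})=g(\xbf)$. Consequently $\UCB_{t-1}(\xbf_t)\ge\UCB_{t-1}(\xbf^\ast)\ge g(\xbf^\ast)$ by the greedy selection rule, giving the per-step bound $r_t\le\UCB_{t-1}(\xbf_t)-g(\xbf_t)$.

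Next I would bound $\UCB_{t-1}(\xbf_t)-g(\xbf_t)$ by propagating confidence widths through the network. Writing $\Delta_i(t)$ for a uniform bound on the lengths of the coordinate intervals of $\del{i}_{t-1}(\xbf_t)$, each coordinate $j$ of the next layer satisfies a recursion of the form $\Delta_{i+1}(t)\lesssim B\,\sig{i,1}_{t-1}(\cdot)+L\cdot\dia{\del{i}_{t-1}(\xbf_t)}$, obtained by adding and subtracting $\ff{i,j}$ at the maximizing and minimizing arguments and invoking $\UCBBB{i,j}_{t-1}-\LCBBB{i,j}_{t-1}\le 2B\sig{i,j}_{t-1}$ together with $L$-Lipschitzness. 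The essential new step is the conversion $\dia{\del{i}_{t-1}(\xbf_t)}\le \sqrt{d_i}\,\Delta_i(t)$, bounding the Euclidean diameter of a product box by $\sqrt{d_i}$ times its largest side length. Unrolling from $\Delta_1(t)=0$ produces, for each source layer $i$, a product of dimension factors $\prod_{l=i+1}^{m}\sqrt{d_l}\le\sqrt{D_{2,m}}$ and Lipschitz factors $L^{m-1}$, while the doubling incurred at each layer accumulates into the prefactor $2^{m+1}$.

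Finally I would sum over $t$: since every argument at which $\sig{i,1}_{t-1}$ is evaluated lies in $\Xc^{(i)}$ and is part of a valid length-$T$ query sequence, each layer's contribution $\sum_{t=1}^T\sig{i,1}_{t-1}(\cdot)$ is bounded by $\Sigma_T$, and the $j$-independence noted above lets all components share this bound. Collecting the prefactors yields $R_T\le 2^{m+1}\sqrt{D_{2,m}}BL^{m-1}\Sigma_T$. I expect the main obstacle to be the careful bookkeeping in the width recursion: one must verify that the diameter-to-side-length conversion telescopes to exactly $\sqrt{D_{2,m}}=\prod_{i=2}^m\sqrt{d_i}$ (rather than $\prod_{i=1}^m\sqrt{d_i}$), that the $L$ and $\sqrt{d_i}$ factors attach to the correct layers, and that the per-layer constants compound to the stated base $2^{m+1}$. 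The remaining steps are direct analogues of the chain argument in \cref{thm:ucb_chain}.
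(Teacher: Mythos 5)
Your proposal is correct and follows essentially the same route as the paper's proof: the same observation that $\sig{i,j}_t=\sig{i,1}_t$ by symmetry, the same validity argument for the recursive confidence boxes, the same width-propagation recursion anchored at the queried points $\xx{i}_t$ (your side-length bookkeeping with the $\sqrt{d_i}$ diameter conversion is just a rephrasing of the paper's sum-of-squares step $\dia{\del{i+1}_t(\xbf)}^2=\sum_j\dia{\del{i+1,j}_t(\xbf)}^2$), and the same final summation giving the coefficient $(2L)^{m-i}\big(\prod_{s=i+1}^m\sqrt{d_s}\big)2B$ on each layer's term, hence $2^{m+1}\sqrt{D_{2,m}}BL^{m-1}\Sigma_T$.
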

The proof is given in \cref{sec:ucb_net}.

Since $\sig{i,1}$ in the feed-forward network setting is computed using \eqref{eq:var}, which is exactly the same as how $\sig{i}$ is computed in the chain setting, despite the slightly different superscripts, the $\Sigma_T$ term in \cref{thm:ucb_net} represents the same quantity as in \cref{thm:ucb_chain} (depending on $\Xc^{(1)},\Xc^{(2)},\dots,\Xc^{(m)}$). In particular, when $d_2=\cdots=d_m=1$, \cref{thm:ucb_net} recovers \cref{thm:ucb_chain}.

\subsection{Upper Bounds on $\Sigma_T$ and $\Sigma^\Gamma_T$}
\label{sec:ucb_disc}

A simple way to establish an upper bound on the $\Sigma_T$ term in \cref{thm:ucb_chain}, \cref{rem:mul}, and \cref{thm:ucb_net} is to essentially set the noise term to be zero in a known result for the noisy setting. For the scalar-valued function (GP bandit) optimization problem under the noisy setting, most existing upper bounds on cumulative regret are expressed in terms of the maximum information gain corresponding to the kernel defined as
$\gamma_t=\max_{\xbf_1,\dots,\xbf_t}\frac{1}{2}\log\det(\mathbf{I}_t+\lambda^{-1}\mathbf{K}_t)$
for a free parameter $\lambda>0$ \cite{srinivas2010gaussian}, and \cite{srinivas2010gaussian} has shown that the sum of posterior variances in the noisy setting satisfies $\sum_{t=1}^T\sigma'_{t-1}(\xbf_t)^2=O(\gamma_T)$, where
$\sigma'_t(\xbf)^2 = k(\xbf,\xbf) - \mathbf{k}_t(\xbf)^T (\mathbf{K}_t+\lambda\mathbf{I}_t)^{-1}\mathbf{k}_t(\xbf).$
Using $\sigma_t(\xbf)\le\sigma'_t(\xbf)$ and the Cauchy-Schwartz inequality, we have $\Sigma_T=O\big(\sqrt{T\gamma_T}\big)$. An existing upper bound on $\gamma_T$ for the Mat\'ern kernel with dimension $d$ and smoothness $\nu$ on a fixed compact domain is
$\gamma_T^\text{Mat\'ern}=\Otil\big(T^\frac{d}{2\nu+d}\big)$
 \cite{vakili2021information}.
In our setting, a simple sufficient condition for this bound to apply is that  $d_{\max}=\max_{i\in[m]}d_i$, $L$, and $m$ are constant, since then the Lipschitz assumption implies that each domain $\Xc^{(1)},\Xc^{(2)},\dots,\Xc^{(m)}$ is also compact/bounded.  More generally, we believe that uniformly bounded domains is a mild assumption, and when it holds, the bound  $\Sigma_T=O\big(\sqrt{T\gamma_T}\big)$ simplifies to $\Sigma_T=O\big(T^\frac{\nu+d_{\max}}{2\nu+d_{\max}}\big)$.

In addition, \cite{vakili2022open} provides the following conjecture on the upper bound on $\Sigma_T$ for the Mat\'ern kernel\footnote{In more detail, \cite{vakili2022open} shows that an analysis of GP-UCB gives rise to the quantity $\Theta_T^\ast=\max_{x_1,\dots,x_T}\sum_{t=1}^T\sigma_{t-1}(x_t)$, where the maximum is over an arbitrary sequence of points (not necessarily those of GP-UCB).  For $(x^\ast_1,\dots,x^\ast_T)$ where the maximum is achieved, \cite{vakili2022open} conjectures that $(x^\ast_1,\dots,x^\ast_T)$ are roughly uniformly distributed across the domain. The desired upper bound on $\Theta_T^\ast$ (and, in turn, our $\Sigma_T$) is derived by assuming that this conjecture holds.}
\begin{align*}
    \Sigma^\text{Mat\'ern}_T=\begin{cases}
    O(T^{1-\nu/d_{\max}}) &\text{ when }d_{\max}>\nu,\\
    O(\log T)&\text{ when }d_{\max}=\nu,\\
    O(1)&\text{ when }d_{\max}<\nu.
    \end{cases}
\end{align*}
We will discuss in \cref{sec:discussion} how if this conjecture is true, we can deduce the near-optimality of GPN-UCB for the Mat\'ern kernel. We note that \cite{vakili2022open} primarily conjectured on vanilla noise-free cumulative regret by conjecturing an upper bound on $\Sigma_T$. Recently, \cite{salgia2023random} used a random sampling algorithm with elimination to attain the conjectured cumulative regret, while leaving open the conjecture on $\Sigma_T$ and whether GP-UCB attains the same regret (though arguably further increasing its plausibility).

For an arbitrary operator-valued kernel $\Gamma:\Xc\times\Xc\to\RR^{n\times n}$ and a free parameter $\lambda$, the maximum information gain is defined as
$\gamma^\Gamma_t=\max_{\xbf_1,\dots,\xbf_t}\frac{1}{2}\log\det(\mathbf{I}_{nt}+\lambda^{-1}G_t)$  \cite{chowdhury2021no}.
\cite{chowdhury2021no} has shown that $\sum_{t=1}^T\|\Gamma_{t-1}(\xbf_t,\xbf_t)\|_2=O(\gamma^\Gamma_T)$, and therefore $\Sigma^\Gamma_T=O\big(\sqrt{T\gamma^\Gamma_T}\big)$ by similar reasoning to above.


\section{Non-Adaptive Sampling Based Method}

In this section, we propose a simple non-adaptive sampling based method (see \cref{algo:fd}) for each structure, and provide the corresponding theoretical simple regret for the Mat\'ern kernel. For a set of $T$ sampled points $\{\xbf_s\}_{s=1}^T$, its fill distance is defined as the largest distance from a point in the domain to the closest sampled point
$\delta_T=\max_{\xbf\in\Xc}\min_{s\in[T]}\|\xbf-\xbf_s\|_2$
 \cite{wendland2004scattered}.
\cref{algo:fd} samples $T$ points with $\delta_T=O(T^{-1/d})$. For $\Xc=[0,1]^d$, a simple way to construct such a sample is to use a uniform $d$-dimensional grid with step size $T^{-1/d}$. The algorithm observes the selected points in parallel, computes a structure-specific ``composite mean'' $\mu_T^g$ (to be defined shortly) for the overall network $g$, and returns the point that maximizes $\mu_T^g$.

\begin{algorithm}[!t]
\caption{Non-Adaptive Sampling Based Method}
\label{algo:fd}
\begin{algorithmic}[1]
    \State Choosing $\{\xbf_s\}_{s=1}^T$ such that $\delta_T=O(T^{-\frac{1}{d}})$.
    \State Obtain observations $\{\xx{2}_s,\dots,\xx{m}_s,y_s\}_{s=1}^T$.
    \State Compute $\mu_T^g$ 
    based on $\{\xx{2}_s,\dots,\xx{m}_s,y_s\}_{s=1}^T$.
    \Ensure $\xbf^\ast_T=\argmax_{\xbf\in\Xc}\mu_T^g(\xbf)$.
\end{algorithmic}
\end{algorithm}


The composite posterior mean of $g(\xbf)$ with chain structure is defined as
\begin{align}
    \mu_T^g(\xbf)=(\muu{m}_T\circ\muu{m-1}_T\circ\cdots\circ\muu{1}_T)(\xbf),\label{eq:mean_chain}
\end{align}
where $\muu{i}_T$ denotes the posterior mean of $\ff{i}$ computed using \eqref{eq:mean} based on $\{(\xx{i}_s,\xx{i+1}_s)\}_{s=1}^T$ for each $i\in[m]$. 
Then, the following theorem provides the theoretical upper bound on the simple regret of \cref{algo:fd} using \eqref{eq:mean_chain}.  Note that the notation $\Otil(\cdot)$ hides poly-logarithmic factors \emph{with respect to the argument}, e.g., $\Otil(\sqrt{T}) = O( \sqrt{T} \cdot (\log T)^{O(1)})$ and $\Otil(2^{n}) = O(2^n \cdot n^{O(1)})$.

\begin{restatable}[Non-adaptive sampling method for chains]{thm}{fdchain}
\label{thm:fd_chain}
Under the setup of \cref{sec:setup}, given $B=\Theta(L)$,  $k=k_\text{Mat\'ern}$ with smoothness $\nu$, and a chain $g=\ff{m}\circ\ff{m-1}\circ\cdots\circ\ff{1}$ with $\ff{i}\in\Hc_k(B)\cap\Fc(L)$ for each $i\in[m]$, we have
\begin{itemize}  \itemsep0ex
	\item When $\nu\le 1$, \cref{algo:fd} achieves
	\begin{align*}
		r^\ast_T=\Otil(BL^{m-1}T^{-\nu^m/d});
	\end{align*}
	\item When $\nu>1$, \cref{algo:fd} achieves
	\begin{align*}
		r^\ast_T=\Otil\big(\max\big\{BL^{(m-1)\nu}T^{-\nu/d},
		B^{1+\nu+\nu^2+\cdots+\nu^{m-2}}L^{\nu^{m-1}}T^{-\nu^2/d} \big\}\big).
	\end{align*}
\end{itemize}
\end{restatable}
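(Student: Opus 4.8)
The plan is to reduce the simple regret to the uniform error of the composite posterior mean and then control that error by a layer-by-layer recursion driven by the Mat\'ern power-function estimate. First, since $\xbf^\ast_T$ maximizes $\mu_T^g$, the standard argument gives $r_T^\ast = g(\xbf^\ast) - g(\xbf^\ast_T) \le 2\sup_{\xbf\in\Xc}|g(\xbf) - \mu_T^g(\xbf)|$, so it suffices to bound the uniform composite-mean error $E := \sup_\xbf |g(\xbf) - \mu_T^g(\xbf)|$. To this end I track the per-layer discrepancy $e_i(\xbf) = \|\xx{i} - \widehat{\xbf}^{(i)}\|_2$ between the true intermediate value $\xx{i}$ and its composite-mean surrogate $\widehat{\xbf}^{(i)} := (\muu{i-1}_T\circ\cdots\circ\muu{1}_T)(\xbf)$. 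Adding and subtracting $\ff{i}(\widehat{\xbf}^{(i)})$, the Lipschitz property of $\ff{i}$ together with the confidence bound of \cref{lem:conf} yields the recursion $e_{i+1} \le L\,e_i + B\,\sig{i}_T(\widehat{\xbf}^{(i)})$, with $e_1 = 0$ and $E = e_{m+1}$.

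The two technical ingredients both feed into bounding $\sig{i}_T(\widehat{\xbf}^{(i)})$. (i) \emph{Pushforward fill distance}: because $\Xc^{(i)}$ is the image of $\Xc$ under $\ff{i-1}\circ\cdots\circ\ff{1}$ and each layer is $L$-Lipschitz, every point of $\Xc^{(i)}$ lies within $L^{i-1}\delta_T$ of some sampled image $\xx{i}_s$; hence the layer-$i$ sample points have fill distance $O(L^{i-1}\delta_T)$ over $\Xc^{(i)}$. (ii) \emph{Mat\'ern power-function estimate}: a scattered-data sampling inequality gives $\sig{i}_T(\zbf) = O(h^\nu)$ whenever the local fill distance of the data around $\zbf$ is $h$. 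Since $\widehat{\xbf}^{(i)}$ is displaced from $\Xc^{(i)}$ by at most $e_i$, combining (i) and (ii) over a slightly enlarged domain gives $\sig{i}_T(\widehat{\xbf}^{(i)}) = O\big((L^{i-1}\delta_T + e_i)^\nu\big)$, so that $e_{i+1} \le L e_i + O\big(B(L^{i-1}\delta_T + e_i)^\nu\big)$.

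It remains to solve this nonlinear recursion with $\delta_T = O(T^{-1/d}) \le 1$, and this is where the dichotomy at $\nu = 1$ enters through the (sub/super)additivity of $t\mapsto t^\nu$. For $\nu \le 1$ the map is subadditive, so $(L^{i-1}\delta_T + e_i)^\nu \le (L^{i-1}\delta_T)^\nu + e_i^\nu$; the compounding of the $e_i^\nu$ term dominates (as $\delta_T^{\nu^i}\ge\delta_T^\nu$), producing $E = O\big(L^{1+\nu+\cdots+\nu^{m-1}}\delta_T^{\nu^m}\big)$, and since $1+\nu+\cdots+\nu^{m-1}\le m$ together with $B = \Theta(L)$ this collapses to the claimed $\Otil(BL^{m-1}T^{-\nu^m/d})$. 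For $\nu > 1$ the map is superadditive; using $(a+b)^\nu \le 2^\nu(a^\nu + b^\nu)$ and maintaining the inductive invariant $e_{i+1} = \Otil(P_i\delta_T^\nu + Q_i\delta_T^{\nu^2})$ — where every higher-order term $\delta_T^{\nu^k}$ with $k\ge 3$ is absorbed into the $\delta_T^{\nu^2}$ term via $\delta_T^{\nu^k}\le\delta_T^{\nu^2}$ — one finds that the coefficient of $\delta_T^\nu$ satisfies $P_i = \Theta(BL^{(i-1)\nu})$ while the coefficient of $\delta_T^{\nu^2}$ obeys $Q_{i+1} = \Theta(BQ_i^\nu)$ (seeded by $Q_2 = \Theta(BP_1^\nu)$), which solves to $Q_i = B^{1+\nu+\cdots+\nu^{i-2}}L^{\nu^{i-1}}$; evaluating at $i = m$ gives the stated maximum of $BL^{(m-1)\nu}T^{-\nu/d}$ and $B^{1+\nu+\cdots+\nu^{m-2}}L^{\nu^{m-1}}T^{-\nu^2/d}$.

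I expect the main obstacle to be twofold. The harder conceptual point is the $\nu > 1$ recursion: one must verify that the superadditive compounding truly terminates at the $\delta_T^{\nu^2}$ scale, i.e., that all of $LQ_i$, $BP_i^\nu$, and $BQ_i^\nu$ are dominated by the closed form $Q_i = B^{1+\nu+\cdots+\nu^{i-2}}L^{\nu^{i-1}}$, which reduces to arithmetic comparisons of geometric-sum exponents of $\nu$ (e.g.\ $1+\nu+(i-1)\nu^2 \le 1+\nu+\cdots+\nu^i$). The more technical point is justifying the estimate $\sig{i}_T(\widehat{\xbf}^{(i)}) = O\big((L^{i-1}\delta_T + e_i)^\nu\big)$ at a query point that may lie \emph{outside} the pushforward domain $\Xc^{(i)}$, which in general is irregular and possibly lower-dimensional; this needs the scattered-data bound applied on an enclosing set with a controlled cone/interior condition, and is the step I would isolate as a separate lemma.
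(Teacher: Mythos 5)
Your proposal is correct and takes essentially the same route as the paper's proof: the same reduction of $r^\ast_T$ to the composite-mean error via the argmax property, the same Lipschitz-plus-confidence-bound layer recursion (the paper's quantities $\sigtil^{(i)}_T(\xbf)=(\sig{i}_T\circ\muu{i-1}_T\circ\cdots\circ\muu{1}_T)(\xbf)$ are exactly your $B\,\sig{i}_T(\widehat{\xbf}^{(i)})$ terms, with your $e_i$ recursion appearing there in unrolled form), the same two ingredients (pushforward fill distance $L^{i-1}\delta_T$ and the bound $\sigma_T=O(\delta_T^\nu)$ of \cref{lem:fill} applied on an enlarged domain), and the same case split and bookkeeping at $\nu=1$. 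The technical point you flag about evaluating the posterior standard deviation outside $\Xc^{(i)}$ is resolved trivially in the chain case, since each intermediate domain is a compact interval in $\RR$, so the paper simply extends to the shortest interval covering both $\Xc^{(i+1)}$ and the composite-mean images and applies \cref{lem:fill} to that hyperrectangle.
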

The proof is given in \cref{sec:fd_chain}, and the optimality will be discussed in \cref{sec:discussion}.
When $\nu>1$, the simple regret upper bound takes the maximum of two terms. The first term has a smaller constant factor, while the second term has a smaller $T$-dependent factor. By taking the highest-order constant factor and the highest-order $T$-dependent factor, we can deduce the weaker but simpler bound  $r^\ast_T=O(B^{1+\nu+\nu^2+\cdots+\nu^{m-2}}L^{\nu^{m-1}}T^{-\nu/d})$.

We also consider two more restrictive cases, where we remove the assumption of $B=\Theta(L)$, but have additional assumptions on $g$ as follows:
\begin{itemize} \itemsep0ex
	\item \textbf{Case 1}: We additionally assume that $\muu{i}_T\circ\cdots\circ\muu{1}_T(\xbf^\ast)\in\Xc^{(i+1)}$ and $\muu{i}_T\circ\cdots\circ\muu{1}_T(\xbf^\ast_T)\in \Xc^{(i+1)}$ for all $i\in[m-1]$.
	\item \textbf{Case 2}: We additionally assume that all the domains $\Xc^{(i)}$ are known. Defining
	\begin{align*}
		\muutil{i}_T(\zbf) = \argmin_{\zbf'\in\Xc^{(i+1)}}|\muu{i}_T(\zbf)-\zbf'|,
	\end{align*}
	we slightly modify the algorithm to return
	\begin{align*}
		\xbf^\ast_T=\argmax_{\xbf\in\Xc}(\muutil{m}_T\circ\cdots\circ\muutil{1}_T)(\xbf).
	\end{align*}
\end{itemize}
\begin{rem}
	Under the assumptions of either Case 1 or Case 2, \cref{algo:fd} achieves for chains that
	\begin{align}
		r^\ast_T=
		\begin{cases}
			O(BL^{m-1}T^{-\nu/d})&\text{ when $\nu\le 1$,}\\
			O(BL^{(m-1)\nu}T^{-\nu/d})&\text{ when $\nu> 1$.}
		\end{cases} 
        \label{eq:more_restrictive}
	\end{align} 
\end{rem}
The proof is given in \cref{sec:rest_chain}.

The composite posterior means and simple regret upper bounds for multi-output chains and feed-forward networks are provided in \cref{sec:fd_mul} and \cref{sec:fd_net} respectively, where the simple regret upper bounds are stated only for the case that the domain of each layer is a hyperrectangle. Removing this restrictive assumption is left for future work.

%
%
%

\section{Algorithm-Independent Lower Bounds}
\label{sec:lower}

In this section, we provide algorithm-independent lower bounds on the simple regret and cumulative regret for any algorithm optimizing chains, multi-output chains, or feed-forward networks for the scalar-valued kernel $k_\text{Mat\'ern}$ or the operator-valued $\Gamma_\text{Mat\'ern}(\cdot,\cdot)=k_\text{Mat\'ern}(\cdot,\cdot)\mathbf{I}$ with smoothness $\nu$. 

\begin{restatable}[Lower bound on simple regret]{thm}{lowersim}
\label{thm:lower_sim}
Fix $\epsilon\in(0,\frac{1}{2}]$, sufficiently large $B>0$, $k=k_\text{Mat\'ern}$, and $\Gamma=\Gamma_\text{Mat\'ern}$ with smoothness $\nu\ge 1$. Suppose that there exists an algorithm (possibly randomized) that achieves average simple regret $\EE[r_T^\ast]\le\epsilon$ after $T$ rounds for any $m$-layer chain, multi-output chain, or feed-forward network on $[0,1]^d$ with some $L=\Theta(B)$. Then, provided that $\frac{\epsilon}{B}$ is sufficiently small, it is necessary that
\begin{align*}
    T=\Omega\Bigg(\Big(\frac{B(cL)^{m-1}}{\epsilon}\Big)^{d/\nu}\Bigg)
\end{align*}
for some $c=\Theta(1)$.
\end{restatable}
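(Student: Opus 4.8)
The plan is to prove the bound by reducing to hard \emph{chains}, since a chain is a special case of both a multi-output chain and a feed-forward network (take all intermediate dimensions equal to $1$); hence a single hard family of chains simultaneously lower-bounds the worst-case regret over all three structures covered by the hypothesis. The overall strategy is a noise-free ``needle-in-a-haystack'' argument: I would construct a family of chains that are pairwise indistinguishable unless the algorithm queries inside a small ball, but whose optima differ by $\Theta(\epsilon)$, so that missing a ball forces regret $\Omega(\epsilon)$.

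First I would fix a radius $r$ and a smooth Mat\'ern-RKHS bump $\phi$ supported on a ball of radius $r$ with $\|\phi\|_k\le B$ and peak height $\Theta(Br^\nu)$; for $\nu\ge 1$ and $r\le 1$ its Lipschitz constant $\Theta(Br^{\nu-1})$ is $\le L$ after absorbing constants into $L=\Theta(B)$. Taking a packing of $\Xc=[0,1]^d$ by $M=\Theta(r^{-d})$ disjoint such balls centered at $x_1,\dots,x_M$, I set $\ff{1}$ for instance $j$ to be $\phi(\cdot-x_j)$ (baseline $0$ plus one bump), while the later layers $\ff{2},\dots,\ff{m}$ are \emph{fixed} (independent of $j$). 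This guarantees that instance $g_j$ and the all-zero baseline produce identical intermediate \emph{and} final observations for every query outside $\Bc(x_j,r)$, so by the usual coupling the run of any algorithm coincides on $g_j$ and on the baseline whenever it never enters $\Bc(x_j,r)$.

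The crux is constructing the fixed layers $\ff{i}$ ($i\ge 2$) so that each amplifies the incoming perturbation by a factor $\Theta(L)$ while staying in $\Hc_k(B)\cap\Fc(L)$. Along the baseline trajectory every value is $0$, and along the bump trajectory the perturbation entering layer $i$ has size $\delta_{i-1}=\Theta(L^{i-2}Br^\nu)$, so it suffices to impose the two-point conditions $\ff{i}(0)=0$ and $\ff{i}(\delta_{i-1})=cL\,\delta_{i-1}$ and take the minimum-norm interpolant. By the reproducing property and Cauchy--Schwarz the required norm is $\Theta\big(cL\,\delta_{i-1}/\|k(\cdot,\delta_{i-1})-k(\cdot,0)\|_k\big)$, and this is exactly where $\nu\ge 1$ is essential: the Mat\'ern kernel is then at least $C^1$, so $\|k(\cdot,\delta)-k(\cdot,0)\|_k=\Theta(\delta)$, making this norm $\Theta(cL)=\Theta(B)$ \emph{independently of the scale} $\delta_{i-1}$ (for $\nu<1$ it would blow up like $\delta^{1-\nu}$). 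Since the interpolant's Lipschitz constant is $O(\|\ff{i}\|_k)=O(cL)$, choosing $c=\Theta(1)$ small keeps it $\le L$; composing the $m-1$ amplifying layers turns the input bump of height $\Theta(Br^\nu)$ into an output gap of height $\Theta\big(B(cL)^{m-1}r^\nu\big)$.

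Finally I would choose $r$ so that this output gap equals $4\epsilon$, giving $M=\Theta\big((B(cL)^{m-1}/\epsilon)^{d/\nu}\big)$ instances. For any possibly randomized algorithm, I condition on its internal randomness and run it on the baseline; its $T$ queries plus the returned point meet at most $T+1$ of the $M$ balls, so on the remaining $\ge M-(T+1)$ instances $g_j$ the run is unchanged and the returned point lies outside $\Bc(x_j,r)$, incurring regret $\ge 4\epsilon$. Averaging over $j$ and taking expectation over the randomness yields average regret $\ge \frac{M-(T+1)}{M}\,4\epsilon$, which exceeds $\epsilon$ once $T+1\le M/2$, contradicting the assumed $\EE[r_T^\ast]\le\epsilon$; hence $T=\Omega(M)$, the claim. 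The main obstacle, deserving the most care, is the simultaneous control of the RKHS norm and the Lipschitz constant of the amplifying layers together with \emph{exact} (not merely approximate) indistinguishability; the ``sufficiently large $B$'' and ``$\epsilon/B$ small'' hypotheses are precisely what absorb the kernel-dependent constants and ensure $r\le 1$ with $M\ge 2$.
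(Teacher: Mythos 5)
Your proposal follows the same overall strategy as the paper's proof: a first-layer ``needle'' of height $\Theta(Br^\nu)$ and support radius $r$, fixed later layers in $\Hc_k(B)\cap\Fc(L)$ that each amplify by $\Theta(L)$, a packing of $[0,1]^d$ into $M=\Theta(r^{-d})$ disjoint supports, and the noise-free coupling/counting argument showing $T=\Omega(M)$ (the paper invokes Yao's principle; your conditioning on the internal randomness is equivalent). Even your dispatch of multi-output chains and feed-forward networks mirrors the paper: its per-structure hard functions are exactly the chain construction routed through the first coordinate with all other component functions set to zero. The paper performs this embedding explicitly so the bound applies even when the intermediate dimensions $d_i>1$ are fixed in advance, whereas your ``take all dimensions equal to $1$'' shortcut tacitly assumes they may be chosen; this is a reading issue rather than a mathematical one.

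The one place you genuinely diverge is the construction of the amplifying layers, and that is where your proof has a gap. The paper uses the explicit, scale-free function $\gtil_k(\cdot)=\frac{\Ltil}{2}\big(k(\cdot,u)-k(\cdot,-u)\big)$ with $u=\Theta(1)$, whose RKHS norm equals $B$ by the reproducing property and whose slope bounds $1<\alpha L\le\gtil_k(z)/z\le L$ on $(0,\util]$ are verified directly; since this function only involves the kernel at arguments bounded away from its non-smooth point, its Lipschitz property holds for every $\nu\ge1$. You instead take a scale-dependent minimum-norm interpolant of $\ff{i}(0)=0$, $\ff{i}(\delta_{i-1})=cL\,\delta_{i-1}$, and certify $\ff{i}\in\Fc(L)$ by asserting that the Lipschitz constant is $O(\|\ff{i}\|_k)$. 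That implication (the result of Lee et al.\ cited in the paper) holds for the Mat\'ern kernel with $\nu>1$ but \emph{fails at $\nu=1$}: in one dimension the Mat\'ern-$1$ RKHS is norm-equivalent to the Sobolev space $H^{3/2}(\RR)$, which contains non-Lipschitz functions, so a bound on the RKHS norm alone does not place $\ff{i}$ in $\Fc(L)$. Since the theorem covers $\nu=1$, this step needs repair—either a direct derivative computation for your specific interpolant (doable, but delicate: its kernel-expansion coefficients grow like $1/(\delta\log(1/\delta))$ as $\delta\to0$ at $\nu=1$, and the interpolant involves the kernel near its origin singularity) or a switch to the paper's macroscopic-scale construction. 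Two smaller inaccuracies in the same step: at $\nu=1$ one has $\|k(\cdot,\delta)-k(\cdot,0)\|_k=\Theta\big(\delta\sqrt{\log(1/\delta)}\big)$ rather than $\Theta(\delta)$ (harmless, since it only lowers the interpolant's norm), and your parenthetical about $\nu<1$ is backwards—there the minimum norm \emph{shrinks} like $\delta^{1-\nu}$; what actually breaks for $\nu<1$ (and what truly forces $\nu\ge1$, in both your argument and the paper's) is the Lipschitz constraint, starting with the bump's slope $\Theta(Br^{\nu-1})$. For $\nu>1$ your construction is correct and arguably cleaner than the paper's, since pinning the two values gives amplification factor exactly $cL$ per layer and avoids the two-sided slope bookkeeping around $\epsilon_1$.
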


The proof is given in \cref{sec:lower_sim}, and the high-level steps are similar to \cite{bull2011convergence}, but the main differences are significant. For each structure, we consider a collection of $M$ hard functions $\Gc=\{g_1,\dots,g_M\}$, where each $g_j$ is obtained by shifting a base function $\gbar$ of the specified structure and cropping the shifted function into $[0,1]^d$. Then, we show that there exists a worst-case function in $\Gc$ with the provided lower bound. Different from \cite{bull2011convergence}, the hard functions we construct here are function networks. We define the first layer as a ``needle'' function with much smaller height and width than \cite{bull2011convergence}. For subsequent layers, we construct a function with corresponding RKHS norm such that the output is always larger than the input. As a consequence, the ``needle'' function gets higher when being fed into subsequent layers, and the composite function is a ``needle'' function with some specified height but a much smaller width.

\begin{rem}
It will be evident from the proof that the constant $c$ is always strictly less than $1$.  Ideally, we would like it to be close to $1$ so that $(cL)^m$ is similar to $L^m$, with the latter quantity appearing in our upper bounds.  It turns out that $c$ can indeed be arbitrarily close to $1$ in most cases.  Specifically, we will show in \cref{sec:dis_c} that when $\nu > 1$ and $\epsilon$ is small enough, $c$ simply becomes  the ratio of the minimum slope to the maximum slope of the kernel function (as a function of the Euclidean distance $\|\xbf - \xbf'\|$) on $[u-\util,u+\util]$, where $u$ and $\util$ can be arbitrarily small.  Since the squared exponential (SE) and Mat\'ern kernels have no sharp changes as a function of $\|\xbf - \xbf'\|$, this ensures that $c$ can be arbitrarily close to one when $\nu > 1$ and $\epsilon$ is small.  In \cref{sec:dis_c}, we will also demonstrate cases where $c$ is not too small (e.g., $c > 0.93$) even when the above-mentioned quantities $(u,\util)$ are moderate (e.g., $(u,\util) = (0.5,0.3)$).
\end{rem}

The lower bound on simple regret readily implies the following lower bound on cumulative regret.
\begin{restatable}[Lower bound on cumulative regret]{thm}{lowercml}
\label{thm:lower_cml}
Fix sufficiently large $B>0$, $k=k_\text{Mat\'ern}$, and $\Gamma=\Gamma_\text{Mat\'ern}$ with smoothness $\nu\ge 1$. Suppose that there exists an algorithm (possibly randomized) that achieves average cumulative regret $\EE[R_T]$ after $T$ rounds for any $m$-layer chain, multi-output chain, or feed-forward network on $[0,1]^d$ with some $L=\Theta(B)$. Then, it is necessary that
\begin{align*}
    \EE[R_T]=\begin{cases}
    	\Omega\big(\min\{T, B(cL)^{m-1}T^{1-\nu/d}\}\big)&\text{ when }d>\nu,\\
    	\Omega\big(\min\{T, \big(B(cL)^{m-1}\big)^{d/\nu}\}\big)&\text{ when }d\le\nu,
    \end{cases}
\end{align*}
for some $c=\Theta(1)$.
\end{restatable}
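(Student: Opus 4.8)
The plan is to deduce \cref{thm:lower_cml} from \cref{thm:lower_sim} by a cumulative-to-simple regret reduction, combined with an optimization over a truncated horizon. First I would observe that any algorithm producing queries $\xbf_1,\dots,\xbf_T$ can be turned into a simple-regret algorithm at any horizon $T'\le T$ by returning the best queried point, $\xbf^\ast_{T'}=\argmax_{t\le T'}g(\xbf_t)$. Since every per-round regret is nonnegative, this gives $\EE[r^\ast_{T'}]\le \frac1{T'}\EE[R_{T'}]$, and because $R_T=\sum_{t=1}^T r_t$ is nondecreasing in $T$ we also have $\EE[R_T]\ge\EE[R_{T'}]\ge T'\,\EE[r^\ast_{T'}]$ for every $T'\le T$. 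Applying \cref{thm:lower_sim} pointwise on the hard family, there is a worst-case network for which $\EE[R_T]\ge\max_{T'\le T}T'\,\underline r(T')$, where $\underline r(T')$ denotes the algorithm-independent lower bound on average simple regret at horizon $T'$.

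Next I would invert \cref{thm:lower_sim} to obtain an explicit bound on $\underline r(\cdot)$. Writing $A:=B(cL)^{m-1}$, the statement that achieving $\EE[r^\ast_{T'}]\le\epsilon$ forces $T'=\Omega((A/\epsilon)^{d/\nu})$ rearranges, upon choosing $\epsilon=\Theta(A\,T'^{-\nu/d})$, to $\underline r(T')=\Omega(A\,T'^{-\nu/d})$ in the regime where $\epsilon/B$ is small, i.e.\ where $A\,T'^{-\nu/d}\lesssim 1$. For smaller $T'$ I would instead apply \cref{thm:lower_sim} with a fixed constant target $\epsilon=\Theta(1)$: since $B$ is large this target meets the $\epsilon/B$-small requirement, and it shows that driving the simple regret below a constant already requires $T'=\Omega(A^{d/\nu})$, so by monotonicity of $\underline r$ we get $\underline r(T')=\Omega(1)$ for all $T'\lesssim A^{d/\nu}$. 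Together these give the capped bound $\underline r(T')=\Omega\big(\min\{1,\,A\,T'^{-\nu/d}\}\big)$, where the cap at $\Theta(1)$ reflects that the constructed networks take values in a bounded range, so per-round regret cannot exceed a constant.

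Finally I would carry out the elementary maximization of $f(T')=\Theta(\min\{T',\,A\,T'^{1-\nu/d}\})$ over $T'\le T$. The two branches of $f$ meet at the threshold $T'=A^{d/\nu}$, below which $f(T')=\Theta(T')$ is increasing. When $d>\nu$ the upper branch $A\,T'^{1-\nu/d}$ is also increasing, so the maximum is attained at $T'=T$, giving $\EE[R_T]=\Omega(\min\{T,\,A\,T^{1-\nu/d}\})$. When $d\le\nu$ the upper branch is nonincreasing, so the maximum over $T'\le T$ is attained at $T'=\min\{T,A^{d/\nu}\}$, giving $\EE[R_T]=\Omega(\min\{T,\,A^{d/\nu}\})$. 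Substituting $A=B(cL)^{m-1}$ recovers both cases, with $c=\Theta(1)$ and the hypothesis $L=\Theta(B)$ inherited from \cref{thm:lower_sim}.

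I expect the main subtlety to be the $d\le\nu$ case: the naive one-shot reduction $\EE[R_T]\ge T\,\underline r(T)$ is loose there, since $T\,\underline r(T)\sim A\,T^{1-\nu/d}$ is nonincreasing in $T$ while the cumulative regret is nondecreasing. The fix is precisely the optimization over the truncated horizon $T'\le T$ together with monotonicity of $R_T$, which pins the bound at its threshold value $\Theta(A^{d/\nu})$. A secondary point requiring care is the constant cap on $\underline r$: one must invoke the ``sufficiently large $B$'' hypothesis to ensure that the constant-level target $\epsilon=\Theta(1)$ still satisfies the $\epsilon/B$-small condition under which \cref{thm:lower_sim} applies.
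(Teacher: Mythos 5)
Your proposal is correct and takes essentially the same route as the paper: both deduce the result from \cref{thm:lower_sim} via the best-queried-point reduction $\EE[R_T]\ge T'\,\EE[r^\ast_{T'}]$ for a truncated horizon $T'\le T$, invoke that theorem once with $\epsilon=\Theta\big(B(cL)^{m-1}T'^{-\nu/d}\big)$ and once with $\epsilon=\Theta(1)$, and use monotonicity of cumulative regret to pin the bound at the threshold $T'=\Theta\big((B(cL)^{m-1})^{d/\nu}\big)$ when $d\le\nu$. The only difference is presentational: where you obtain $\EE[R_T]=\Omega(T)$ for $T\lesssim (B(cL)^{m-1})^{d/\nu}$ directly from the capped simple-regret bound $\Omega(\min\{1, B(cL)^{m-1}T^{-\nu/d}\})$, the paper reaches the same conclusion by a short contradiction argument (extending a hypothetical $o(T)$-regret algorithm by repeating its best queried point).
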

The proof is given in \cref{sec:lower_cml}.

\section{Comparison of Bounds}
\label{sec:discussion}

In this section, we compare the algorithmic upper bounds of GPN-UCB (\cref{algo:ucb}) and non-adaptive sampling (\cref{algo:fd}) to the algorithmic-independent lower bounds in \cref{sec:lower}.  We present our discussion conditioned on the conjecture of \cite{vakili2022open} being true, but we re-iterate that even without this, any $\Sigma_T$ dependence still matches the vanilla setting, and has known rigorous upper bounds as detailed in \cref{sec:ucb_disc},
A table summarizing the regret bounds for the Mat\'ern kernel is provided in \cref{sec:summary}. 

For GPN-UCB, the cumulative regret upper bound for chains (\cref{thm:ucb_chain}) matches the lower bound (\cref{thm:lower_cml}) up to a $2^m$ factor when $d\ge\nu\ge1$ and $T=\Omega\big((B(cL)^{m-1})^{d/\nu}\big)$. The upper bound for multi-output chains (\cref{thm:ucb_mul}) is similarly optimal (up to a $5^m$ term) when $d_{\max}=d\ge\nu\ge1$, while there is always an $O(\sqrt{D_{2,m}})$ gap for the $T$-independent factor of feed-forward networks (\cref{thm:ucb_net}). When $d<\nu$ and $T=\Omega\big((B(cL)^{m-1})^{d/\nu}\big)$, the cumulative regret lower bound for all the three structures is $\Omega\big((B(cL)^{m-1})^{d/\nu}\big)$, while the upper bound always contains an $O(BL^{m-1})$ factor; hence, the terms behave similarly but some gaps still remain.

We expect that the discrepancies for multi-output chains and feed-forward networks are due to the looseness of the proposed lower bound. Since the hard functions $\Gc$ used in analysis always produce a single-entry vector output for intermediate layers, for a fixed value of $B$, there might exist a worse hard function network with more nonzero entries for intermediate outputs and a probably higher final regret.

For non-adaptive sampling, when $\nu=1$, the upper bound for chains (\cref{thm:fd_chain}) matches the lower bound (\cref{thm:lower_sim}) up to a $c^{m-1}$ factor. When $\nu>1$, \cref{thm:fd_chain} shows that the simple regret upper bound takes the maximum of two terms, where the first term has a matched $T$-dependent factor. However, both terms have a larger $T$-independent factor than the lower bound when $\nu>1$; this arises due to magnifying the uncertainty from each layer to the next.


\section{Conclusion} \label{sec:conclusion}

We have proposed an upper confidence bound based method GPN-UCB and a non-adaptive sampling based method for optimizing chains, multi-output chains, and feed-forward networks in the noise-free grey-box setting.  Our regret bounds significantly improve certain dependencies compared to previous works, and we provide lower bounds that are near-matching in broad cases of interest. An immediate direction for future work is to explore noisy extensions of our algorithms (as well as lower bounds), ideally attaining analogous improvements over existing works \cite{kusakawa2021bayesian,sussex2023modelbased} as those that we attained in the noiseless setting (as discussed in Appendix \ref{sec:comparison}).


\subsubsection*{Acknowledgment}
This work was supported by the Singapore Ministry of Education Academic Research Fund Tier 1 under grant number A-8000872-00-00.

\newpage
\bibliography{CascadeOpt_References}

\newpage
\appendix

{\LARGE\bf\sffamily Appendix \par}
\section{Applications of the Cascaded Setting}
\label{sec:motivation}
Optimization of composite functions (cascaded functions, or function networks) has wide applications in optimizing multi-stage processes, where the output of the current stage is the input of the next stage. For example, a real-world application of grey-box composite function optimization in material science is alloy heat treatment, which consists of multiple heat treatment steps and the resulting hardness after each step is available. The objective is to find the starting concentration and heat treatment (temperature and time) that maximize the hardness of the final product \cite{nguyen2016cascade}. Note that to find the best heat treatment for all the steps, the algorithms are expected to support additional input for intermediate layers, and we will show in \cref{sec:comparison} that our algorithms and theories can be easily adapted to support this. 

Similarly, as an application in simulation, solar cell simulators can also utilize the technique of composite function optimization to maximize the power generation efficiency \cite{kusakawa2021bayesian}. As discussed by \cite{astudillo2021bayesian}, composite functions also arise in numerous areas e.g., engineering design, material design, system design, reinforcement learning, and Markov decision processes. Many analogous studies for white-box settings also exist \cite{drusvyatskiy2019efficiency,wang2017stochastic} with various roles highlighted in learning tasks; black-box variants then become indispensable when gradients are unavailable.

\section{Reproducing Kernel Hilbert Space (RKHS)}
\label{sec:rkhs}
\subsection{Scalar-Valued Functions}
For a given scalar-valued kernel $k:\Xc\times\Xc\to\RR$, consider a function space $\Sc_k:=\{f(\cdot)=\sum_{i=1}^{n_0} a_i k(\cdot,\xbf_i): n_0\in\NN, a_i\in\RR, \xbf_i\in \Xc\}$. Then, the reproducing kernel Hilbert space (RKHS) corresponding to kernel $k$, denoted by $\Hc_k$, can be obtained by forming the completion of $\Sc_k$, and the elements in $\Hc_k$ are called scalar-valued kernelized bandits or GP bandits. 
$\Hc_k$ is equipped with the inner product
\begin{align}
	\langle f, f' \rangle_k = \sum_{i=1}^{n_1} \sum_{j=1}^{n_2} a_i b_j k(\xbf_i,\xbf'_j)\label{eq:norm}
\end{align}
for $f=\sum_{i=1}^{n_1} a_i k(\cdot,\xbf_i)$ and $f'=\sum_{j=1}^{n_2} b_j k(\cdot,\xbf'_j)$. This inner product satisfies the reproducing property, such that $\langle f(\cdot), k(\cdot,\xbf)\rangle_k=f(\xbf),\forall \xbf\in\Xc,\forall f\in\Hc_k$. The RKHS norm of $f$ is $\|f\|_k=\sqrt{\langle f, f\rangle_k}$, and we use $\Hc_k(B):=\{f\in\Hc_k:\|f\|_k\le B\}$ to denote the set of functions whose RKHS norm is upper bounded by some known constant $B>0$. In this work, we mainly focus on the Mat\'ern kernel:                           
\begin{align*}
	k_\text{Mat\'ern}(\xbf,\xbf')&=\frac{2^{1-\nu}}{\Gamma(\nu)}\Bigg(\frac{\sqrt{2\nu}d_{\xbf,\xbf'}}{l}\Bigg)^\nu B_\nu \Bigg(\frac{\sqrt{2\nu}d_{\xbf,\xbf'}}{l}\Bigg),
\end{align*}
where $d_{\xbf,\xbf'}=\| \xbf-\xbf'\|_2$, $l>0$ denotes the length-scale, $\nu>0$ is a smoothness parameter, $\Gamma$ is the Gamma function, and $B_\nu$ is the modified Bessel function.

\subsection{Vector-Valued Functions}
An operator-valued kernel $\Gamma:\Xc\times\Xc\to\RR^{n\times n}$ is called a multi-task kernel on $\Xc$ if $\Gamma(\cdot,\cdot)$ is symmetric positive definite. Moreover, a single-task kernel with $n=1$ recovers a scalar-valued kernel. For a given multi-task kernel $\Gamma$, similarly to the scalar-valued kernels, there exists an RKHS of vector-valued functions $\Hc_\Gamma$, which is the completion of $\Sc_\Gamma:=\{f(\cdot)=\sum_{i=1}^{n_0} \Gamma(\cdot,\xbf_i)\abf_i: n_0\in\NN, \abf_i\in\RR^n, \xbf_i\in \Xc\}$. The elements in $\Hc_\Gamma$ are called vector-valued kernelized bandits, and $\Hc_\Gamma$ is equipped with the inner product \cite{carmeli2006vector}
\begin{align}
	\langle f,f'\rangle_\Gamma=\sum_{i=1}^{n_1}\sum_{j=1}^{n_2}\langle \Gamma(\xbf_i,\xbf_j') \abf_i,\bbf_j\rangle\label{eq:mul_norm}
\end{align}
for $f=\sum_{i=1}^{n_1}\Gamma(\cdot,\xbf_i)\abf_i$ and $f'=\sum_{j=1}^{n_2}\Gamma(\cdot,\xbf'_j)\bbf_i$. This inner product satisfies the reproducing property, such that $\langle f(\cdot),\Gamma(\cdot,\xbf)\vbf\rangle_\Gamma=\langle f(\xbf),\vbf\rangle_2,\forall\xbf\in\Xc,\forall \vbf\in\RR^n,\forall f\in\Hc_\Gamma.$ The RKHS norm of $f\in\Hc_\Gamma$ is $\|f\|_\Gamma=\sqrt{\langle f,f\rangle_\Gamma}$, and we focus on $\Hc_\Gamma(B):=\{f\in\Hc_\Gamma:\|f\|_\Gamma\le B\}$ containing functions with norm at most $B>0$.  We will often pay particular attention to the Mat\'ern kernel $\Gamma_\text{Mat\'ern}(\cdot,\cdot)=k_\text{Mat\'ern}(\cdot,\cdot)\Ibf$, where $k_\text{Mat\'ern}$ is the scalar-valued Mat\'ern kernel and $\Ibf$ is the $n\times n$ identity matrix.

\section{Figures Illustrating the Network Structures} \label{app:figs}

We depict the chain structure in \cref{fig:chain}, the multi-output chain structure in \cref{fig:mul}, and the feed-forward network structure in \cref{fig:net}.

\begin{figure}[t!]
	\centering
	\includegraphics[width=0.5\linewidth]{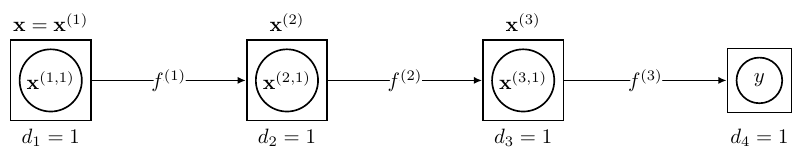}
	\caption{A chain with $m=3$, where $\{\ff{i}\}_{i=1}^3$ are scalar-valued functions.}
	\label{fig:chain}
	\includegraphics[width=0.5\linewidth]{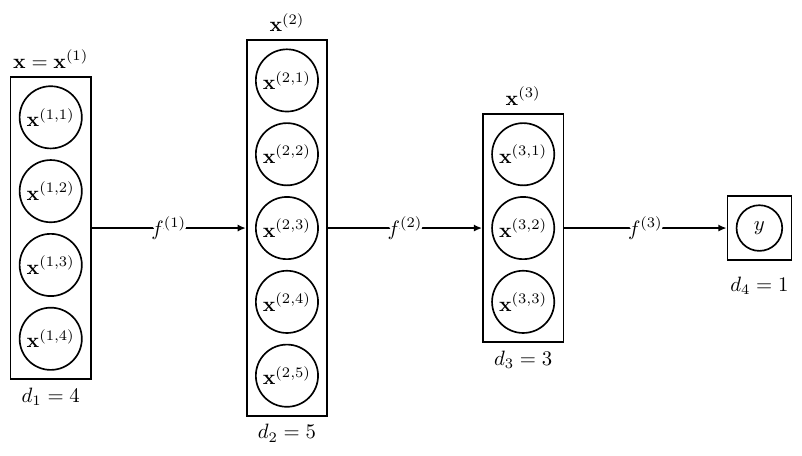}
	\caption{A multi-output chain with $m=3$, where $\{\ff{i}\}_{i=1}^3$ are vector-valued functions.}
	\label{fig:mul}
	\includegraphics[width=0.5\linewidth]{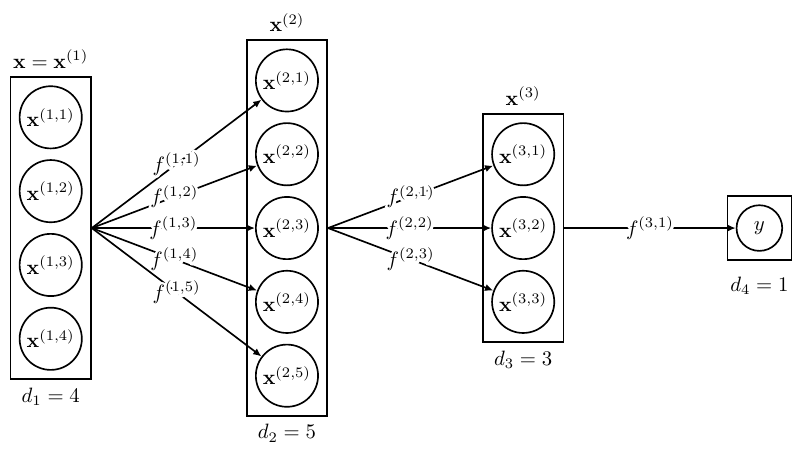}
	\caption{A feed-forward network with $m=3$, where $\ff{i,j}$ is a scalar-valued function for each $i\in[3],j\in[d_{i+1}]$.}
	\label{fig:net}
\end{figure}

\section{Confidence Region for Vector-Valued Functions (Proof of \cref{lem:mul})}
\label{sec:cr_mul}
In this section, we prove \cref{lem:mul}, which is restated as follows.
\mulconf*
\begin{proof}
We first review the GP posterior and confidence region for operator-valued kernel $\Gamma:\Xc\times\Xc\to\RR^{n\times n}$ in the noisy setting.

\begin{restatable}{lem}{confmul}
\textup{\citep[Theorem 1]{chowdhury2021no}}
\label{lem:mul_conf}
For $f\in\Hc_\Gamma(B)$, given a sequence of points $(\xbf_1,\dots,\xbf_t)$ and their noisy observations $Y'_t=(\ybf'_1,\dots,\ybf'_t)$, where $\ybf'_i=f(\xbf_i)+\epsilon_i$ with $\epsilon_i$ being i.i.d. $\sigma$-sub-Gaussian for each $i\in[t]$ for some $\sigma>0$, let $\mu'_t(\xbf)$ and $\Gamma'_t(\xbf,\xbf)$ denote the posterior mean and variance computed using
\begin{align}
    \mu'_t(\xbf)&=G_t(\xbf)^T (G_t+\lambda \Ibf_{nt})^{-1}Y'_t,\label{eq:mu_noisy}\\
    \Gamma'_t(\xbf,\xbf')&=\Gamma(\xbf,\xbf')-G_t(\xbf)^T (G_t+\lambda \Ibf_{nt})^{-1}G_t(\xbf'),\label{eq:gamma_noisy}
\end{align}
where $G_t(\xbf)=[\Gamma(\xbf,\xbf_i)]_{i=1}^t \in \RR^{nt\times n}$, $G_t=[\Gamma(\xbf_i,\xbf_j)]_{i,j=1}^t \in \RR^{nt\times nt}$, $Y'_t=[\ybf'_i]_{i=1}^t\in\RR^{nt\times 1}$, and $\lambda>0$ is a regularization parameter.
Then, for any $\lambda>0$ and $\delta\in(0,1]$, with probability at least $1-\delta$, it holds for all $\xbf\in\Xc$ that
\begin{align}
    \|f(\xbf)-\mu'_t(\xbf)\|_2\le \alpha_t\|\Gamma'_t(\xbf,\xbf)\|_2^{1/2}
\end{align}
with $\alpha_t=B+\frac{\sigma}{\sqrt{n}}\sqrt{2\log(1/\delta)+\log\det(I_{nt}+\lambda^{-1}G_t)}$.
\end{restatable}
Since zero noise is $\sigma$-sub-Gaussian for any $\sigma>0$, by setting $\sigma=\lambda=\frac{1}{a}$ and then taking $a\to\infty$, \eqref{eq:mu_noisy} and \eqref{eq:gamma_noisy} converge to
\begin{align}
    \mu_t(\xbf)&=G_t(\xbf)^T G_t^{-1}Y_t,\\
    \Gamma_t(\xbf,\xbf')&=\Gamma(\xbf,\xbf')-G_t(\xbf)^T G_t^{-1}G_t(\xbf'),
\end{align}
with $Y_t=(\ybf_1,\dots, \ybf_t)$, thus yielding the posterior mean and variance based on $(\xbf_1,\dots,\xbf_t)$ and their noise-free observations $Y_t$.

Then, by setting $\sigma=\delta=\lambda=\frac{1}{a}$ for $a\to\infty$, with $\{\lambda_i\}_{i=1}^{nt}$ being the eigenvalues of $G_t$, we obtain
\begin{align}
	\lim\limits_{a\to\infty}\alpha_t&=B+\frac{1}{\sqrt{n}} \cdot \lim\limits_{a\to\infty} \sqrt{\frac{2\log a}{a^2}+\frac{\sum_{i=1}^{nt} \log(a\lambda_i+1)}{a^2}} = B.
\end{align}
Hence, we obtain \cref{lem:mul} for the deterministic confidence region based on noise-free observations.
\end{proof}

\section{Posterior Variance for Vector-Valued Functions}
\label{sec:multi_var}

In this section, we state and prove \cref{lem:multi_var}.

\begin{restatable}{lem}{multivar}
\label{lem:multi_var}
For a scalar-valued kernel $k$, define $\Gamma(\xbf,\xbf')=k(\xbf,\xbf')\Ibf_n$ with $\Ibf_n$ being the $n\times n$ identity matrix. For $f_1\in\Hc_k(B)$ and $f_2\in\Hc_\Gamma(B)$ with domain $\Xc$ and constant $B>0$, let $\sigma_t(\xbf)^2$ and $\Gamma_t(\xbf,\xbf)$ denote the posterior variance (matrix) for $f_1$ and $f_2$ based on $t$ points $(\xbf_1,\dots,\xbf_t)$ in the noise-free setting computed using \eqref{eq:var} and \eqref{eq:multi_var}. Then, it holds for all $\xbf\in\Xc$ that
\begin{align}
    \Gamma_t(\xbf,\xbf)=\sigma_t(\xbf)^2\Ibf_n.
\end{align}
\end{restatable}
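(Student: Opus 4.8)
The plan is to exploit the Kronecker-product structure induced by the choice $\Gamma(\xbf,\xbf')=k(\xbf,\xbf')\Ibf_n$, which makes every object appearing in \eqref{eq:multi_var} a ``scalar object tensored with $\Ibf_n$''. Concretely, since $G_t=[\Gamma(\xbf_i,\xbf_j)]_{i,j=1}^t$ arranges the $n\times n$ blocks $k(\xbf_i,\xbf_j)\Ibf_n$ in a $t\times t$ grid, we have $G_t=\mathbf{K}_t\otimes\Ibf_n$, and likewise the vertically stacked blocks give $G_t(\xbf)=[\Gamma(\xbf,\xbf_i)]_{i=1}^t=\mathbf{k}_t(\xbf)\otimes\Ibf_n$, while $\Gamma(\xbf,\xbf)=k(\xbf,\xbf)\Ibf_n$. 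The first step is simply to record these three identifications from the definitions of $\mathbf{k}_t$, $\mathbf{K}_t$, $G_t$, and $G_t(\xbf)$.

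Next I would invert using $(A\otimes B)^{-1}=A^{-1}\otimes B^{-1}$, giving $G_t^{-1}=\mathbf{K}_t^{-1}\otimes\Ibf_n$ (well-defined in the noise-free setting under the usual invertibility of $\mathbf{K}_t$). Then the quadratic term in \eqref{eq:multi_var} is evaluated by repeated application of the mixed-product rule $(A\otimes B)(C\otimes D)=(AC)\otimes(BD)$:
\begin{align*}
    G_t(\xbf)^T G_t^{-1}G_t(\xbf)
    &=(\mathbf{k}_t(\xbf)^T\otimes\Ibf_n)(\mathbf{K}_t^{-1}\otimes\Ibf_n)(\mathbf{k}_t(\xbf)\otimes\Ibf_n)\\
    &=\big(\mathbf{k}_t(\xbf)^T\mathbf{K}_t^{-1}\mathbf{k}_t(\xbf)\big)\otimes\Ibf_n.
\end{align*}
Since $\mathbf{k}_t(\xbf)^T\mathbf{K}_t^{-1}\mathbf{k}_t(\xbf)$ is a $1\times 1$ scalar, its Kronecker product with $\Ibf_n$ is just that scalar times $\Ibf_n$.

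Substituting back into \eqref{eq:multi_var} and comparing with \eqref{eq:var} finishes the argument:
\begin{align*}
    \Gamma_t(\xbf,\xbf)
    &=k(\xbf,\xbf)\Ibf_n-\big(\mathbf{k}_t(\xbf)^T\mathbf{K}_t^{-1}\mathbf{k}_t(\xbf)\big)\Ibf_n\\
    &=\big(k(\xbf,\xbf)-\mathbf{k}_t(\xbf)^T\mathbf{K}_t^{-1}\mathbf{k}_t(\xbf)\big)\Ibf_n
    =\sigma_t(\xbf)^2\Ibf_n,
\end{align*}
as claimed. The computation is essentially routine once the Kronecker structure is in place, so I do not expect a genuine obstacle; the one point requiring care is matching the block-indexing convention of $G_t$ and $G_t(\xbf)$ to the standard Kronecker-product ordering (i.e. confirming the blocks are indexed ``$t$ outer, $n$ inner'' so that $\mathbf{K}_t\otimes\Ibf_n$ is the correct orientation rather than $\Ibf_n\otimes\mathbf{K}_t$). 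Either ordering yields the same final scalar multiple of $\Ibf_n$, but I would state the convention explicitly to keep the mixed-product manipulations unambiguous.
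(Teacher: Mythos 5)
Your proof is correct and rests on the same computational core as the paper's: the identifications $G_t=\Kbf_t\otimes\Ibf_n$, $G_t(\xbf)=\kbf_t(\xbf)\otimes\Ibf_n$, $\Gamma(\xbf,\xbf)=k(\xbf,\xbf)\Ibf_n$, and repeated use of the mixed-product rule to collapse the quadratic form to $\big(\kbf_t(\xbf)^T\Kbf_t^{-1}\kbf_t(\xbf)\big)\Ibf_n$. The only difference is bookkeeping: the paper performs the identical Kronecker manipulation on the $\lambda$-regularized (noisy) posterior, where $(\Kbf_t+\lambda\Ibf_t)^{-1}$ always exists, and then takes $\lambda\to 0$, whereas you invert $\Kbf_t$ and $G_t$ directly --- which is legitimate here since the noise-free formulas \eqref{eq:var} and \eqref{eq:multi_var} already presuppose those inverses, and your block-ordering remark matches the paper's convention $G_t=\Kbf_t\otimes\Ibf_n$.
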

\begin{proof}
Let $\sigma'_t(\xbf)^2$ and $\Gamma'_t(\xbf,\xbf)$ denote the posterior variance (matrix) for $f_1$ and $f_2$ based on $t$ points $(\xbf_1,\dots,\xbf_t)$ in the noisy setting. For any $\lambda>0$, we have
\begin{align}
    \sigma'_t(\xbf)^2  &= k(\xbf,\xbf) - \mathbf{k}_t(\xbf)^T (\mathbf{K}_t+\lambda\mathbf{I}_t)^{-1}\mathbf{k}_t(\xbf),\\
    \Gamma'_t(\xbf,\xbf)&=\Gamma(\xbf,\xbf)-G_t(\xbf)^T (G_t+\lambda \Ibf_{nt})^{-1}G_t(\xbf'),\label{eq:multi_var_noisy}
\end{align}
where $\mathbf{k}_t(\xbf) = [k(\xbf, \xbf_i)]_{i=1}^{t}\in\RR^{t\times 1}$, $\mathbf{K}_t = [k(\xbf_i, \xbf_j)]_{i,j=1}^t\in\RR^{t\times t}$, $G_t(\xbf)=[\Gamma(\xbf,\xbf_i)]_{i=1}^t \in \RR^{nt\times n}$, and $G_t=[\Gamma(\xbf_i,\xbf_j)]_{i,j=1}^t \in \RR^{nt\times nt}$.
With $\otimes$ denoting the Kronecker product, we have
$G_t(\xbf)= \kbf_t(\xbf)\otimes\Ibf_n$ and $G_t= \Kbf_t\otimes\Ibf_n$. Then, it follows from \eqref{eq:multi_var_noisy} that
\begin{align}
    \Gamma'_t(\xbf,\xbf)&=\Gamma(\xbf,\xbf)-G_t(\xbf)^T(G_t+\lambda \Ibf_{nt})^{-1}G_t(\xbf)\\
    &=k(\xbf,\xbf)\Ibf_n-\big(\kbf_t(\xbf)^T\otimes\Ibf_n\big)(\mathbf{K}_t\otimes \Ibf_{n}+\lambda\Ibf_{nt})^{-1}\big(\kbf_t(\xbf)\otimes\Ibf_n\big)\\
    &=k(\xbf,\xbf)\Ibf_n-\big(\kbf_t(\xbf)^T\otimes\Ibf_n\big)\big((\mathbf{K}_t+\lambda \Ibf_t)\otimes \Ibf_{n}\big)^{-1}\big(\kbf_t(\xbf)\otimes\Ibf_n\big)\\
    &=k(\xbf,\xbf)\Ibf_n-\big(\kbf_t(\xbf)^T\otimes\Ibf_n\big)\big((\Kbf_t+\lambda \Ibf_t)^{-1}\otimes \Ibf_{n}\big)\big(\kbf_t(\xbf)\otimes\Ibf_n\big)\\
    &=k(\xbf,\xbf)\Ibf_n-\big(\kbf_t(\xbf)^T(\Kbf_t+\lambda\Ibf_t)^{-1}\kbf_t(\xbf)\big)\otimes\Ibf_n\\
    &=k(\xbf,\xbf)\Ibf_n-\big(\kbf_t(\xbf)^T(\Kbf_t+\lambda\Ibf_t)^{-1}\kbf_t(\xbf)\big)\Ibf_n\\
    &=\big(k(\xbf,\xbf)-(\kbf_t(\xbf)^T(\Kbf_t+\lambda\Ibf_t)^{-1}\kbf_t(\xbf)\big)\Ibf_n\\
    &=\sigma'_t(\xbf)^2\Ibf_n
\end{align}
Taking $\lambda\to0$, we obtain in the noise-free setting that
\begin{align}
    \Gamma_t(\xbf,\xbf)=\sigma_t(\xbf)^2\Ibf_n.
\end{align}
\end{proof}

\section{Analysis of GPN-UCB (\cref{algo:ucb})}
\subsection{Proof of \cref{thm:ucb_chain} (Chains)}
\label{sec:ucb_chain}
In this section, we prove \cref{thm:ucb_chain}, which is restated as follows.
\ucbchain*

\begin{proof}
With $\xbfbar_t^{(i)}=\argmax_{\zbf\in\del{i}_{t-1}(\xbf_t)}\UCBBB{i}(\zbf)$ and $\xbftil_t^{(i)}=\argmin_{\zbf\in\del{i}_{t-1}(\xbf_t)}\LCBBB{i}(\zbf)$, the simple regret is upper bounded as follows:
\begin{align}
    r_t&=g(\xbf^\ast)-g(\xbf_t) \label{eq:sim_0} \\
    &\le \UCB_{t-1}(\xbf_t)-\LCBBB{m}_{t-1}(\xbftil_t^{(m)})\label{eq:sim_1}\\
    &= \UCBBB{m}_{t-1}(\xbfbar_t^{(m)})-\LCBBB{m}_{t-1}(\xbftil_t^{(m)}) \label{eq:sim_2}\\
    &\le \UCBB{m}_{t-1}(\xx{m}_t)+L\|\xbfbar_t^{(m)}-\xx{m}_t\|_2 -\LCBB{m}_{t-1}(\xx{m}_t)+L\|\xx{m}_t-\xbftil_t^{(m)}\|_2\label{eq:sim_3}\\
    &\le 2B\sig{m}_{t-1}(\xx{m}_t)+2L\cdot\dia{\del{m}_{t-1}(\xbf_t)}\label{eq:sim_4}\\
    &\le 2B\Big(
    \sig{m}_{t-1}(\xx{m}_t)
    +(2L)\sig{m-1}_{t-1}(\xx{m-1}_t) +\cdots+(2L)^{m-1}\sig{1}_{t-1}(\xx{1}_t)
    \Big)\label{eq:sim_5}\\
    &= 2B\sum_{i=1}^m(2L)^{m-i}\sig{i}_{t-1}(\xx{i}_t),\label{eq:sim_6}
\end{align}
where:
\begin{itemize}
    \item \eqref{eq:sim_1} follows since $g(\xbf^\ast)\le \UCB_{t-1}(\xbf^\ast)\le \UCB_{t-1}(\xbf_t)$ (due to the algorithm maximizing the UCB score) and $g(\xbf_t)\ge\LCBBB{m}_{t-1}(\xbf_t^{(m)})\ge \LCBBB{m}_{t-1}(\xbftil_t^{(m)})$;
    \item \eqref{eq:sim_2} follows from \eqref{eq:ucb_chain} and the definition of $\xbfbar_t^{(m)}$;
    \item \eqref{eq:sim_3} follows from \eqref{eq:ucb_i} and \eqref{eq:lcb_i};
    \item \eqref{eq:sim_4} follows by defining $\dia{\del{i}_{t-1}(\xbf_t)}$ as the diameter of confidence region $\del{i}_{t-1}(\xbf_t)$;
    \item \eqref{eq:sim_5} follows from the following recursion: 
    \begin{align}
    \dia{\del{i+1}_{t-1}(\xbf_t)}&= \UCBBB{i}_{t-1}(\xbfbar^{(i)}_t)-\LCBBB{i}_{t-1}(\xbftil^{(i)}_t)\\
    &\le 2B\sig{i}_{t-1}(\xx{i}_t)+2L\cdot\dia{\del{i}_{t-1}(\xbf_t)},
    \end{align}
    where the inequality is obtained by following \eqref{eq:sim_2}-\eqref{eq:sim_4} with $m$ replaced by $i$.
\end{itemize}
Then, the cumulative regret is
\begin{align}
    R_T&=\sum_{t=1}^T r_t\\
    &\le 2B \sum_{t=1}^T \sum_{i=1}^m(2L)^{m-i}\sig{i}_{t-1}(\xx{i}_t)\\
    &= 2B \frac{(2L)^m-1}{2L-1} \Sigma_T\\
    &\le 2^{m+1} BL^{m-1}\Sigma_T,
\end{align}
where $\Sigma_T=\max\limits_{i\in[m]}\max\limits_{\zbf_1,\dots,\zbf_T\in\Xc^{(i)}}\sum_{t=1}^T\sig{i}_{t-1}(\zbf_t)$.
\end{proof}

\subsection{Proof of \cref{thm:ucb_mul} (Multi-Output Chains)}
\label{sec:ucb_mul}
In this section, we prove \cref{thm:ucb_mul}, which is restated as follows.
\ucbmul*
\begin{proof}
For each $i\in [m-1]$, there must exist $\xbfbar^{(i)}_t,\xbftil^{(i)}_t\in\del{i}_{t-1}(\xbf_t)$ such that the following upper bound on $\dia{\del{i+1}_{t-1}(\xbf_t)}$ in terms of $\dia{\del{i}_{t-1}(\xbf_t)}$ holds:
\begin{align}
    \dia{\del{i+1}_{t-1}(\xbf_t)} &\le \|\ff{i}(\xbfbar^{(i)}_t)-\ff{i}(\xbftil^{(i)}_t)\|_2+\dia{\overbar{\Cc}^{(i)}_{t-1}(\xbfbar^{(i)}_t)}+\dia{\overbar{\Cc}^{(i)}_{t-1}(\xbftil^{(i)}_t)}\label{eq:multi_del_1}\\
    &\le L\cdot\dia{\del{i}_{t-1}(\xbf_t)}+\dia{\Cc^{(i)}_{t-1}(\xbfbar^{(i)}_t,\xx{i}_t)}+\dia{\Cc^{(i)}_{t-1}(\xbftil^{(i)}_t,\xx{i}_t)}\label{eq:multi_del_2}\\
    &\le L\cdot\dia{\del{i}_{t-1}(\xbf_t)}+2\cdot\dia{\Cc^{(i)}_{t-1}(\xx{i}_t)}+2L\|\xbfbar^{(i)}_t-\xx{i}_t\|_2+2L\|\xbftil^{(i)}_t-\xx{i}_t\|_2\label{eq:multi_del_3}\\
    &\le 4B\|\Gamma_{t-1}^{(i)}(\xx{i}_t,\xx{i}_t)\|_2^{1/2}+5L\cdot\dia{\del{i}_{t-1}(\xbf_t)}, \label{eq:multi_del_4}
\end{align}
where:
\begin{itemize}
    \item \eqref{eq:multi_del_1} holds since there must exist $\xbfbar^{(i)}_t,\xbftil^{(i)}_t\in\del{i}_{t-1}(\xbf_t)$ such that $\dia{\del{i+1}_{t-1}(\xbf_t)}=\|\zbfbar-\zbftil\|_2$ for some $\zbfbar\in\overbar{\Cc}^{(i)}_{t-1}(\xbfbar^{(i)}_t)$ and some $\zbftil\in\overbar{\Cc}^{(i)}_{t-1}(\xbftil^{(i)}_t)$ (see \eqref{eq:del}), and the right hand side follows from the triangle inequality along with $\ff{i}(\xbfbar^{(i)}_t)\in \overbar{\Cc}^{(i)}_{t-1}(\xbfbar^{(i)}_t)$ and $\ff{i}(\xbftil^{(i)}_t)\in \overbar{\Cc}^{(i)}_{t-1}(\xbftil^{(i)}_t)$ (by \cref{lem:mul}). Here $\dia{\overbar{\Cc}^{(i)}_{t-1}(\cdot)}$ denotes the diameter of $\overbar{\Cc}^{(i)}_{t-1}(\cdot)$ (i.e., the Euclidean distance between the most distant pair of points in $\overbar{\Cc}^{(i)}_{t-1}(\cdot)$);
    \item \eqref{eq:multi_del_2} holds since $\ff{i}$ has Lipschitz constant $L$, and since $\xbfbar^{(i)}_t,\xbftil^{(i)}_t\in\del{i}_{t-1}(\xbf_t)$ and $\overbar{\Cc}^{(i)}_{t-1}(\zbf)\subseteq\Cc^{(i)}_{t-1}(\zbf,\xx{i}_t) $ for all $\zbf\in\del{i}_{t-1}(\xbf_t)$ by the definition in \eqref{eq:cbarz};
    \item \eqref{eq:multi_del_3} holds since $\dia{\Cc^{(i)}_{t-1}(\zbf,\xx{i}_t)}\le \dia{\Cc^{(i)}_{t-1}(\xx{i}_t)}+2L\|\zbf-\xx{i}_t\|_2$ by the definition in \eqref{eq:czz};
    \item \eqref{eq:multi_del_4} holds since $\dia{\Cc^{(i)}_{t-1}(\xx{i}_t)}=2B\|\Gamma_{t-1}^{(i)}(\xx{i}_t,\xx{i}_t)\|_2^{1/2}$ (see \eqref{eq:cz}) and  $\xx{i}_t,\xbfbar^{(i)}_t,\xbftil^{(i)}_t\in\del{i}_{t-1}(\xbf_t)$.
\end{itemize}

Analogous to the UCB, we define $\LCB_t(\xbf)=\min_{\zbf\in\del{m}_t(\xbf)}\overbar{\Cc}_t^{(m)}(\zbf)$.  Moreover, we define
$\xbfbar^{(m)}_t=\argmax_{\zbf\in\del{m}_{t-1}(\xbf_t)}\big(\max \overbar{\Cc}^{(m)}_{t-1}(\zbf)\big)$, and $\xbftil^{(m)}_t=\argmin_{\zbf\in\del{m}_{t-1}(\xbf_t)} \big(\min\overbar{\Cc}^{(m)}_{t-1}(\zbf)\big)$.  Then, we have
\begin{align}
    r_t={}&g(\xbf^\ast)-g(\xbf_t)\\
    \le{}& \UCB_{t-1}(\xbf^\ast) - \LCB_{t-1}(\xbf_t)\\
    \le{}& \UCB_{t-1}(\xbf_t) - \LCB_{t-1}(\xbf_t)\\
    ={}& \max\overbar{\Cc}^{(m)}_{t-1}(\xbfbar^{(m)}_t) - \min \overbar{\Cc}^{(m)}_{t-1}(\xbftil^{(m)}_t) \\
    \le{}& \Big(\muu{m}_{t-1}(\xx{m}_t)+B\|\Gamma_{t-1}^{(m)}(\xx{m}_t,\xx{m}_t)\|_2^{1/2}+L\|\xbfbar^{(m)}_t-\xx{m}_t\|_2\Big)\nonumber\\
    &-\Big(\muu{m}_{t-1}(\xx{m}_t)-B\|\Gamma_{t-1}^{(m)}(\xx{m}_t,\xx{m}_t)\|_2^{1/2}-L\|\xbftil^{(m)}_t-\xx{m}_t\|_2\Big)\label{eq:rt_ucb_1}\\
    \le{}& 2B \|\Gamma_{t-1}^{(m)}(\xx{m}_t,\xx{m}_t)\|_2^{1/2}+2L\cdot\dia{\del{m}_{t-1}(\xbf_t)}\label{eq:rt_ucb_1a}\\
    \le{}& 2B \|\Gamma_{t-1}^{(m)}(\xx{m}_t,\xx{m}_t)\|_2^{1/2}+2L\Big( 4B \|\Gamma_{t-1}^{(m-1)}(\xx{m-1}_t,\xx{m-1}_t)\|_2^{1/2}+ 5L\cdot\dia{\del{m-1}_{t-1}(\xbf_t)}\Big)\label{eq:rt_ucb_2}\\
    \le{}& 2B \|\Gamma_{t-1}^{(m)}(\xx{m}_t,\xx{m}_t)\|_2^{1/2} 
    + \sum_{i=1}^{m-1} 2L \cdot (5L)^{m-1-i} \cdot 4B \|\Gamma_{t-1}^{(i)}(\xx{i}_t,\xx{i}_t)\|_2^{1/2}, \label{eq:rt_ucb_3}
\end{align}
where:
\begin{itemize}
    \item \eqref{eq:rt_ucb_1} holds since
    \begin{align}
        \max\overbar{\Cc}^{(m)}_{t-1}(\xbfbar^{(m)}_t)
        &\le \max\Cc^{(m)}_{t-1}(\xbfbar^{(m)}_t,\xx{m}_t)&&\text{(by \eqref{eq:cbarz})}\\
        &\le \max\Cc^{(m)}_{t-1}( \xx{m}_t) + L\|\xbfbar^{(m)}_t-\xx{m}_t\|_2 &&\text{(by \eqref{eq:czz})}\\
        &= \muu{m}_{t-1}(\xx{m}_t)+B\|\Gamma_{t-1}^{(m)}(\xx{m}_t,\xx{m}_t)\|_2^{1/2}+L\|\xbfbar^{(m)}_t-\xx{m}_t\|_2 &&\text{(by \eqref{eq:cz})}.
    \end{align}
    Similarly, we have
    \begin{align}
        \min\overbar{\Cc}^{(m)}_{t-1}(\xbftil^{(m)}_t)
        &\ge \min\Cc^{(m)}_{t-1}(\xbftil^{(m)}_t,\xx{m}_t)\\
        &\ge \min\Cc^{(m)}_{t-1}( \xx{m}_t) - L\|\xbftil^{(m)}_t-\xx{m}_t\|_2\\
        &= \muu{m}_{t-1}(\xx{m}_t)-B\|\Gamma_{t-1}^{(m)}(\xx{m}_t,\xx{m}_t)\|_2^{1/2}-L\|\xbftil^{(m)}_t-\xx{m}_t\|_2.
    \end{align}
    \item \eqref{eq:rt_ucb_1a} holds since all of the $\xbf$ vectors in \eqref{eq:rt_ucb_1a} lie in $\del{m}_{t-1}(\xbf_t)$;
    \item \eqref{eq:rt_ucb_2} and \eqref{eq:rt_ucb_3} follow from the recursive relation in \eqref{eq:multi_del_4}.
\end{itemize}
Then, the cumulative regret is
\begin{align}
    R_T={}&\sum_{t=1}^T r_t\\
    \le{}& 2B \sum_{t=1}^T\|\Gamma_{t-1}^{(m)}(\xx{m}_t,\xx{m}_t)\|_2^{1/2} 
    + 2L \cdot 4B \sum_{t=1}^T\|\Gamma_{t-1}^{(m-1)}(\xx{m-1}_t,\xx{m-1}_t)\|_2^{1/2}\nonumber\\
    {}&+ \sum_{i=1}^{m-2} 2L \cdot (5L)^{m-1-i} \cdot 4B \sum_{t=1}^T\|\Gamma_{t-1}^{(i)}(\xx{i}_t,\xx{i}_t)\|_2^{1/2} \\
    \le{}& 4B\frac{(5L)^m-1}{5L-1} \Sigma^\Gamma_T \\
    \le{}& 5^m BL^{m-1}\Sigma^\Gamma_T ,
\end{align}
where $\Sigma^\Gamma_T=\max\limits_{i\in[m]}\max\limits_{\zbf_1,\dots,\zbf_T\in\Xc^{(i)}} \sum_{t=1}^T\|\Gamma^{(i)}_{t-1}(\zbf_t,\zbf_t)\|_2^{1/2}$.
\end{proof}

\subsection{Proof of \cref{thm:ucb_net} (Feed-Forward Networks)}
\label{sec:ucb_net}
In this section, we prove \cref{thm:ucb_net}, which is restated as follows.
\ucbnet*
\begin{proof}
Recall the UCB and LCB definitions in \eqref{eq:ucb_ij_bar}--\eqref{eq:ucb_ij_final}.  
For a fixed input $\xbf\in\Xc$, defining $\xbfbar^{(i,j)}=\argmax_{\zbf\in\del{i}_t(\xbf)}\UCBBB{i,j}_t(\zbf)$ and $\xbftil^{(i,j)}=\argmin_{\zbf\in\del{i}_t(\xbf)}\LCBBB{i,j}_t(\zbf)$, the diameter of the confidence region of $\xx{i+1,j}$ is
\begin{align}
    \dia{\del{i+1,j}_t(\xbf)}&= \max_{\zbf\in\del{i}_t(\xbf)}\UCBBB{i,j}_t(\zbf) - \min_{\zbf\in\del{i}_t(\xbf)}\LCBBB{i,j}_t(\zbf) \\
    &= \UCBBB{i,j}_{t}(\xbfbar^{(i,j)})-\LCBBB{i,j}_{t}(\xbftil^{(i,j)}) \\
    &\le \UCBB{i,j}_{t}(\xx{i})+L\|\xbfbar^{(i,j)}-\xx{i}\|_2 -\LCBB{i,j}_{t}(\xx{i})+L\|\xx{i}-\xbftil^{(i,j)}\|_2\\
    &\le 2B\sig{i,j}_{t}(\xx{i})+2L\cdot\dia{\del{i}_{t}(\xbf)},
\end{align}
and therefore, the squared diameter of the confidence region of $\xx{i+1}=[\xx{i+1,1},\dots,\xx{i+1, d_{i+1}}]^T$ is
\begin{align}
    \dia{\del{i+1}_t(\xbf)}^2&=\sum_{j=1}^{d_{i+1}} \dia{\del{i+1,j}_t(\xbf)}^2\\
    &\le \sum_{j=1}^{d_{i+1}} \Big( 2B\sig{i,j}_{t}(\xx{i})+2L\cdot\dia{\del{i}_{t}(\xbf)} \Big)^2\\
    &= d_{i+1} \Big( 2B\sig{i,1}_{t}(\xx{i})+2L\cdot\dia{\del{i}_{t}(\xbf)} \Big)^2,
\end{align}
where the last step follows since $\sig{i,j}_t(\cdot)=\sig{i,1}_t(\cdot)$ for each $j\in[d_{i+1}]$ by the symmetry of our setup (each function is associated with the same kernel).

Then, by recursion, the diameter of the confidence region of $\xx{m}$ is
\begin{align}
    \dia{\del{m}_t(\xbf)}
    \le{}& 2\sqrt{d_m}B\sig{m-1,1}_t(\xx{m-1})+2\sqrt{d_m}L\cdot\dia{\del{m-1}_t(\xbf)}\\
    \le{}& 2\sqrt{d_m}B\sig{m-1,1}_t(\xx{m-1})+(2\sqrt{d_m}L)(2\sqrt{d_{m-1}}B)\sig{m-2,1}_t(\xx{m-2}) \nonumber \\
    & +(2\sqrt{d_m}L)(2\sqrt{d_{m-1}}L)\dia{\del{m-2}_t(\xbf)}\\
    \le{}& \sum_{i=1}^{m-1} \Big(\prod_{s=i+2}^m (2\sqrt{d_s}L) \Big) (2\sqrt{d_{i+1}}B) \sig{i,1}_t(\xx{i}),
\end{align}
where we set $\prod_{s=m+1}^m (2\sqrt{d_s}L)=1$.

Similarly to \eqref{eq:sim_0}--\eqref{eq:sim_6} for the case of chains, with $\xbfbar_t^{(m)}=\argmax_{\zbf\in\del{m}_{t-1}(\xbf_t)}\UCBBB{m,1}_{t-1}(\zbf)$ and $\xbftil_t^{(m)}=\argmin_{\zbf\in\del{m}_{t-1}(\xbf_t)}\LCBBB{m,1}_{t-1}(\zbf)$, the simple regret is upper bounded as follows:
\begin{align}
    r_t&=g(\xbf^\ast)-g(\xbf_t)\\
    &\le \UCB_{t-1}(\xbf_t)-\LCBBB{m,1}_{t-1}(\xbftil_t^{(m)})\\
    &= \UCBBB{m,1}_{t-1}(\xbfbar_t^{(m)})-\LCBBB{m,1}_{t-1}(\xbftil_t^{(m)})\\
    &\le \UCBB{m,1}_{t-1}(\xx{m}_t)+L\|\xbfbar_t^{(m)}-\xx{m}_t\|_2 -\LCBB{m,1}_{t-1}(\xx{m}_t)+L\|\xx{m}_t-\xbftil_t^{(m)}\|_2\\
    &\le 2B\sig{m,1}_{t-1}(\xx{m}_t)+2L\cdot\dia{\del{m}_{t-1}(\xbf_t)}\\
    &\le 2B\sig{m,1}_{t-1}(\xx{m}_t)+ 2L\sum_{i=1}^{m-1} \Big(\prod_{s=i+2}^m (2\sqrt{d_s}L) \Big) (2\sqrt{d_{i+1}}B) \sig{i,1}_{t-1}(\xx{i}_t)\\
    &\le \sum_{i=1}^{m} 2^{m-i+1} B L^{m-i} \Big(\prod_{s=i+1}^{m} \sqrt{d_s} \Big)  \sig{i,1}_{t-1}(\xx{i}_t),
\end{align}
where we use the convention $\prod_{s=m+1}^m \sqrt{d_s}=1$.

Hence, the cumulative regret is
\begin{align}
    R_T&= \sum_{t=1}^T r_t\\
    &\le \sum_{i=1}^{m} 2^{m-i+1} B L^{m-i} \Big(\prod_{s=i+1}^{m} (\sqrt{d_s}) \Big) \sum_{t=1}^T  \sig{i,1}_{t-1}(\xx{i}_t) \\
    &\le 2B\frac{(2L)^m-1}{2L-1} \sqrt{D_{2,m}} \Sigma_T \\
    &\le 2^{m+1} B L^{m-1} \sqrt{D_{2,m}} \Sigma_T ,
\end{align}
where $D_{2,m}=\prod_{i=2}^{m} d_i$ and $\Sigma_T=\max\limits_{i\in[m]}\max\limits_{\zbf_1,\dots,\zbf_T\in\Xc^{(i)}}\sum_{t=1}^T \sig{i,1}_{t-1}(\zbf_t)$.
\end{proof}

\section{Analysis of Non-Adaptive Sampling (\cref{algo:fd})}

As mentioned in the main body, the analysis in this appendix is restricted to the case that all domains in the network are \emph{hyperrectangular}.  This is a somewhat restrictive assumption, but we note that the primary reason for assuming this is to be able to apply Lemma \ref{lem:fill} below.  Hence, if Lemma \ref{lem:fill} can be generalized, then our results also generalize in the same way.

The lemma, stated as follows, provides an upper bound on the posterior standard deviation on a hyperrectangular domain for the Mat\'ern kernel in terms of the fill distance of the sampled points and the kernel parameter.
\begin{restatable}{lem}{fillmat}\textup{\citep[Theorem 5.4]{kanagawa2018gaussian}}
\label{lem:fill}
For the Mat\'ern kernel with smoothness $\nu$, a hyperrectangular domain $\Xc$, and any set of $T$ points $\Xc_T\subset \Xc$ with fill distance $\delta_T$, define
\begin{align}
	\sigbar_T:=\max_{\xbf\in\Xc}\sigma_T(\xbf),
\end{align}
where $\sigma_T(\cdot)$ is the posterior standard deviation computed based on $\Xc_T$ using $\eqref{eq:var}$. There exists a constant $\delta_0$ depending only on the kernel parameters such that if $\delta_T\le \delta_0$ then
\begin{align}
    \sigbar_T = O\big( (\delta_T)^\nu \big).
\end{align}
\end{restatable}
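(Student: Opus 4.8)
The plan is to identify the noise-free posterior standard deviation with the \emph{power function} from scattered-data approximation and then invoke a Sobolev sampling inequality. First I would observe that, writing $X_T = \{\xbf_s\}_{s=1}^T$ and recalling \eqref{eq:var}, the quantity $\sigma_T(\xbf)^2 = k(\xbf,\xbf) - \mathbf{k}_T(\xbf)^T\mathbf{K}_T^{-1}\mathbf{k}_T(\xbf)$ is exactly the squared power function $P_{X_T}(\xbf)^2$ of the kernel $k$ on the point set $X_T$. This has the variational characterization
\begin{align*}
	\sigma_T(\xbf) = \sup_{f\in\Hc_k,\,\|f\|_k\le 1} \big| f(\xbf) - (I_{X_T}f)(\xbf)\big|,
\end{align*}
where $I_{X_T}f$ denotes the minimum-norm kernel interpolant of $f$ through its values on $X_T$, which coincides with the posterior mean \eqref{eq:mean} when the observations are $f(\xbf_1),\dots,f(\xbf_T)$. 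Thus bounding $\sigbar_T = \max_{\xbf\in\Xc}\sigma_T(\xbf)$ reduces to a uniform bound on the interpolation error over the unit ball of $\Hc_k$.

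Next I would exploit the well-known norm equivalence between the Mat\'ern RKHS and a Sobolev space: for smoothness $\nu$, the kernel $k_\text{Mat\'ern}$ has spectral density decaying like $(1+\|\omega\|_2^2)^{-(\nu+d/2)}$, so $\Hc_k$ is norm-equivalent to the Sobolev space $H^{\nu+d/2}(\Xc)$ on any bounded Lipschitz domain $\Xc$. The hyperrectangular assumption guarantees such a domain---in particular it satisfies an interior cone condition and admits a bounded Sobolev extension operator---which is exactly what the interpolation estimate requires.

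With this in place, the core step is a \emph{sampling inequality} (of Narcowich--Ward--Wendland / Arcang\'eli type): for $\tau > d/2$ and any $f\in H^{\tau}(\Xc)$ on a domain with an interior cone condition, there are constants $C,\delta_0>0$ such that whenever the fill distance $\delta_T \le \delta_0$,
\begin{align*}
	\| f - I_{X_T} f\|_{L^\infty(\Xc)} \le C\,(\delta_T)^{\tau - d/2}\,\|f\|_{H^{\tau}(\Xc)}.
\end{align*}
Specializing to $\tau = \nu + d/2$ makes the exponent $\tau - d/2 = \nu$; taking the supremum over $\|f\|_k\le 1$ and using the norm equivalence to pass between $\|\cdot\|_k$ and $\|\cdot\|_{H^{\nu+d/2}}$ then yields $\sigbar_T \le C'(\delta_T)^\nu = O\big((\delta_T)^\nu\big)$, with $\delta_0$ inherited from the sampling inequality.

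The hard part will be the sampling inequality itself. Establishing it requires a polynomial-reproduction / Bramble--Hilbert argument combined with a covering of $\Xc$ by cones of controlled aspect ratio, and it must be carried out in a (generally fractional) Sobolev space $H^{\nu+d/2}$, where the bookkeeping of non-integer smoothness and the tracking of the threshold $\delta_0$ and the constant $C$ are delicate. This is precisely the place where the geometric regularity of the domain enters, and it is why the hyperrectangular (or at least interior-cone) assumption cannot be dropped without generalizing this lemma.
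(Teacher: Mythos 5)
The paper does not prove this lemma at all---it is imported verbatim as \citep[Theorem 5.4]{kanagawa2018gaussian}---and your reconstruction (posterior standard deviation equals the power function, Mat\'ern RKHS is norm-equivalent to $H^{\nu+d/2}$ on a domain with an interior cone condition, then a Narcowich--Ward--Wendland sampling inequality with exponent $\tau-d/2=\nu$) is exactly the argument used in that reference and in the scattered-data approximation literature it draws on. Your proof is correct and takes essentially the same route as the cited source, so there is nothing to add beyond noting that the ``hard part'' you flag (the sampling inequality) is likewise treated as a known black-box result there.
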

Recalling that $\{\xx{i}_s\}_{s=1}^T$ denote the intermediate outputs of $\{\xbf_s\}_{s=1}^T$ right after $\ff{i-1}$, we define the fill distance of the $\Xc^{(i)}$ with respect to $\{\xx{i}_s\}_{s=1}^T$ as
\begin{align}
	\delta_T^{(i)}&=\max_{\zbf\in\Xc^{(i)}}\min_{s\in[T]}\|\zbf-\xx{i}_s\|_2.\label{eq:delta_i_T}
\end{align}
We also provide corollaries on the posterior standard deviation of a single layer for each network structure.
\begin{restatable}[Chains]{cor}{fillchain}
	\label{cor:fill_chain}
	With $k$ being the Mat\'ern kernel with smoothness $\nu$, consider a chain $g=\ff{m}\circ\ff{m-1}\circ\cdots\circ\ff{1}$ with $\ff{i}\in\Hc_k(B)\cap\Fc(L)$ for each $i\in[m]$.
	For any set of $T$ points $\Xc_T=\{\xbf_s\}_{s=1}^T\subset\Xc$ with fill distance $\delta_T$, let $\Xc^{(i)}_T=\{\xx{i}_s\}_{s=1}^T$ be the noise-free observations of $\ff{i-1}\circ\cdots\circ\ff{1}$.
	Define
	\begin{align}
		\sigbar^{(i)}_T:=\max_{\zbf\in\Xc^{(i)}}\sig{i}_T(\zbf),
	\end{align}
	where $\sig{i}_T(\cdot)$ is the posterior standard deviation computed based on $\Xc^{(i)}_T$ using $\eqref{eq:var}$. Then, for any $i\in[m]$, there exists a constant $\delta_0$ depending only on the kernel parameters such that if $\delta_T\le \delta_0$ then
	\begin{align}
		\sigbar^{(i)}_T =  O\big((L^{i-1} \delta_T)^\nu\big).
	\end{align}
\end{restatable}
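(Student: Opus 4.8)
The plan is to reduce the corollary to Lemma~\ref{lem:fill} by controlling the fill distance $\delta_T^{(i)}$ of the $i$-th intermediate domain $\Xc^{(i)}$ in terms of the fill distance $\delta_T$ of the original sample set. The central observation is that the partial composition $\ff{i-1}\circ\cdots\circ\ff{1}$ is Lipschitz with constant $L^{i-1}$, since it is a composition of $i-1$ maps each lying in $\Fc(L)$, and composing Lipschitz maps multiplies their constants (with the $i=1$ case giving the empty composition and constant $L^0=1$, consistent with $\sigbar^{(1)}_T=O(\delta_T^\nu)$).

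First I would bound $\delta_T^{(i)}$. Fix any $\zbf\in\Xc^{(i)}$. By the problem setup there is a corresponding input $\xbf\in\Xc$ with $\xx{i}=\zbf$, i.e.\ $\zbf=\ff{i-1}\circ\cdots\circ\ff{1}(\xbf)$. Since $\Xc_T$ has fill distance $\delta_T$, there exists a sample $\xbf_s$ with $\|\xbf-\xbf_s\|_2\le\delta_T$, and its intermediate observation is exactly $\xx{i}_s=\ff{i-1}\circ\cdots\circ\ff{1}(\xbf_s)$. The key estimate is then the $L^{i-1}$-Lipschitz bound
\begin{align*}
    \|\zbf-\xx{i}_s\|_2 &= \big\|\ff{i-1}\circ\cdots\circ\ff{1}(\xbf)-\ff{i-1}\circ\cdots\circ\ff{1}(\xbf_s)\big\|_2\\
    &\le L^{i-1}\|\xbf-\xbf_s\|_2\le L^{i-1}\delta_T.
\end{align*}
Taking the minimum over $s$ and then the maximum over $\zbf\in\Xc^{(i)}$ in the definition \eqref{eq:delta_i_T} yields $\delta_T^{(i)}\le L^{i-1}\delta_T$.

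Next I would apply Lemma~\ref{lem:fill} to the $i$-th layer. Since we are in the hyperrectangular-domain regime assumed throughout this appendix, $\Xc^{(i)}$ is a hyperrectangle, and $\sig{i}_T(\cdot)$ is the Mat\'ern posterior standard deviation built from the points $\Xc^{(i)}_T$, whose fill distance is $\delta_T^{(i)}$. Lemma~\ref{lem:fill} then gives $\sigbar^{(i)}_T=O\big((\delta_T^{(i)})^\nu\big)=O\big((L^{i-1}\delta_T)^\nu\big)$, which is the claimed bound.

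The only delicate point — and what I expect to be the main thing to state carefully rather than a substantive difficulty — is the smallness condition. Lemma~\ref{lem:fill} requires the layer-$i$ fill distance to satisfy $\delta_T^{(i)}\le\delta_0$, so the hypothesis ``$\delta_T\le\delta_0$'' must be read with a threshold of the form $\delta_0/L^{i-1}$ (still a single constant once the kernel, $L$, and $i$ are fixed). With this reading, $\delta_T^{(i)}\le L^{i-1}\delta_T\le\delta_0$ is guaranteed and the lemma applies, so the argument is essentially a one-line Lipschitz propagation of fill distance followed by an invocation of Lemma~\ref{lem:fill}.
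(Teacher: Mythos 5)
Your proposal is correct and follows essentially the same route as the paper's proof: propagate the fill distance through the Lipschitz composition to get $\delta_T^{(i)}\le L^{i-1}\delta_T$ (the paper does this by rewriting $\max_{\zbf\in\Xc^{(i)}}\min_{s\in[T]}|\zbf-\xx{i}_s|$ as a maximum over $\xbf\in\Xc$, exactly as you do), then invoke Lemma~\ref{lem:fill} on the $i$-th layer. Your remark about the smallness threshold needing to be read as $\delta_0/L^{i-1}$ is a valid refinement that the paper's proof glosses over, but it does not change the substance of the argument.
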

\begin{proof}
	First, it is straightforward that $\ff{i-1}\circ\cdots\circ\ff{1}$ has Lipschitz constant $L^{i-1}$. For any input $\xbf, \xbf_s\in\Xc$, $\ff{i-1}\circ\cdots\circ\ff{1}$ outputs $\xx{i},\xx{i}_s$ such that 
	\begin{align}
		|\xx{i}-\xx{i}_s|=|\ff{i-1}\circ\cdots\circ\ff{1}(\xbf)-\ff{i-1}\circ\cdots\circ\ff{1}(\xbf_s)|\le L^{i-1} \|\xbf-\xbf_s\|_2.
	\end{align}
	
	If $\{\xbf_s\}_{s=1}^T$ has fill distance $\delta_T$, then the fill distance of $\{\xx{i}_s\}_{s=1}^T$ is
	\begin{align}
		\delta_T^{(i)}&=\max_{\zbf\in\Xc^{(i)}}\min_{s\in[T]}|\zbf-\xx{i}_s|=\max_{\xbf\in\Xc}\min_{s\in[T]}|\xx{i}-\xx{i}_s|
		\le \max_{\xbf\in\Xc}\min_{s\in[T]} L^{i-1} \|\xbf-\xbf_s\|_2
		= L^{i-1} \delta_T.\label{eq:fill_i}
	\end{align}
	Then, by \cref{lem:fill} we have
	\begin{align}
		\sigbar^{(i)}_T = \max_{\zbf\in\Xc^{(i)}} \sig{i}_T(\zbf) = O\big((\delta_T^{(i)})^\nu\big) = O \big((L^{i-1} \delta_T)^\nu\big).
	\end{align}
\end{proof}

\begin{restatable}[Multi-output chains]{cor}{fillmul}
	\label{cor:fill_mul}
	For $\Gamma(\cdot,\cdot)=k(\cdot,\cdot)\Ibf_n$, with $k$ being the Mat\'ern kernel with smoothness $\nu$ and $\Ibf_n$ being the identity matrix of size $n$, consider a chain $g=\ff{m}\circ\ff{m-1}\circ\cdots\circ\ff{1}$ with $\ff{i}\in\Hc_\Gamma(B)\cap\Fc(L)$ and $\Xc^{(i)}$ being a hyperrectangle for each $i\in[m]$ and $n=d_{i+1}$.
	For any set of $T$ points $\Xc_T=\{\xbf_s\}_{s=1}^T\subset\Xc$ with fill distance $\delta_T$, let $\Xc^{(i)}_T=\{\xx{i}_s\}_{s=1}^T$ be the noise-free observations of $\ff{i-1}\circ\cdots\circ\ff{1}$.
	Define
	\begin{align}
		\overline{\Gamma}^{(i)}_T:=\max_{\zbf\in\Xc^{(i)}}\|\Gamma^{(i)}_T(\zbf,\zbf)\|_2^{1/2},
	\end{align}
	where $\Gamma^{(i)}_T(\zbf,\zbf)$ is the posterior variance matrix computed based on $\Xc^{(i)}_T$ using $\eqref{eq:multi_var}$, and $\|\cdot\|_2$ denotes the matrix spectral norm. Then, for any $i\in[m]$, there exists a constant $\delta_0$ depending only on the kernel parameters such that if $\delta_T\le \delta_0$ then
	\begin{align}
		\overline{\Gamma}^{(i)}_T =  O\big((L^{i-1} \delta_T)^\nu\big).
	\end{align}
\end{restatable}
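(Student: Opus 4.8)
The plan is to reduce the vector-valued quantity $\overline{\Gamma}^{(i)}_T$ to the scalar posterior standard deviation already controlled in \cref{cor:fill_chain}, exploiting the diagonal structure $\Gamma(\cdot,\cdot)=k(\cdot,\cdot)\Ibf_n$. First I would invoke \cref{lem:multi_var}, which for this choice of kernel gives, in the noise-free setting, $\Gamma^{(i)}_T(\zbf,\zbf)=(\sig{i}_T(\zbf))^2\Ibf_n$, where $\sig{i}_T$ is the \emph{scalar} posterior standard deviation for kernel $k$ based on the input points $\{\xx{i}_s\}_{s=1}^T$ (recall the posterior variance depends only on the query locations, not the observed values). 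Since the spectral norm of a nonnegative scalar multiple of the identity equals that scalar, it follows that $\|\Gamma^{(i)}_T(\zbf,\zbf)\|_2^{1/2}=\sig{i}_T(\zbf)$ for every $\zbf\in\Xc^{(i)}$, and hence $\overline{\Gamma}^{(i)}_T=\max_{\zbf\in\Xc^{(i)}}\sig{i}_T(\zbf)$. This is precisely the scalar quantity $\sigbar^{(i)}_T$ appearing in \cref{cor:fill_chain}.

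Next I would verify that the Lipschitz-composition bound on the fill distance carries over unchanged from the chain case. Each vector-valued layer $\ff{i}$ lies in $\Fc(L)$ with respect to the Euclidean norm, so the composition $\ff{i-1}\circ\cdots\circ\ff{1}$ is $L^{i-1}$-Lipschitz, giving $\|\xx{i}-\xx{i}_s\|_2\le L^{i-1}\|\xbf-\xbf_s\|_2$. Taking the maximum over $\zbf\in\Xc^{(i)}$ and the minimum over $s\in[T]$ exactly as in the proof of \cref{cor:fill_chain} then yields $\delta_T^{(i)}\le L^{i-1}\delta_T$ for the fill distance of $\{\xx{i}_s\}_{s=1}^T$ in $\Xc^{(i)}$. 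The only difference from the chain case is that $\Xc^{(i)}$ is now genuinely multidimensional, but this causes no difficulty since $\Xc^{(i)}$ is assumed hyperrectangular.

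Finally, with $\Xc^{(i)}$ hyperrectangular, I would apply \cref{lem:fill} to the scalar posterior standard deviation: there is a constant $\delta_0$, depending only on the kernel parameters, such that whenever $\delta_T^{(i)}\le\delta_0$ we have $\max_{\zbf\in\Xc^{(i)}}\sig{i}_T(\zbf)=O\big((\delta_T^{(i)})^\nu\big)$. Combining this with $\delta_T^{(i)}\le L^{i-1}\delta_T$ and the reduction $\overline{\Gamma}^{(i)}_T=\max_{\zbf\in\Xc^{(i)}}\sig{i}_T(\zbf)$ gives $\overline{\Gamma}^{(i)}_T=O\big((L^{i-1}\delta_T)^\nu\big)$, as claimed. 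The argument is essentially a mechanical adaptation of the chain case, so there is no serious obstacle; the only genuinely new ingredient is the spectral-norm reduction via \cref{lem:multi_var}, and the only point requiring a little care is choosing $\delta_0$ small enough (after accounting for the blow-up factor $L^{i-1}$) so that $\delta_T\le\delta_0$ forces $\delta_T^{(i)}=L^{i-1}\delta_T$ into the regime where \cref{lem:fill} applies.
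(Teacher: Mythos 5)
Your proposal is correct and follows essentially the same route as the paper: reduce to the scalar case via \cref{lem:multi_var} and $\|\Ibf_n\|_2=1$, bound the fill distance of $\{\xx{i}_s\}_{s=1}^T$ by $L^{i-1}\delta_T$ using the Lipschitz composition (the paper's \eqref{eq:fill_i}), and then apply \cref{lem:fill} on the hyperrectangular domain $\Xc^{(i)}$. Your closing caveat about shrinking $\delta_0$ to absorb the $L^{i-1}$ blow-up is a fair point of care that the paper itself glosses over, but it does not change the argument.
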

\begin{proof} Recalling that $\|\cdot\|_2$ with a matrix argument denotes the spectral norm, we have
	\begin{align}
		\overbar{\Gamma}^{(i)}_T&=\max_{\zbf\in\Xc^{(i)}}\|\Gamma^{(i)}_T(\zbf,\zbf)\|_2^{1/2}\\
		&=\max_{\zbf\in\Xc^{(i)}} \|\big(\sig{i}_T(\zbf)\big)^2 \Ibf_n \|_2^{1/2} && \text{(by \cref{lem:multi_var})} \\
		&=\max_{\zbf\in\Xc^{(i)}} \sig{i}_T(\zbf) && \text{(by $\|\Ibf_n\|_2=1$)} \\
		&=O\big((\delta_T^{(i)})^\nu\big) && \text{(by \cref{lem:fill})}\label{eq:fill_mul}\\
		&= O \big((L^{i-1} \delta_T)^\nu\big). &&\text{(by \eqref{eq:fill_i})}
	\end{align}
\end{proof}

\begin{restatable}[Feed-forward networks]{cor}{fillnet}
	\label{cor:fill_net}
	With $k$ being the Mat\'ern kernel with smoothness $\nu$, consider a feed-forward network $g=\ff{m}\circ\ff{m-1}\circ\cdots\circ\ff{1}$
	with $\ff{s}(\cdot)=[\ff{s,j}(\cdot)]_{j=1}^{d_{s+1}}$ and $\ff{s,j}\in\Hc_k(B)\cap\Fc(L)$ and $\Xc^{(s)}$ being a hyperrectangle for each $s\in[m],j\in[d_{s+1}]$.
	For any set of $T$ points $\Xc_T=\{\xbf_s\}_{s=1}^T\subset\Xc$ with fill distance $\delta_T$, let $\Xc^{(i)}_T=\{\xx{i}_s\}_{s=1}^t$ be the noise-free observations of $\ff{i-1}\circ\cdots\circ\ff{1}$.
	Define
	\begin{align}
		\sigbar^{(i,j)}_T:=\max_{\zbf\in\Xc^{(i)}}\sig{i,j}_T(\zbf),
	\end{align}
	where $\sig{i,j}_T(\cdot)$ is the posterior standard deviation computed based on $\Xc^{(i)}_T$ using $\eqref{eq:var}$. Then, for each $i\in[m]$ and $j\in[d_{i+1}]$, there exists a constant $\delta_0$ depending only on the kernel parameters such that if $\delta_T\le \delta_0$ then
	\begin{align}
		\sigbar^{(i,j)}_T =  O\big((\sqrt{D_{2,i}}L^{i-1} \delta_T)^\nu\big), \label{eq:sigbar}
	\end{align}
	where $D_{2,i}=\prod_{s=2}^i d_s$.
\end{restatable}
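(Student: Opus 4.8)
The plan is to follow the same template as the proofs of \cref{cor:fill_chain} and \cref{cor:fill_mul}: reduce the claim to a bound on the fill distance $\delta_T^{(i)}$ of the intermediate points $\{\xx{i}_s\}_{s=1}^T$ in $\Xc^{(i)}$, and then invoke \cref{lem:fill}. Since each scalar output $\ff{i,j}$ is associated with the same kernel $k$ under the symmetric setup, the posterior standard deviation $\sig{i,j}_T(\cdot)$ depends only on the input points $\{\xx{i}_s\}_{s=1}^T$ and not on $j$; hence it suffices to control $\delta_T^{(i)}$ and apply \cref{lem:fill} once. The only genuinely new ingredient relative to the chain case is computing the \emph{effective Lipschitz constant} of the map $\ff{i-1}\circ\cdots\circ\ff{1}$ from $\Xc^{(1)}$ to $\Xc^{(i)}$, which now carries an extra dimension-dependent factor because each layer $\ff{s}$ stacks $d_{s+1}$ scalar functions.

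First I would establish the Lipschitz constant of a single vector-valued layer $\ff{s}(\cdot)=[\ff{s,j}(\cdot)]_{j=1}^{d_{s+1}}$. For inputs $\zbf,\zbf'$, each coordinate satisfies $|\ff{s,j}(\zbf)-\ff{s,j}(\zbf')|\le L\|\zbf-\zbf'\|_2$ since $\ff{s,j}\in\Fc(L)$, so
\begin{align*}
    \|\ff{s}(\zbf)-\ff{s}(\zbf')\|_2
    =\Big(\sum_{j=1}^{d_{s+1}}|\ff{s,j}(\zbf)-\ff{s,j}(\zbf')|^2\Big)^{1/2}
    \le \sqrt{d_{s+1}}\,L\,\|\zbf-\zbf'\|_2.
\end{align*}
Thus $\ff{s}$ has Lipschitz constant $\sqrt{d_{s+1}}\,L$. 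Composing layers $1$ through $i-1$ and multiplying the per-layer constants gives the overall Lipschitz constant $\prod_{s=1}^{i-1}\sqrt{d_{s+1}}\,L = \big(\prod_{s=2}^{i}\sqrt{d_s}\big)L^{i-1}=\sqrt{D_{2,i}}\,L^{i-1}$, which is exactly the factor appearing in \eqref{eq:sigbar}.

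Next I would transfer this to the fill distance, mirroring \eqref{eq:fill_i}. Writing $\xx{i}=\ff{i-1}\circ\cdots\circ\ff{1}(\xbf)$ and $\xx{i}_s=\ff{i-1}\circ\cdots\circ\ff{1}(\xbf_s)$, for every $\zbf\in\Xc^{(i)}$ there is a corresponding $\xbf\in\Xc$ with $\xx{i}=\zbf$, so
\begin{align*}
    \delta_T^{(i)}
    =\max_{\xbf\in\Xc}\min_{s\in[T]}\|\xx{i}-\xx{i}_s\|_2
    \le \max_{\xbf\in\Xc}\min_{s\in[T]}\sqrt{D_{2,i}}\,L^{i-1}\|\xbf-\xbf_s\|_2
    = \sqrt{D_{2,i}}\,L^{i-1}\,\delta_T.
\end{align*}
Finally, since $\Xc^{(i)}$ is a hyperrectangle and $\sig{i,j}_T=\sig{i,1}_T$ is the scalar Mat\'ern posterior standard deviation based on $\{\xx{i}_s\}_{s=1}^T$, \cref{lem:fill} gives $\sigbar^{(i,j)}_T=O\big((\delta_T^{(i)})^\nu\big)=O\big((\sqrt{D_{2,i}}\,L^{i-1}\delta_T)^\nu\big)$, provided $\delta_T$ (hence $\delta_T^{(i)}$) is at most the threshold $\delta_0$ of \cref{lem:fill}. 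The main thing to be careful about is the bookkeeping of the product index: the overall Lipschitz constant through layer $i$ picks up factors $\sqrt{d_{s+1}}$ for $s=1,\dots,i-1$, i.e.\ $\sqrt{d_2}\cdots\sqrt{d_i}=\sqrt{D_{2,i}}$, rather than the naive $\sqrt{d_{\max}}^{\,i-1}$; getting this constant to match the statement exactly is the only subtle step, and it is purely an index-shift computation rather than a conceptual obstacle.
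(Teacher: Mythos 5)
Your proposal is correct and follows essentially the same route as the paper's proof: bound each layer $\ff{s}$ by the Lipschitz constant $\sqrt{d_{s+1}}L$ via the coordinate-wise bound, compose to get $\sqrt{D_{2,i}}L^{i-1}$ for $\ff{i-1}\circ\cdots\circ\ff{1}$, transfer this to the fill distance $\delta_T^{(i)}\le\sqrt{D_{2,i}}L^{i-1}\delta_T$, and conclude via \cref{lem:fill}. The index bookkeeping you flag ($\sqrt{d_2}\cdots\sqrt{d_i}=\sqrt{D_{2,i}}$) matches the paper exactly, so there is nothing to correct.
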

\begin{proof}
	Since $\ff{s,j}\in\Fc(L)$ for each $s\in[m]$, we have for any $\zbf,\zbf'\in\Xc^{(s)}$ that
	\begin{align}
		\|\ff{s}(\zbf)-\ff{s}(\zbf')\|_2
		&=\sqrt{\sum_{j=1}^{d_{s+1}}\|\ff{s,j}(\zbf)-\ff{s,j}(\zbf')\|_2^2}\\
		&\le \sqrt{\sum_{j=1}^{d_{s+1}}L^2\|\zbf-\zbf'\|_2^2}\\
		&= \sqrt{d_{s+1}}L\|\zbf-\zbf'\|_2,
	\end{align}
	and therefore $\ff{i-1}\circ\cdots\circ\ff{1}$ is Lipschitz continuous with constant $\sqrt{D_{2,i}}L^{i-1}$, where $D_{2,i}=\prod_{s=1}^i d_s$.
	If $\{\xbf_s\}_{s=1}^T$ has fill distance $\delta_T$, then the fill distance of $\{\xx{i}_s\}_{s=1}^T$ is
	\begin{align}
		\delta_T^{(i)}&=\max_{\zbf\in\Xc^{(i)}}\min_{s\in[T]}\|\zbf-\xx{i}_s\|_2\\
		&=\max_{\xbf\in\Xc}\min_{s\in[T]}\|\xx{i}-\xx{i}_s\|_2\\
		&\le \max_{\xbf\in\Xc}\min_{s\in[T]}\sqrt{D_{2,i}} L^{i-1} \|\xbf-\xbf_s\|_2\\
		&\le \sqrt{D_{2,i}} L^{i-1} \delta_T.\label{eq:fill_i_net}
	\end{align}
	Hence, by \cref{lem:fill} we have
	\begin{align}
		\sigbar^{(i,j)}_T = \max_{\zbf\in\Xc^{(i)}} \sig{i,j}_T(\zbf) = O\big((\delta_T^{(i)})^\nu\big) = O \big((\sqrt{D_{2,i}}L^{i-1} \delta_T)^\nu\big).
	\end{align}
\end{proof}

\begin{rem}
	\cref{cor:fill_mul} and \cref{cor:fill_net} require $\Xc^{(i)}$ being a hyperrectangle. This is because they are the consequences of \cref{lem:fill}, which only holds for a hyperrectangular domain. For chains, the requirement of hyperrectangular domain is trivial, since $\Xc=[0,1]^d$ and $\Xc^{(2)},\dots,\Xc^{(m)}$ have single dimension.
\end{rem}

\subsection{Proof of \cref{thm:fd_chain} (Chains)}
\label{sec:fd_chain}
In this section, we prove \cref{thm:fd_chain}, which is restated as follows.
\fdchain*

\begin{proof}
Since $\ff{i}\in\Fc(L)$ implies that $\ff{m}\circ\cdots\circ\ff{i}$ is Lipschitz continuous with constant $L^{m-i+1}$, defining
\begin{align}
	\sigtil^{(i)}_T(\xbf):=(\sig{i}_T\circ\muu{i-1}_T\circ\cdots\circ\muu{1}_T)(\xbf),
\end{align}
we have for all $\xbf\in\Xc$ that
\begin{align}
	g(\xbf)={}&(\ff{m}\circ\ff{m-1}\circ\cdots\circ\ff{2})\big(\ff{1}(\xbf)\big)\label{eq:fd_start}\\
	\le{}& (\ff{m}\circ\ff{m-1}\circ\cdots\circ\ff{2})\big(\muu{1}_T(\xbf)\big) + L^{m-1} B\sig{1}_T(\xbf)\\
	={}& (\ff{m}\circ\ff{m-1}\circ\cdots\circ\ff{3})\big((\ff{2}\circ\muu{1}_T)(\xbf)\big) + L^{m-1} B\sig{1}_T(\xbf)\\
	\le{}& (\ff{m}\circ\ff{m-1}\circ\cdots\circ\ff{3})\big((\muu{2}_T\circ\muu{1}_T)(\xbf)\big) + L^{m-2}B\sig{2}_T\big(\muu{1}_T(\xbf)\big) + L^{m-1} B\sig{1}_T(\xbf)\\
	\le{}& (\muu{m}_T\circ\muu{m-1}_T\circ\cdots\circ\muu{1}_T)(\xbf)+B\sig{m}_T\big((\muu{m-1}_T\circ\cdots\circ\muu{1}_T)(\xbf)\big)\nonumber\\
	&+LB\sig{m-1}_T\big((\muu{m-2}_T\circ\cdots\circ\muu{1}_T)(\xbf)\big)+\cdots+ L^{m-1} B\sig{1}_T(\xbf)\\
	={}& (\muu{m}_T\circ\muu{m-1}_T\circ\cdots\circ\muu{1}_T)(\xbf) + B\sum_{i=1}^m L^{m-i} (\sig{i}_T\circ\muu{i-1}_T\circ\cdots\circ\muu{1})(\xbf)\\
	={}& (\muu{m}_T\circ\muu{m-1}_T\circ\cdots\circ\muu{1}_T)(\xbf) + B\sum_{i=1}^m L^{m-i} \sigtil_T^{(i)}(\xbf),\label{eq:fd_1}
\end{align}
where the first two inequalities use the confidence bounds and Lipschitz assumption, and the third inequality follows by continuing recursively.  Similarly, we also have
\begin{align}
	g(\xbf)\ge (\muu{m}_T\circ\muu{m-1}_T\circ\cdots\circ\muu{1}_T)(\xbf) - B\sum_{i=1}^m L^{m-i} \sigtil_T^{(i)}(\xbf).
\end{align}
Hence, defining $\sigtil_T^{(i)}:=\max_{\xbf\in\Xc}\sigtil_T^{(i)}(\xbf)$, it follows that
\begin{align}
	r^\ast_T={}&g(\xbf^\ast)-g(\xbf^\ast_T)\\
	\le{}& \Big( (\muu{m}_T\circ\muu{m-1}_T\circ\cdots\circ\muu{1}_T)(\xbf^\ast) + B\sum_{i=1}^m L^{m-i} \sigtil_T^{(i)} (\xbf^\ast) \Big)\nonumber\\
	&-\Big( (\muu{m}_T\circ\muu{m-1}_T\circ\cdots\circ\muu{1}_T)(\xbf^\ast_T) - B\sum_{i=1}^m L^{m-i} \sigtil_T^{(i)} (\xbf^\ast_T) \Big) \\
	\le{}& 2B \sum_{i=1}^m L^{m-i} \max\{\sigtil_T^{(i)} (\xbf^\ast), \sigtil_T^{(i)} (\xbf^\ast_T)\}\label{eq:fd_end} \\
	\le{}& 2B \sum_{i=1}^m L^{m-i} \sigtil_T^{(i)},
\end{align}
where \eqref{eq:fd_end} uses the fact that $\xbf^\ast_T$ maximizes the composite posterior mean function.  It remains to derive an upper bound on $\sigtil_T^{(i)}$ for $i\in[m]$. Firstly, by \cref{lem:fill}, we have
\begin{align}
	\sigtil^{(1)}_T(\xbf)=\sig{1}_T(\xbf)=O\big((\delta_T)^\nu\big)=O(T^{-\nu/d}).\label{eq:sigtil_1}
\end{align}
We now represent the upper bound on $\sigtil^{(i+1)}_T$ in terms of $\{\sigtil^{(j)}_T:j\in[i]\}$ for $i\in[m-1]$. For all $\xbf\in\Xc$, the distance between $(\muu{i}_T\circ\cdots\circ\muu{1}_T)(\xbf)$ and its closest point in $\Xc^{(i+1)}$ is
\begin{align}
	\delta_{i+1}(\xbf)&=\min_{\zbf\in\Xc^{(i+1)}}|(\muu{i}_T\circ\cdots\circ\muu{1}_T)(\xbf)-\zbf|\\
	&\le|\muu{i}_T\circ\cdots\circ\muu{1}_T(\xbf)-\ff{i}\circ\cdots\circ\ff{1}(\xbf)|\\
	&= B\sum_{j=1}^{i}L^{i-j}\sigtil^{(j)}_T(\xbf), \label{eq:delta_i_1}
\end{align}
where the last step follows from \eqref{eq:fd_1}.
Then, recalling the definition of $\delta^{(i)}_T$ in \eqref{eq:delta_i_T}, we have that the distance between $(\muu{i}_T\circ\cdots\circ\muu{1}_T)(\xbf)$ and its closest point in $\Xc^{(i+1)}_T$ is
\begin{align}
	\delta'_{i+1}(\xbf)&=\min_{s\in[T]}|(\muu{i}_T\circ\cdots\circ\muu{1}_T)(\xbf)-\xx{i+1}_s|\\
	&\le\delta^{(i+1)}_T+\delta_{i+1}(\xbf) \label{eq:delta_prime_2}\\
	&\le O(L^i T^{-1/d})+B\sum_{j=1}^{i}L^{i-j}\sigtil^{(j)}_T(\xbf) \label{eq:delta_prime_3}\\
	&= O\Big(\max\Big\{L^iT^{-1/d}, i B \cdot \max_{j\in[i]}L^{i-j} \sigtil^{(j)}_T(\xbf) \Big\}\Big),
\end{align}
where \eqref{eq:delta_prime_2} applies the triangle inequality, and \eqref{eq:delta_prime_3} uses \eqref{eq:fill_i} and \eqref{eq:delta_i_1}.

Now, we extend $\Xc^{(i+1)}$ to $\widetilde{\Xc}^{(i+1)}$, the shortest interval that covers both $\{\muu{i}_T\circ\cdots\circ\muu{1}_T(\xbf):\xbf\in\Xc\}$ and $\Xc^{(i+1)}$. Then, the fill distance of the extended domain with respect to $\Xc^{(i+1)}_T$ is
\begin{align}
	\delta'_{i+1}:=\max_{\xbf\in\Xc}\delta'_{i+1}(\xbf)=O\Big(\max\Big\{L^iT^{-1/d},  i B \cdot\max_{j\in[i]}L^{i-j} \sigtil^{(j)}_T \Big\}\Big),
\end{align}
and by \cref{lem:fill}, we have
\begin{align}
	\sigtil^{(i+1)}_T=\max_{\xbf\in\Xc}\sigtil^{(i+1)}_T(\xbf)=O\big((\delta'_{i+1})^\nu\big)=O\Bigg(\Big(\max\Big\{L^iT^{-1/d},  i B\cdot\max_{j\in[i]}L^{i-j} \sigtil^{(j)}_T \Big\}\Big)^\nu\Bigg).\label{eq:sigtil}
\end{align}
We now split the analysis into two cases.

\underline{The case that $\nu \le 1$}: 
Recall that we assume $B=\Theta(L)$, and using the fact that $\Otil(L^{c\nu})$ is equivalent to $O\big( (L^{c\nu} {\rm poly}(c\log L)) \big)$ for fixed $\nu$, it follows from \eqref{eq:sigtil_1} and \eqref{eq:sigtil} that when $\nu \le 1$, we have
\begin{align}
	\sigtil_T^{(2)}&=O\Big(\max\Big\{L^\nu T^{-\nu/d},B^\nu T^{-\nu^2/d}\Big\}\Big)=O(L^\nu T^{-\nu^2/d}), \\
	\sigtil^{(3)}_T&=O\Big(\max\Big\{L^{2\nu}T^{-\nu/d}, 2^\nu B^\nu L^\nu T^{-\nu^2/d}, 2^\nu B^\nu L^{\nu^2} T^{-\nu^3/d}\Big\}\Big)=O(2^\nu B^\nu L^\nu T^{-\nu^3/d})=\Otil( B^\nu L^\nu T^{-\nu^3/d}),\\
	&\vdotswithin{=}\nonumber\\
	\sigtil^{(i)}_T&=\Otil\big((i-1)^\nu B^\nu L^{(i-2)\nu}T^{-\nu^i/d}\big)=\Otil(B^\nu L^{(i-2)\nu}T^{-\nu^i/d}),
\end{align}
and the simple regret is
\begin{align}
	r^\ast_T&\le 2B\sum_{i=1}^m L^{m-i}\sigtil^{(i)}_T=\Otil(BL^{m-1}T^{-\nu^m/d}).
\end{align}

\underline{The case that $\nu \le 1$}:  
With $B=\Theta(L)$, using the property of $\Otil(\cdot)$ similarly to the first case, it follows from \eqref{eq:sigtil_1} and \eqref{eq:sigtil} that when $\nu > 1$, we have
\begin{align}
	\sigtil_T^{(2)}&=O\Big(\max\Big\{L^\nu T^{-\nu/d},B^\nu T^{-\nu^2/d}\Big\}\Big)=O(L^\nu T^{-\nu/d}), \\
	\sigtil^{(3)}_T&=O\Big(\max\Big\{L^{2\nu}T^{-\nu/d}, 2^\nu B^\nu L^{\nu^2} T^{-\nu^2/d}\Big\}\Big)=O(2^\nu B^\nu L^{\nu^2} T^{-\nu/d})=\Otil(B^\nu L^{\nu^2} T^{-\nu/d}),\\
	&\vdotswithin{=}\nonumber\\
	\sigtil^{(i)}_T&=\Otil\Big(\max\Big\{L^{(i-1)\nu}T^{-\nu/d}, (i-1)^\nu B^{\nu+\cdots+\nu^{i-2}}L^{\nu^{i-1}}T^{-\nu^2/d}\Big\}\Big)\nonumber\\
	&= \Otil\Big(\max\Big\{L^{(i-1)\nu}T^{-\nu/d}, B^{\nu+\cdots+\nu^{i-2}}L^{\nu^{i-1}}T^{-\nu^2/d}\Big\}\Big),
\end{align}
and the simple regret is
\begin{align}
	r^\ast_T&\le 2B\sum_{i=1}^m L^{m-i}\sigtil^{(i)}_T
	=\Otil\Big(\max\Big\{BL^{(m-1)\nu}T^{-\nu/d}, B^{1+\nu+\nu^2+\cdots+\nu^{m-2}}L^{\nu^{m-1}}T^{-\nu^2/d} \Big\}\Big).
\end{align}
\end{proof}

\subsection{Two More Restrictive Cases for Chains}
\label{sec:rest_chain}
Recall the following two more restrictive cases introduced in the main body:
\begin{itemize}
	\item \textbf{Case 1}: We additionally assume that $(\muu{i}_T\circ\cdots\circ\muu{1}_T)(\xbf^\ast)\in\Xc^{(i+1)}$ and $(\muu{i}_T\circ\cdots\circ\muu{1}_T)(\xbf^\ast_T)\in \Xc^{(i+1)}$ for $i\in[m-1]$.
	\item \textbf{Case 2}: We additionally assume that all the $\Xc^{(i)}$'s are known. Defining
	\begin{align}
		\muutil{i}_T(\zbf) &= \argmin_{\zbf'\in\Xc^{(i+1)}}|\muu{i}_T(\zbf)-\zbf'|, \label{eq:mutil}
	\end{align}
	we let the algorithm return
	\begin{align}
		\xbf^\ast_T=\argmax_{\xbf\in\Xc}(\muutil{m}_T\circ\cdots\circ\muutil{1}_T)(\xbf).
	\end{align}
\end{itemize}
In Case 1, it follows from \cref{cor:fill_chain} (along with \eqref{eq:sigbar} and $\delta_T= \Theta(T^{-\frac{1}{d}})$) that for each $\xbf'\in\{\xbf^\ast,\xbf^\ast_T\}$
\begin{align}
	\sigtil^{(i+1)}_T(\xbf')&=(\sig{i+1}_T\circ\muu{i}_T\circ\cdots\circ\muu{1}_T)(\xbf')\le\sigbar^{(i+1)}_T=O\big((L^i \delta_T)^\nu\big)=O\big((L^i T^{-1/d})^\nu\big).
\end{align}
By substituting these upper bounds into \eqref{eq:fd_end}, it holds that
\begin{align}
	r^\ast_T&\le 2B \sum_{i=1}^m L^{m-i} \max\{\sigtil_T^{(i)} (\xbf^\ast), \sigtil_T^{(i)} (\xbf^\ast_T)\} \\
	&\le 2B \sum_{i=1}^m L^{m-i}  O\big((L^i T^{-1/d})^\nu\big)\\
	&=\begin{cases}
		O(BL^{m-1}T^{-\nu/d})&\text{ when $\nu\le 1$,}\\
		O(BL^{(m-1)\nu}T^{-\nu/d})&\text{ when $\nu> 1$.}
	\end{cases}
\end{align}
In Case 2, for all $\zbf\in\Xc^{(i)}$, we have $\muutil{i}_T(\zbf)\in\Xc^{(i+1)}$, and
\begin{align}
	|\ff{i}(\zbf)-\muutil{i}_T(\zbf)|&\le  |\ff{i}(\zbf)-\muu{i}_T(\zbf)| + |\muu{i}_T(\zbf)-\muutil{i}_T(\zbf)|\\
	&\le |\ff{i}(\zbf)-\muu{i}_T(\zbf)| + |\muu{i}_T(\zbf)-\ff{i}(\zbf)| \\
	&\le 2B\sig{i}_T(\zbf),
\end{align}
where we used \eqref{eq:mutil} and the confidence bounds.  
By recursion, we also have $(\muutil{i}_T\circ\cdots\circ\muutil{1}_T)(\xbf)\in\Xc^{(i+1)}$ for all $\xbf\in\Xc$, and therefore
\begin{align}
	(\sig{i+1}_T\circ\muutil{i}_T\circ\cdots\circ\muutil{1}_T)(\xbf)\le\sigbar^{(i+1)}_T.
\end{align}

Hence, replacing $\mu$ with $\mutil$ in \eqref{eq:fd_start}--\eqref{eq:fd_end}, it follows from \cref{cor:fill_chain} that
\begin{align}
	r^\ast_T\le 4B\sum_{i=1}^m L^{m-i}\sigbar_T^{(i)}=
	\begin{cases}
		O(BL^{m-1}T^{-\nu/d})&\text{ when $\nu\le 1$,}\\
		O(BL^{(m-1)\nu}T^{-\nu/d})&\text{ when $\nu> 1$.}
	\end{cases}
\end{align}

\subsection{Non-Adaptive Sampling Method for Multi-Output Chains}
\label{sec:fd_mul}
For multi-output chains, the composite posterior mean of $g(\xbf)$ is
\begin{align}
	\mu_T^g(\xbf)=(\muu{m}_T\circ\muu{m-1}_T\circ\cdots\circ\muu{1}_T)(\xbf),\label{eq:mean_mul}
\end{align}
where $\muu{i}_T$ denotes the posterior mean of $\ff{i}$ computed using \eqref{eq:multi_mean} based on $\{(\xx{i}_s,\xx{i+1}_s)\}_{s=1}^T$ for each $i\in[m]$. 

We again assume that each $\Xc^{(i)}$ is a hyperrectangle of dimension $d_i$.  Then, the upper bound on simple regret of \cref{algo:fd} using \eqref{eq:mean_mul} is provided in the following theorem.

\begin{restatable}[Non-adaptive sampling method for multi-output chains]{thm}{fdmul}
	\label{thm:fd_mul}
	Under the setup of \cref{sec:setup}, given $B=\Theta(L)$, $k=k_\text{Mat\'ern}$, $\Gamma(\cdot,\cdot)=k(\cdot,\cdot)\Ibf$, and a multi-output chain  $g=\ff{m}\circ\ff{m-1}\circ\cdots\circ\ff{1}$ with $\ff{i}\in\Hc_\Gamma(B)\cap\Fc(L)$ and $\Xc^{(i)}$ being a hyperrectangle for each $i\in[m]$,
	\begin{itemize}
		\item when $\nu\le 1$, \cref{algo:fd} achieves
		\begin{align*}
			r^\ast_T=\Otil(BL^{m-1}T^{-\nu^m/d});
		\end{align*}
		\item when $\nu>1$, \cref{algo:fd} achieves
		\begin{align*}
			r^\ast_T=\Otil\big(\max\big\{BL^{(m-1)\nu}T^{-\nu/d}, B^{1+\nu+\nu^2+\cdots+\nu^{m-2}}L^{\nu^{m-1}}T^{-\nu^2/d} \big\}\big).
		\end{align*}
	\end{itemize}
\end{restatable}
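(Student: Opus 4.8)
The plan is to mirror the scalar-chain argument of \cref{thm:fd_chain} almost verbatim, the only conceptual new ingredient being the reduction of the vector-valued posterior to the scalar one. Since $\Gamma(\cdot,\cdot)=k(\cdot,\cdot)\Ibf$, \cref{lem:multi_var} gives $\Gamma^{(i)}_T(\zbf,\zbf)=\big(\sig{i}_T(\zbf)\big)^2\Ibf$, whence $\|\Gamma^{(i)}_T(\zbf,\zbf)\|_2^{1/2}=\sig{i}_T(\zbf)$ is exactly the scalar posterior standard deviation of the $i$-th layer. Feeding this into the confidence bound of \cref{lem:mul} yields $\|\ff{i}(\zbf)-\muu{i}_T(\zbf)\|_2\le B\sig{i}_T(\zbf)$, which is the precise vector analog of the scalar confidence bound used in \cref{thm:fd_chain}. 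Every subsequent step then goes through with absolute values replaced by Euclidean norms.

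First I would reproduce the layer-peeling error decomposition of \eqref{eq:fd_start}--\eqref{eq:fd_1}. Setting $\sigtil^{(i)}_T(\xbf):=(\sig{i}_T\circ\muu{i-1}_T\circ\cdots\circ\muu{1}_T)(\xbf)$, I would alternate the Lipschitz bound $\|\ff{i}(\zbf)-\ff{i}(\zbf')\|_2\le L\|\zbf-\zbf'\|_2$ (to transport the incoming error through $\ff{i}$) with the confidence bound $\|\ff{i}(\zbf)-\muu{i}_T(\zbf)\|_2\le B\sig{i}_T(\zbf)$ (to charge the replacement of $\ff{i}$ by $\muu{i}_T$ at the current composite-mean input). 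Telescoping over the $m$ layers gives $\|g(\xbf)-(\muu{m}_T\circ\cdots\circ\muu{1}_T)(\xbf)\|_2\le B\sum_{i=1}^m L^{m-i}\sigtil^{(i)}_T(\xbf)$, and since the final output is scalar this implies $r^\ast_T\le 2B\sum_{i=1}^m L^{m-i}\sigtil^{(i)}_T$ with $\sigtil^{(i)}_T:=\max_{\xbf\in\Xc}\sigtil^{(i)}_T(\xbf)$, using that $\xbf^\ast_T$ maximizes the composite mean, exactly as in \eqref{eq:fd_end}.

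The main obstacle is the recursive fill-distance bound on $\sigtil^{(i)}_T$, which now lives in the genuinely multi-dimensional intermediate domains $\Xc^{(i)}$. The complication is that $\sigtil^{(i+1)}_T(\xbf)$ evaluates $\sig{i+1}_T$ at the composite-mean image $(\muu{i}_T\circ\cdots\circ\muu{1}_T)(\xbf)$, which need not lie in $\Xc^{(i+1)}$, so \cref{cor:fill_mul} does not apply directly. I would instead replay \eqref{eq:delta_i_1}--\eqref{eq:sigtil}: bound the distance from this image to $\Xc^{(i+1)}$ by the error $B\sum_{j=1}^i L^{i-j}\sigtil^{(j)}_T(\xbf)$ from the decomposition above; add the intermediate fill distance $\delta^{(i+1)}_T\le L^i\delta_T$ (from the Lipschitz estimate \eqref{eq:fill_i}) via the triangle inequality to control the distance to the nearest sampled point $\xx{i+1}_s$; and then enlarge $\Xc^{(i+1)}$ to the smallest hyperrectangle $\widetilde{\Xc}^{(i+1)}$ containing it together with all composite-mean images. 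The key point is that this enlargement is a uniform fattening by at most (a dimension-dependent constant times) $\max_\xbf\delta_{i+1}(\xbf)$, so every point of $\widetilde{\Xc}^{(i+1)}$ stays within $\delta'_{i+1}:=\max_\xbf\delta'_{i+1}(\xbf)$ of a sample point; since $\widetilde{\Xc}^{(i+1)}$ is a hyperrectangle and $\sig{i+1}_T$ is a bona fide scalar standard deviation (by \cref{lem:multi_var}), \cref{lem:fill} yields the recursion $\sigtil^{(i+1)}_T=O\big((\max\{L^iT^{-1/d},iB\max_{j\in[i]}L^{i-j}\sigtil^{(j)}_T\})^\nu\big)$, matching \eqref{eq:sigtil}. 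Any residual $\sqrt{d_{i+1}}$ factors from passing between coordinate-wise and Euclidean distances are $T$-independent and absorbed into the constants (with $d_{\max},m$ treated as constant).

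With the base case $\sigtil^{(1)}_T=O(T^{-\nu/d})$ and $B=\Theta(L)$, I would finally unroll this recursion exactly as in the two-case analysis of \cref{thm:fd_chain}: for $\nu\le1$ the $T$-exponent compounds to $\nu^m$, giving $r^\ast_T=\Otil(BL^{m-1}T^{-\nu^m/d})$, while for $\nu>1$ the maximum is governed by the two competing terms, giving $r^\ast_T=\Otil\big(\max\{BL^{(m-1)\nu}T^{-\nu/d},B^{1+\nu+\cdots+\nu^{m-2}}L^{\nu^{m-1}}T^{-\nu^2/d}\}\big)$. Because the input dimension $d=d_1$ is the only one entering $\delta_T=O(T^{-1/d})$ and the composition Lipschitz constant $L^{i-1}$ is identical to the scalar case, the resulting bounds coincide exactly with those of \cref{thm:fd_chain}.
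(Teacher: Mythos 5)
Your overall strategy is indeed the paper's: reduce the vector-valued posterior to the scalar one via \cref{lem:multi_var}, run the layer-peeling decomposition to get $r_T^\ast\le 2B\sum_{i=1}^m L^{m-i}(\cdot)$, and close the argument with a fill-distance recursion through \cref{lem:fill}. The one place where your argument deviates---and where, as written, it fails to deliver the theorem as stated---is the domain-extension step. You enlarge $\Xc^{(i+1)}$ to the smallest hyperrectangle containing \emph{all} composite-mean images $\{(\muu{i}_T\circ\cdots\circ\muu{1}_T)(\xbf):\xbf\in\Xc\}$, mimicking the chain proof. But the chain proof gets away with a uniform extension only because there the intermediate domains are one-dimensional intervals; in dimension $d_{i+1}>1$ the uniformly extended hyperrectangle acquires corners that are far from both the original domain and every image point, and, as you yourself note, its fill distance is then only controlled up to a $\sqrt{d_{i+1}}$ factor. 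That factor enters the recursion inside $(\cdot)^\nu$ and compounds across layers, producing polynomial dimension factors of exactly the type $\Dtil_{2,m}^\nu=\prod_{i=2}^m (d_i)^{\nu^{m+1-i}/2}$ that appear in the feed-forward bound (\cref{thm:fd_net}). These are not hidden by $\Otil(\cdot)$, which absorbs only poly-logarithmic factors, and ``$d_{\max}$, $m$ constant'' is not an assumption of \cref{thm:fd_mul}; indeed, the paper's convention is to track such factors explicitly (compare \cref{thm:fd_net}), and the absence of any dimension factor in \cref{thm:fd_mul} is precisely what makes it coincide with the chain bound.

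The paper avoids this with a small but essential twist: since the regret bound only involves the two points $\xbf^\ast$ and $\xbf^\ast_T$ (via the analogue of \eqref{eq:fd_end}, there is no need to pass to $\max_{\xbf\in\Xc}$), the extension is done \emph{per point}. For each fixed $\xbf'\in\{\xbf^\ast,\xbf^\ast_T\}$, take $\widetilde{\Xc}^{(i+1)}$ to be the smallest hyperrectangle covering $\Xc^{(i+1)}$ and the single point $(\muu{i}_T\circ\cdots\circ\muu{1}_T)(\xbf')$. For a one-point extension, every point of $\widetilde{\Xc}^{(i+1)}$ lies within exactly $\delta_{i+1}(\xbf')$ of the original domain (the worst corner displacement is $\sqrt{\textstyle\sum_j e_j^2}$ over the per-coordinate overhangs $e_j$, which equals the Euclidean distance of that single point to $\Xc^{(i+1)}$), so no $\sqrt{d_{i+1}}$ factor appears, and the resulting pointwise recursion
\begin{align*}
\widetilde{\Gamma}^{(i+1)}_T(\xbf')=O\Bigg(\Big(\max\Big\{L^iT^{-1/d},\, iB\cdot\max_{j\in[i]}L^{i-j}\widetilde{\Gamma}^{(j)}_T(\xbf')\Big\}\Big)^\nu\Bigg)
\end{align*}
is literally identical to the chain recursion \eqref{eq:sigtil}. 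Everything else in your proposal (the \cref{lem:multi_var} reduction, the layer-peeling bound, the base case, and the two-case unrolling) goes through unchanged; the fix is simply to keep the fill-distance recursion pointwise at $\xbf^\ast$ and $\xbf^\ast_T$ rather than uniform over $\Xc$.
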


\begin{proof}
The analysis is similar to the case of chains, so we omit some details and focus on the main differences.  
Defining $\widetilde{\Gamma}_T^{(i)}(\xbf)=\|\Gamma_T^{(i)}\circ\muu{i-1}_T\circ\cdots\circ\muu{1}_T(\xbf)\|_2^{1/2}$, similarly to the case of chains, we have
\begin{align}
	|g(\xbf) - (\muu{m}_T\circ\muu{m-1}_T\circ\cdots\circ\muu{1}_T)(\xbf)|\le B\sum_{i=1}^m L^{m-i} \widetilde{\Gamma}_T^{(i)}(\xbf),
\end{align}
and
\begin{align}
	r^\ast_T=g(\xbf^\ast)-g(\xbf^\ast_T)&\le 2 B\sum_{i=1}^m L^{m-i} \max\{ \widetilde{\Gamma}_T^{(i)}(\xbf^\ast),\widetilde{\Gamma}_T^{(i)}(\xbf^\ast_T) \}.
\end{align}
For each $\xbf'\in\{\xbf^\ast,\xbf^\ast_T\}$, we extend $\Xc^{(i+1)}$ to the smallest hyperrectangle $\widetilde{\Xc}^{(i+1)}$ that covers $(\muu{i}_T\circ\cdots\circ\muu{1})(\xbf')$, and the original domain $\Xc^{(i+1)}$ (see \cref{fig:mul_fill}). Recalling the definition of $\delta^{(i+1)}_T$ in \eqref{eq:delta_i_T}, with $\delta_{i+1}(\xbf)=\min_{\zbf\in\Xc^{(i+1)}}\|(\muu{i}_T\circ\cdots\circ\muu{1})(\xbf)-\zbf\|_2$, the fill distance of $\widetilde{\Xc}^{(i+1)}$ with respect to $\{\xx{i+1}_s\}_{s=1}^T$ is
\begin{align}
	\delta'_{i+1}&=\max_{\zbf'\in\widetilde{\Xc}^{(i+1)}}\min_{s\in[T]}\|\zbf'-\xx{i+1}_s\|_2\\
	&\le \delta_T^{(i+1)}+\max_{\zbf'\in\widetilde{\Xc}^{(i+1)}}\min_{\zbf\in\Xc^{(i+1)}}\|\zbf'-\zbf\|_2\\
	&\le \delta_T^{(i+1)}+\delta_{i+1}(\xbf')\\
	&\le \delta_T^{(i+1)}+B\sum_{j=1}^i L^{i-j} \widetilde{\Gamma}_T^{(i)}(\xbf').
\end{align}
\begin{figure}[h!]
	\centering
	\includegraphics[width=0.6\linewidth]{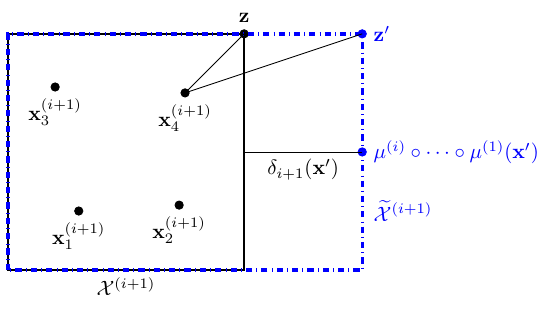}
	\caption{Extended domain for multi-output chains.}
	\label{fig:mul_fill}
\end{figure}
Since $\widetilde{\Xc}^{(i+1)}$ is a hyperrectangle and $\muu{i}_T\circ\cdots\circ\muu{1}(\xbf')\in\widetilde{\Xc}^{(i+1)}$, it follows from \eqref{eq:fill_mul} that
\begin{align}
	\widetilde{\Gamma}_T^{(i+1)}(\xbf')&=O\big((\delta'_{i+1})^\nu\big)\\
	&=O\Bigg(\Big(\max\Big\{L^iT^{-1/d},  i B\cdot\max_{j\in[i]}L^{i-j} \widetilde{\Gamma}_T^{(j)} (\xbf') \Big\}\Big)^\nu\Bigg).
\end{align}
By \cref{lem:multi_var}, we also have
\begin{align}
	\widetilde{\Gamma}_T^{(1)}(\xbf')=\|\Gamma_T^{(1)}(\xbf')\|_2^{1/2}=\sigma_T^{(1)}(\xbf')=O\big((\delta_T)^\nu)=O(T^{-\nu/d}).
\end{align}

This recursive relation is exactly the same as that of chains in \eqref{eq:sigtil}, showing \cref{thm:fd_chain} extends to multi-output chains.

For the two more restrictive cases, \cref{cor:fill_mul}, with the same posterior standard deviation upper bound as \cref{cor:fill_chain}, implies that the results in \cref{sec:rest_chain} also extend to multi-output chains.


\end{proof}

\subsection{Non-Adaptive Sampling Method for Feed-Forward Networks}
\label{sec:fd_net}
For feed-forward networks of scalar-valued functions, let $\muu{i,j}_T$ denote the posterior mean of $\ff{i,j}$ computed using \eqref{eq:mean} based on $\{(\xx{i}_s,\xx{i+1,j}_s)\}_{s=1}^T$. The composite posterior mean of $g(\xbf)$ with feed-forward network structure is
\begin{align}
	\mu_T^g(\xbf)=\muu{m,1}_T(\zz{m})\label{eq:mean_net}
\end{align}
with
\begin{align}
	\zz{1}&=\xbf,\\
	\zz{i+1,j}&=\muu{i,j}_T(\zz{i})&&\text{ for }i\in[m-1],j\in[d_{i+1}],\\
	\zz{i+1}&=\muu{i}_T(\zz{i})=[\zz{i+1,1},\dots,\zz{i+1,d_i}]&&\text{ for }i\in[m-1].
\end{align}

Assuming each $\Xc^{(i)}$ is a hyperrectangle of dimension $d_i$, the following theorem provides the upper bound on simple regret of \cref{algo:fd} using \eqref{eq:mean_net} for feed-forward networks.

\begin{restatable}[Non-adaptive sampling method for feed-forward networks]{thm}{fdnet}
	\label{thm:fd_net}
	Under the setup of \cref{sec:setup}, given $B=\Theta(L)$,  $k=k_\text{Mat\'ern}$, and  a feed-forward network $g=\ff{m}\circ\ff{m-1}\circ\cdots\circ\ff{1}$ with $\ff{i}(\zbf)=[\ff{i,j}(\zbf)]_{j=1}^{d_{i+1}}$, $\ff{i,j}\in\Hc_k(B)\cap\Fc(L)$, and $\Xc^{(i)}$ being a hyperrectangle for each $i\in[m],j\in[d_{i+1}]$,
	\begin{itemize}
		\item when $\nu\le 1$, \cref{algo:fd} achieves
		\begin{align}
			r^\ast_T=\Otil(\sqrt{D_{2,m}}BL^{m-1}T^{-\nu^m/d});
		\end{align}
		\item when $\nu>1$, \cref{algo:fd} achieves
		\begin{align}
			r^\ast_T=\Otil\Big(\max\Big\{(D_{2,m})^{\nu/2}BL^{(m-1)\nu}T^{-\nu/d}, \Dtil_{2,m}^\nu B^{1+\nu+\nu^2+\cdots+\nu^{m-2}}L^{\nu^{m-1}}T^{-\nu^2/d} \Big\}\Big),
		\end{align}
	\end{itemize}
where $D_{2,m}=\prod_{i=2}^m d_i$ and $\Dtil_{2,m}^\nu=\prod_{i=2}^m (d_i)^{\nu^{m+1-i}/2}$.

\end{restatable}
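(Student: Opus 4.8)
The plan is to adapt the proof of \cref{thm:fd_chain} for chains, since the composite posterior mean in \eqref{eq:mean_net} plays exactly the same role as in the chain case; the only structural difference is that each layer $\ff{i}$ is now vector-valued with $d_{i+1}$ scalar constituents $\ff{i,j}$, so the error propagated between layers is a Euclidean norm over $d_{i+1}$ coordinates rather than a scalar. First I would set up the forward error-propagation bound. Writing $\xx{i}$ for the true intermediate $\ff{i-1}\circ\cdots\circ\ff{1}(\xbf)$ and $\zz{i}$ for the composite-mean intermediate $\muu{i-1}_T\circ\cdots\circ\muu{1}_T(\xbf)$, I would bound the per-coordinate error $|\ff{i,j}(\xx{i})-\muu{i,j}_T(\zz{i})|$ by splitting it into a Lipschitz term $L\|\xx{i}-\zz{i}\|_2$ and a confidence-bound term $B\sig{i,j}_T(\zz{i})$ via \cref{lem:conf}. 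Using the symmetry $\sig{i,j}_T=\sig{i,1}_T$ and summing squares over $j\in[d_{i+1}]$ introduces the factor $\sqrt{d_{i+1}}$, giving $\|\xx{i+1}-\zz{i+1}\|_2\le \sqrt{d_{i+1}}\big(L\|\xx{i}-\zz{i}\|_2+B\sigtil^{(i)}_T(\xbf)\big)$, where $\sigtil^{(i)}_T(\xbf):=(\sig{i,1}_T\circ\muu{i-1}_T\circ\cdots\circ\muu{1}_T)(\xbf)$.

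Unrolling this recursion (as in the proof of \cref{thm:ucb_net}) yields
\[
|g(\xbf)-\mu^g_T(\xbf)|\le B\sum_{i=1}^m L^{m-i}\Big(\prod_{s=i+1}^m\sqrt{d_s}\Big)\sigtil^{(i)}_T(\xbf),
\]
and hence, since $\xbf^\ast_T$ maximizes the composite mean, $r^\ast_T\le 2B\sum_{i=1}^m L^{m-i}\big(\prod_{s=i+1}^m\sqrt{d_s}\big)\sigtil^{(i)}_T$ with $\sigtil^{(i)}_T:=\max_{\xbf}\sigtil^{(i)}_T(\xbf)$, mirroring \eqref{eq:fd_end}. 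It then remains to bound each $\sigtil^{(i)}_T$.

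The next step is the fill-distance recursion, which is where the dimension factors genuinely enter. Since $\zz{i+1}$ need not lie in $\Xc^{(i+1)}$, I would extend $\Xc^{(i+1)}$ to the smallest hyperrectangle containing both $\Xc^{(i+1)}$ and all reachable $\zz{i+1}$, so that \cref{lem:fill} still applies. Bounding the distance from $\zz{i+1}$ to $\Xc^{(i+1)}$ by $\|\xx{i+1}-\zz{i+1}\|_2$ and combining it with the estimate $\delta^{(i+1)}_T=O(\sqrt{D_{2,i+1}}\,L^iT^{-1/d})$ from \eqref{eq:fill_i_net}, I obtain the recursion
\[
\sigtil^{(i+1)}_T=O\!\left(\left(\max\left\{\sqrt{D_{2,i+1}}\,L^i T^{-1/d},\ iB\max_{j\in[i]}L^{i-j}\sqrt{\textstyle\prod_{s=j+1}^i d_s}\;\sigtil^{(j)}_T\right\}\right)^{\nu}\right),
\]
with base case $\sigtil^{(1)}_T=O(T^{-\nu/d})$ from \cref{lem:fill} applied on $\Xc=[0,1]^d$. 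This is precisely the chain recursion \eqref{eq:sigtil} decorated with $\sqrt{D}$-type factors.

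Finally I would solve this recursion in the two regimes $\nu\le1$ and $\nu>1$, exactly as in \cref{thm:fd_chain}, and substitute back into the simple-regret bound. I expect the main obstacle to be the careful bookkeeping of how the dimension factors compound. In the $\nu\le1$ case the last layer dominates (slowest $T$-decay $T^{-\nu^m/d}$) and the factors collapse into a single clean $\sqrt{D_{2,m}}$; but in the $\nu>1$ case the second, slower-in-$(B,L)$ branch of the $\max$ gets raised to the power $\nu$ at every subsequent recursion step, so a dimension $d_i$ introduced at layer $i$ is exponentiated $m+1-i$ times, producing the compounded factor $\Dtil_{2,m}^\nu=\prod_{i=2}^m (d_i)^{\nu^{m+1-i}/2}$. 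Verifying that these exponents line up correctly, and that the competing first branch contributes only the milder $(D_{2,m})^{\nu/2}$ factor, is the delicate part; the remainder is a direct transcription of the chain argument.
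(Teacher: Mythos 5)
Your proposal follows essentially the same route as the paper's proof: coordinate-wise confidence bounds combined with Lipschitz continuity and the $\sqrt{d_{i+1}}$ aggregation factor, yielding $r^\ast_T\le 2B\sum_{i=1}^m\sqrt{D_{i+1,m}}\,L^{m-i}\cdot(\text{posterior std.\ terms})$, followed by a hyperrectangular domain extension so that \cref{lem:fill} applies, a fill-distance recursion, and a case split on $\nu\le 1$ versus $\nu>1$. However, there is one concrete bookkeeping error in your displayed recursion: the factor multiplying $\sigtil^{(j)}_T$ should be $\sqrt{\prod_{s=j+1}^{i+1}d_s}=\sqrt{D_{j+1,i+1}}$, not $\sqrt{\prod_{s=j+1}^{i}d_s}$. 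Unrolling your own one-step bound $\|\xx{i+1}-\zz{i+1}\|_2\le\sqrt{d_{i+1}}\big(L\|\xx{i}-\zz{i}\|_2+B\sigtil^{(i)}_T(\xbf)\big)$ gives $\delta_{i+1}(\xbf)\le B\sum_{j=1}^{i}\sqrt{D_{j+1,i+1}}\,L^{i-j}\sigtil^{(j)}_T(\xbf)$; the outermost $\sqrt{d_{i+1}}$ must be retained. This is not cosmetic: in the dominant $j=i$ term this is exactly the factor $\sqrt{d_{i+1}}$ that enters at step $i\to i+1$ and is then raised to $\nu$ at every subsequent step, producing the exponent $\nu^{m+1-i}/2$ on $d_i$ in $\Dtil_{2,m}^\nu$. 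With your version each dimension enters the second branch one step too late, and solving the recursion literally would give exponents $\nu^{m-i}/2$, contradicting the (correct) final factor you state. So your conclusion is right but does not follow from the recursion as written; the fix is the off-by-one correction above, after which the case analysis goes through exactly as in the chain proof.

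A further minor difference: you extend $\Xc^{(i+1)}$ to a hyperrectangle covering \emph{all} reachable points $\zz{i+1}$, whereas the paper extends it separately for each of the two points $\xbf^\ast$ and $\xbf^\ast_T$, and accordingly works with $\max\{\sigtil^{(i)}_T(\xbf^\ast),\sigtil^{(i)}_T(\xbf^\ast_T)\}$ rather than $\max_{\xbf\in\Xc}\sigtil^{(i)}_T(\xbf)$. When $d_{i+1}>1$, the bounding box of the whole reachable set can have corners whose distance to $\Xc^{(i+1)}$ exceeds $\max_{\xbf}\delta_{i+1}(\xbf)$ (different coordinates of the corner may come from different reachable points), costing up to another $\sqrt{d_{i+1}}$ factor; the per-point extension avoids this and is the cleaner way to preserve the stated dimension dependence.
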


\begin{proof}
With $\muu{i}_T(\cdot)=[\muu{i,j}_T(\cdot)]_{j=1}^{d_{i+1}}$, we have
\begin{align}
    \|\ff{i}(\zbf)-\muu{i}_T(\zbf)\|_2
    &=\sqrt{\sum_{j=1}^{d_{i+1}}\big(\ff{i,j}(\zbf)-\muu{i,j}_T(\zbf)\big)^2}\\
    &\le\sqrt{\sum_{j=1}^{d_{i+1}}\big(B\sig{i,j}_T(\zbf)\big)^2}\\
    &=\sqrt{d_{i+1}}B\sig{i,1}_T(\zbf),
\end{align}
where we use the confidence bounds and the fact that $\sig{i,j}_T(\zbf) = \sig{i,1}_T(\zbf)$ by the symmetry of our setup.

Since each $\ff{i,j}$ is Lipschitz continuous with parameter $L$, we have that $\ff{i}$ is Lipschitz continuous with parameter $\sqrt{d_{i+1}}L$, and $\ff{m}\circ\cdots\circ\ff{i}$ is Lipschitz continuous with parameter $\sqrt{D_{i+1,m+1}}L^{m-i+1}$, where $D_{i,j}=\prod_{s=i}^j d_s$ and $d_{m+1}=1$. 
Then, defining $\sigtil_T^{(i,1)}(\xbf)=(\sig{i,1}_T\circ\muu{i-1}_T\circ\cdots\circ\muu{1}_T)(\xbf)$, we can follow \eqref{eq:fd_start}--\eqref{eq:fd_1} to obtain
\begin{align}
    g(\xbf)={}&(\ff{m}\circ\ff{m-1}\circ\cdots\circ\ff{2})\big(\ff{1}(\xbf)\big)\\
    \le{}& (\ff{m}\circ\ff{m-1}\circ\cdots\circ\ff{2})\big(\muu{1}_T(\xbf)\big) + \sqrt{D_{3,m+1}} L^{m-1} \|\ff{1}(\xbf)-\muu{1}_T(\xbf)\|_2\\
    ={}& (\ff{m}\circ\ff{m-1}\circ\cdots\circ\ff{2})\big(\muu{1}_T(\xbf)\big) + \sqrt{D_{2,m+1}} L^{m-1} B\sig{1,1}_T(\xbf)\\
    \le{}& (\ff{m}\circ\ff{m-1}\circ\cdots\circ\ff{3})\big((\muu{2}_T\circ\muu{1}_T)(\xbf)\big) + \sqrt{D_{3,m+1}}L^{m-2}B\sig{2,1}_T\big(\muu{1}_T(\xbf)\big)\nonumber\\
    &+ \sqrt{D_{2,m+1}}L^{m-1} B\sig{1,1}_T(\xbf)\\
    \le{}& (\muu{m}_T\circ\muu{m-1}_T\circ\cdots\circ\muu{1}_T)(\xbf)+B\sig{m,1}\big((\muu{m-1}_T\circ\cdots\circ\muu{1}_T)(\xbf)\big)\nonumber\\
    &+LB\sig{m-1}\big((\muu{m-2}_T\circ\cdots\circ\muu{1}_T)(\xbf)\big)+\cdots+ \sqrt{D_{2,m+1}}L^{m-1} B\sig{1,1}_T(\xbf)\\
    \le{}& (\muu{m}_T\circ\muu{m-1}_T\circ\cdots\circ\muu{1}_T)(\xbf) + B\sum_{i=1}^m \sqrt{D_{i+1,m+1}} L^{m-i} \sigtil_T^{(i,1)}(\xbf).\label{eq:fd_2}
\end{align}
Hence, we have
\begin{align}
    r^\ast_T&=g(\xbf^\ast)-g(\xbf^\ast_T)\le 2 B\sum_{i=1}^m \sqrt{D_{i+1,m}} L^{m-i} \max\{\sigtil^{(i,1)}_T(\xbf^\ast),\sigtil^{(i,1)}_T(\xbf^\ast_T)\}.
\end{align}
It remains to upper bound $\max\{\sigtil^{(i,1)}_T(\xbf^\ast),\sigtil^{(i,1)}_T(\xbf^\ast_T)\}$ for $i\in[m]$. Firstly, by \cref{lem:fill}, we have
\begin{align}
	\sigtil^{(1,1)}_T(\xbf)=\sig{1,1}_T(\xbf)=O\big((\delta_T)^\nu\big)=O(T^{-\nu/d}).\label{eq:sigtil_net_1}
\end{align}
We now represent the upper bound on $\sigtil^{(i+1,1)}_T(\cdot)$ in terms of $\{\sigtil^{(j,1)}_T(\cdot):j\in[i]\}$ for $i\in[m-1]$. For all $\xbf\in\Xc$, the distance between $\muu{i}_T\circ\cdots\circ\muu{1}_T(\xbf)$ and its closest point in $\Xc^{(i+1)}$ is
\begin{align}
	\delta_{i+1}(\xbf)&=\min_{\zbf\in\Xc^{(i+1)}}|(\muu{i}_T\circ\cdots\circ\muu{1}_T)(\xbf)-\zbf|\\
	&\le|(\muu{i}_T\circ\cdots\circ\muu{1}_T)(\xbf)-\ff{i}\circ\cdots\circ\ff{1}(\xbf)|\\
	&\le B\sum_{j=1}^{i}\sqrt{D_{j+1,i+1}}L^{i-j}\sigtil^{(j,1)}_T(\xbf),\label{eq:delta_i_net}
\end{align}
where the last step follows from \eqref{eq:fd_2}.
Then, the distance between $(\muu{i}_T\circ\cdots\circ\muu{1}_T)(\xbf)$ and its closest point in $\Xc^{(i+1)}_T$ is
\begin{align}
	\delta'_{i+1}(\xbf)&=\min_{s\in[T]}|(\muu{i}_T\circ\cdots\circ\muu{1}_T)(\xbf)-\xx{i+1}_s|\\
	&\le\delta^{(i+1)}_T+\delta_{i+1}(\xbf)\label{eq:delta_prime_net_1}\\
	&\le O(\sqrt{D_{2,i+1}}L^i T^{-1/d})+B\sum_{j=1}^{i}\sqrt{D_{j+1,i+1}}L^{i-j}\sigtil^{(j,1)}_T(\xbf)\label{eq:delta_prime_net_2}\\
	&= O\Big(\max\Big\{\sqrt{D_{2,i+1}}L^iT^{-1/d}, iB\cdot \max_{j\in[i]}\sqrt{D_{j+1,i+1}}L^{i-j} \sigtil^{(j,1)}_T(\xbf) \Big\}\Big),
\end{align}
where \eqref{eq:delta_prime_net_1} applies the triangle inequality, and \eqref{eq:delta_prime_net_2} uses \eqref{eq:fill_i_net} and \eqref{eq:delta_i_net}.

Now, for each $\xbf'\in\{\xbf^\ast,\xbf^\ast_T\}$, we extend $\Xc^{(i+1)}$ to $\widetilde{\Xc}^{(i+1)}$, the smallest hyperrectangle that covers $(\muu{i}_T\circ\cdots\circ\muu{1}_T)(\xbf')$ and $\Xc^{(i+1)}$. Then, the fill distance of the extended domain with regard to $\Xc^{(i+1)}_T$ is
\begin{align}
	\delta'_{i+1}=O\Big(\max\Big\{\sqrt{D_{2,i+1}}L^iT^{-1/d}, iB\cdot\max_{j\in[i]}\sqrt{D_{j+1,i+1}}L^{i-j} \sigtil^{(j,1)}_T(\xbf') \Big\}\Big),
\end{align}
and by \cref{lem:fill}, we have
\begin{align}
	\sigtil^{(i+1,1)}_T(\xbf')=O\big((\delta'_{i+1})^\nu\big)=O\Bigg(\Big(\max\Big\{\sqrt{D_{2,i+1}}L^iT^{-1/d}, iB\cdot\max_{j\in[i]}\sqrt{D_{j+1,i+1}}L^{i-j} \sigtil^{(j,1)}_T (\xbf') \Big\}\Big)^\nu\Bigg).\label{eq:sigtil_net}
\end{align}
We now consider two cases;

\underline{The case that $\nu \le 1$}: 
Recalling the assumption $B=\Theta(L)$, and using the fact that $\Otil(L^{c\nu})$ is equivalent to $O\big( (L^{c\nu} {\rm poly}(c\log L)) \big)$ for fixed $\nu$, it follows from \eqref{eq:sigtil_net_1} and \eqref{eq:sigtil_net} that when $\nu\le 1$, we have
\begin{align}
	\sigtil_T^{(2,1)}(\xbf')&=O\Big(\max\Big\{(d_2)^{\nu/2}L^\nu T^{-\nu/d}, (d_2)^{\nu/2} B^\nu T^{-\nu^2/d}\Big\}\Big)=O\big((d_2)^{\nu/2}L^\nu T^{-\nu^2/d}\big), \\
	\sigtil^{(3,1)}_T(\xbf')&=O\Big(\max\Big\{(D_{2,3})^{\nu/2}L^{2\nu}T^{-\nu/d}, 2^\nu (D_{2,3})^{\nu/2} B^\nu  L^\nu T^{-\nu^2/d}, 2^\nu (d_3)^{\nu/2}  B^\nu (d_2)^{\nu^2/2} L^{\nu^2}  T^{-\nu^3/d}\Big\}\Big)\nonumber\\
	&=O\big(2^\nu(D_{2,3})^{\nu/2} B^\nu L^\nu T^{-\nu^3/d}\big)=\Otil\big( (D_{2,3})^{\nu/2} B^\nu L^\nu T^{-\nu^3/d}\big),\\
	&\vdotswithin{=}\nonumber\\
	\sigtil^{(i,1)}_T(\xbf')&=\Otil\big((i-1)^\nu(D_{2,i})^{\nu/2} B^\nu L^{(i-2)\nu}T^{-\nu^i/d}\big)=\Otil\big((D_{2,i})^{\nu/2} B^\nu L^{(i-2)\nu}T^{-\nu^i/d}\big),
\end{align}
and the simple regret is
\begin{align}
	r^\ast_T&\le 2B\sum_{i=1}^m \sqrt{D_{i+1,m}} L^{m-i}\max\{\sigtil^{(i,1)}_T(\xbf^\ast),\sigtil^{(i,1)}_T(\xbf^\ast_T)\}\\
	&=\Otil(\sqrt{D_{2,m}}BL^{m-1}T^{-\nu^m/d}).
\end{align}

\underline{The case that $\nu > 1$}: 
With $B=\Theta(L)$, and using the property of $\Otil(\cdot)$ similar to the first case, it follows from \eqref{eq:sigtil_net_1} and \eqref{eq:sigtil_net} that when $\nu > 1$, we have
\begin{align}
	\sigtil_T^{(2,1)}(\xbf')&=O\Big(\max\Big\{(d_2)^{\nu/2}L^\nu T^{-\nu/d}, (d_2)^{\nu/2} B^\nu T^{-\nu^2/d}\Big\}\Big)=O((d_2)^{\nu/2}L^\nu T^{-\nu/d}), \\
	\sigtil^{(3,1)}_T(\xbf')&=O\Big(\max\Big\{(D_{2,3})^{\nu/2}L^{2\nu}T^{-\nu/d}, (d_3)^{\nu/2}  B^\nu (d_2)^{\nu^2/2} L^{\nu^2} T^{-\nu^2/d}\Big\}\Big)\nonumber\\
	&=O\big(2^\nu(d_3)^{\nu/2} (d_2)^{\nu^2/2} B^\nu L^{\nu^2} T^{-\nu/d}\big)=\Otil\big((d_3)^{\nu/2} (d_2)^{\nu^2/2} B^\nu L^{\nu^2} T^{-\nu/d}\big),\\
	&\vdotswithin{=}\nonumber\\
	\sigtil^{(i,1)}_T(\xbf')&=\Otil\Big(\max\Big\{(D_{2,i})^{\nu/2}L^{(i-1)\nu}T^{-\nu/d}, (i-1)^\nu (d_i)^{\nu/2}(d_{i-1})^{\nu^2/2}\cdots(d_2)^{\nu^{i-1}/2}B^{\nu+\cdots+\nu^{i-2}}L^{\nu^{i-1}}T^{-\nu^2/d}\Big\}\Big)\nonumber\\
	&=\Otil\Big(\max\Big\{(D_{2,i})^{\nu/2}L^{(i-1)\nu}T^{-\nu/d}, (d_i)^{\nu/2}(d_{i-1})^{\nu^2/2}\cdots(d_2)^{\nu^{i-1}/2}B^{\nu+\cdots+\nu^{i-2}}L^{\nu^{i-1}}T^{-\nu^2/d}\Big\}\Big),
\end{align}
and the simple regret is
\begin{align}
	r^\ast_T&\le 2B\sum_{i=1}^m \sqrt{D_{i+1,m}} L^{m-i}\max\{\sigtil^{(i,1)}_T(\xbf^\ast),\sigtil^{(i,1)}_T(\xbf^\ast_T)\}\\
	&=\Otil\Big(\max\Big\{(D_{2,m})^{\nu/2}BL^{(m-1)\nu}T^{-\nu/d}, \Dtil_{2,m}^\nu B^{1+\nu+\nu^2+\cdots+\nu^{m-2}}L^{\nu^{m-1}}T^{-\nu^2/d} \Big\}\Big),
\end{align}
where $\Dtil_{2,m}^\nu=\prod_{i=2}^m (d_i)^{\nu^{m+1-i}/2}$.
\end{proof}

Next, we recall the following two restrictive cases introduced in the main body:
\begin{itemize}
	\item \textbf{Case 1}: We additionally assume that $(\muu{i}_T\circ\cdots\circ\muu{1}_T)(\xbf^\ast)\in\Xc^{(i+1)}$ and $(\muu{i}_T\circ\cdots\circ\muu{1}_T)(\xbf^\ast_T)\in \Xc^{(i+1)}$ for $i\in[m-1]$.
	\item \textbf{Case 2}: We additionally assume that all the $\Xc^{(i)}$'s are known. Defining
	\begin{align}
		\muutil{i}_T(\zbf) &= \argmin_{\zbf'\in\Xc^{(i+1)}}|\muu{i}_T(\zbf)-\zbf'|,
	\end{align}
	we let the algorithm return
	\begin{align}
		\xbf^\ast_T=\argmax_{\xbf\in\Xc}(\muutil{m}_T\circ\cdots\circ\muutil{1}_T)(\xbf).
	\end{align}
\end{itemize}
Similarly to chains and multi-output chains, in either case, it follows that
\begin{align}
	r^\ast_T\le 2 B\sum_{i=1}^m \sqrt{D_{i+1,m}} L^{m-i} \max\{\sigtil^{(i,1)}_T(\xbf^\ast),\sigtil^{(i,1)}_T(\xbf^\ast_T)\} =O\Big( B\sum_{i=1}^m \sqrt{D_{i+1,m}} L^{m-i} \sigbar_T^{(i,1)} \Big).
\end{align}
Moreover, by \cref{cor:fill_net}, we can further upper bound this by
\begin{align}
	r^\ast_T\le O\Big( B\sum_{i=1}^m \sqrt{D_{i+1,m}} L^{m-i} \sigbar_T^{(i,1)} \Big)=
	\begin{cases}
		O(\sqrt{D_{2,m}}BL^{m-1}T^{-\nu/d})&\text{ when $\nu\le 1$,}\\
		O\big((D_{2,m})^{\nu/2}BL^{(m-1)\nu}T^{-\nu/d}\big)&\text{ when $\nu> 1$.}
	\end{cases}
\end{align}

\section{Lower Bound on Simple Regret (Proof of \cref{thm:lower_sim})}
\label{sec:lower_sim}

Recall the high-level intuition behind our lower bound outlined in Section \ref{sec:lower}: We use the idea from \cite{bull2011convergence} of having a small bump that is difficult to locate, but unlike \cite{bull2011convergence}, we exploit Lipschitz functions at the intermediate layers to ``amplify'' that bump (which means the original bump can have a much narrower width for a given RKHS norm).

We first show that, for fixed $\epsilon>0$, we can construct a base function $\gbar$ with height $2\epsilon$ and support radius $w=\Theta\big((\frac{\epsilon}{ BL^{m-1}})^{1/\nu}\big)$ for each network structure.
 
\subsection{Hard Function for Chains} \label{sec:hard_chains}

To construct a base function with chain structure, we define the following two scalar-valued functions.

Considering the ``bump'' function $h(\xbf)=\exp\big(\frac{-1}{1-\|\xbf\|_2^2}\big)\mathds{1}\{\|\xbf\|_2< 1\}$, for some $\epsilon_1>0$ and width $w>0$, we define \cite{bull2011convergence}
\begin{align}
    \htil(\xbf,\epsilon_1,w)&=\frac{2\epsilon_1}{h(\mathbf{0})}h\Big(\frac{\xbf}{w}\Big),\label{eq:htil}
\end{align}
which is a scaled bump function with height $2\epsilon_1$ and compact support $\{\xbf\in\Xc:\|\xbf\|_2<w\}$.

For a fixed $u>0$ and $\Ltil=\sqrt{2}B/\sqrt{k(0)-k(2u)}$, where $k$ is the Mat\'ern kernel on $\RR$ with smoothness $\nu$ and $k(|x-x'|)=k(x,x')$, we define
\begin{align}
    \gtil_k(\cdot)& = \frac{\Ltil}{2} \big(k(\cdot,u)-k(\cdot,-u)\big).\label{eq:gtilk}
\end{align}

\begin{restatable}[Hard function for chains]{thm}{hardchain}
\label{thm:hard_chain}
For the Mat\'ern kernel $k$ with smoothness $\nu\ge1$, sufficiently small $\epsilon>0$, and sufficiently large $B$, there exists $\epsilon_1>0,L=\Theta(B), c=\Theta(1), w=\Theta\big((\frac{\epsilon}{ B(cL)^{m-1}})^{1/\nu}\big),$ and $u>0$ such that $\gbar:=\fbar^{(m)}\circ\fbar^{(m-1)}\circ\cdots\circ\fbar^{(1)}$ with $\fbar^{(1)}(\cdot)=\htil(\cdot,\epsilon_1,w)$ and $\fbar^{(s)}(\cdot)=\gtil_k(\cdot)$ for $s\in [2,m]$ has the following properties:
\begin{itemize}
    \item $\fbar^{(i)}\in\Hc_k(B)\cap\Fc(L)$ for each $i\in[m]$,
    \item $\max_{\xbf} \gbar(\xbf)= 2\epsilon$,
    \item $\gbar(\xbf)>0$ when $\|\xbf\|_2<w$ , and $\gbar(\xbf)=0$ otherwise.
\end{itemize}
\end{restatable}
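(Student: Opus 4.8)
The plan is to construct the two building blocks separately, verify that each lies in $\Hc_k(B)\cap\Fc(L)$, and then analyze how the composition propagates the bump. First I would treat the inner layers $\fbar^{(s)}=\gtil_k$ for $s\in[2,m]$. Expanding $\|\gtil_k\|_k^2$ with the inner product \eqref{eq:norm} and the reproducing property gives $\|\gtil_k\|_k^2=\frac{\Ltil^2}{2}\big(k(0)-k(2u)\big)$, so the prescribed choice $\Ltil=\sqrt{2}B/\sqrt{k(0)-k(2u)}$ forces $\|\gtil_k\|_k=B$, placing $\gtil_k\in\Hc_k(B)$ for every admissible $u$. The membership $\gtil_k\in\Fc(L)$ then comes for free from the general RKHS Lipschitz bound $|f(x)-f(x')|\le B\sqrt{2(k(0)-k(|x-x'|))}$ (valid for Matérn with $\nu\ge1$, cf. \cite{lee2022multi}), which shows every $f\in\Hc_k(B)$ is $\Theta(B)$-Lipschitz; I set $L=\Theta(B)$ accordingly. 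Next I would record the elementary facts $\gtil_k(0)=0$ and $\gtil_k'(0)=-\Ltil\,k'(u)>0$ (using that the Matérn $k(\cdot)$ is strictly decreasing in its argument), and fix $u=\Theta(1)$ so that $\gtil_k'(0)=-\Ltil k'(u)=\Theta(B)=\Theta(L)$; calling this quantity $cL$ defines the constant $c=\Theta(1)$. By continuity I obtain a neighborhood $[0,x_0]$ on which $\gtil_k$ is strictly increasing, positive on $(0,x_0]$, and sandwiched between linear functions $\underline{c}\,L\,x\le\gtil_k(x)\le\overline{c}\,L\,x$ with $\underline{c},\overline{c}=\Theta(1)$.

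Second I would handle the needle $\fbar^{(1)}=\htil(\cdot,\epsilon_1,w)$. By construction this bump is nonnegative, radially supported on $\{\|\xbf\|_2<w\}$, with peak $\fbar^{(1)}(\mathbf{0})=2\epsilon_1$ (note the peak height is independent of $w$). Using the Sobolev characterization of the Matérn RKHS (order $\nu+d/2$), the scaled bump obeys $\|\htil(\cdot,\epsilon_1,w)\|_k=\Theta(\epsilon_1 w^{-\nu})$ for small $w$, so I choose $w=\Theta\big((\epsilon_1/B)^{1/\nu}\big)$ to guarantee $\|\fbar^{(1)}\|_k\le B$; the same RKHS Lipschitz bound then yields $\fbar^{(1)}\in\Fc(L)$ without extra work.

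Third I would assemble the composition. Since each $\gtil_k$ sends $0\mapsto0$ and is strictly increasing and positive on $(0,x_0]$, and since all intermediate values stay in $[0,2\epsilon]\subset[0,x_0]$ once $\epsilon$ is small, the composite $\gbar$ is positive exactly where $\fbar^{(1)}$ is positive and attains its maximum at $\xbf=\mathbf{0}$, giving the support statement. For the height, monotonicity gives $\max_\xbf\gbar(\xbf)=\gtil_k^{\circ(m-1)}(2\epsilon_1)$, a fixed continuous increasing function of $\epsilon_1$ vanishing at $0$, so the intermediate value theorem selects a unique $\epsilon_1$ with $\gtil_k^{\circ(m-1)}(2\epsilon_1)=2\epsilon$; the two-sided linear bounds force $\epsilon_1=\Theta\big(\epsilon/(cL)^{m-1}\big)$. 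Because this choice of $\epsilon_1$ does not involve $w$, I am free to then set $w$ by the norm constraint above; back-substituting into $w=\Theta((\epsilon_1/B)^{1/\nu})$ and using $L=\Theta(B)$ recovers $w=\Theta\big((\epsilon/(B(cL)^{m-1}))^{1/\nu}\big)$, completing all three claims.

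The main obstacle I anticipate is the amplification analysis: I must show that iterating $\gtil_k$ multiplies heights by a controlled $\Theta(L)$ factor per layer while keeping every intermediate value inside $[0,x_0]$, where $\gtil_k$ is monotone and obeys the two-sided linear bounds. This is exactly what turns a height-$2\epsilon_1$ bump into a height-$2\epsilon$ one with $\epsilon_1=\Theta(\epsilon/(cL)^{m-1})$, and it is also where the borderline smoothness $\nu=1$ (for which the Matérn Lipschitz estimate carries a logarithmic factor) forces me to lean on the ``$B$ sufficiently large / $\epsilon$ sufficiently small'' hypotheses to absorb lower-order terms.
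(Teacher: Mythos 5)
Your construction and analysis mirror the paper's proof almost step for step: the same needle $\htil(\cdot,\epsilon_1,w)$, the same antisymmetric amplifier $\gtil_k=\frac{\Ltil}{2}\big(k(\cdot,u)-k(\cdot,-u)\big)$ with $\|\gtil_k\|_k=B$ verified via \eqref{eq:norm}, the same two-sided linear envelope for $\gtil_k$ near the origin, the same intermediate-value calibration giving $\epsilon_1=\Theta\big(\epsilon/(cL)^{m-1}\big)$, and the same width scaling $w=\Theta\big((\epsilon_1/B)^{1/\nu}\big)$. The one step that does not survive scrutiny is your verification of the memberships $\fbar^{(i)}\in\Fc(L)$.

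You derive both $\gtil_k\in\Fc(L)$ and $\fbar^{(1)}\in\Fc(L)$ from the RKHS embedding bound $|f(x)-f(x')|\le B\sqrt{2\big(k(0)-k(|x-x'|)\big)}$. For the Mat\'ern kernel this yields a genuine Lipschitz constant only when $\nu>1$; at the boundary case $\nu=1$, which the theorem covers, one has $k(0)-k(r)=\Theta\big(r^2\log(1/r)\big)$ as $r\to0$, so the embedding bound only gives the modulus $\Theta\big(Br\sqrt{\log(1/r)}\big)$. Your suggested remedy --- leaning on ``$B$ sufficiently large / $\epsilon$ sufficiently small'' --- cannot close this: since $L=\Theta(B)$, the needed inequality $Br\sqrt{\log(1/r)}\le Lr$ reduces to $\sqrt{\log(1/r)}=O(1)$, which fails for small $r$ no matter how large $B$ is, because both sides scale linearly in $B$. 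This is precisely why the paper invokes the norm-implies-Lipschitz guarantee of \cite{lee2022multi} only for $\nu>1$. The repair is to bound the Lipschitz constants directly rather than through the RKHS norm, which is what the paper does: $\htil$ is a rescaled $C^\infty$ bump, hence Lipschitz with constant $L'=\Theta(\epsilon_1/w)$, and $\nu\ge1$ gives $L'=O(B)$; and $\gtil_k$ is a finite combination of kernel translates whose slopes on the relevant interval are controlled by $\Theta(1)$ ratios of kernel differences (equivalently by $\sup|k'|$ near $u$, which is finite since $k$ is smooth away from the diagonal), giving Lipschitz constant $\Theta(\Ltil)=\Theta(B)$; one then takes $L$ to be the larger of these two quantities. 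The functions you construct \emph{are} Lipschitz even at $\nu=1$ --- it is only your embedding-based proof of that fact which breaks --- so with this substitution the rest of your argument goes through exactly as in the paper.
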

\begin{proof}
For $k$ being the Mat\'ern kernel with smoothness $\nu$, recalling that
\begin{align}
    \htil(\xbf,\epsilon_1,w)&=\frac{2\epsilon_1}{h(\mathbf{0})}h\Big(\frac{\xbf}{w}\Big)
\end{align}
with $h$ being the bump function, \citep[Section A.2]{bull2011convergence} has shown that for some constant $C_1$,
\begin{align}
    \|\htil\|_k\le C_1 \frac{2\epsilon_1}{h(\mathbf{0})} \Big(\frac{1}{w}\Big)^\nu \|h\|_k.
\end{align}
Hence, we have $\|\htil\|_k\le B$ when $w=\big(\frac{2C_1\|h\|_k\epsilon_1}{h(\mathbf{0})B}\big)^{1/\nu}$. When $\frac{\epsilon_1}{B}$ is sufficiently small, the diameter of the support satisfies $2w\ll 1$. Since the bump function is infinitely differentiable, $\htil$ is Lipschitz continuous with some constant $L'=\Theta(\frac{\epsilon_1}{w})$, and our assumption of $\nu\ge 1$ further implies $L'=O(B)$.

Recall that for a fixed $u>0$ and the Mat\'ern kernel $k$ on a one-dimension domain, we define
\begin{align}
    \gtil_k(\cdot)& = \frac{\Ltil}{2} \big(k(\cdot,u)-k(\cdot,-u)\big)
\end{align}
with $\Ltil=\frac{\sqrt{2}B}{\sqrt{k(0)-k(2u)}}$. Applying \eqref{eq:norm} with $\gtil_k$ replacing both $f$ and $f'$, we have
\begin{align}
    \|\gtil_k\|_k=\frac{\Ltil}{2}\sqrt{k(u,u)+2k(u,-u)+k(-u,-u)}=\frac{\Ltil}{2}\sqrt{2k(0)-2k(2u)}=B,
\end{align}
where $k(d_{\xbf,\xbf'})=k(\xbf,\xbf')$ with $d_{\xbf,\xbf'}=\|\xbf-\xbf'\|_2$.

We choose the two constants $u>\util>0$ to satisfy the following: 
\begin{enumerate}
    \item $k(d_{\xbf,\xbf'})$ is non-increasing when $d_{\xbf,\xbf'}\in[u-\util,u+\util]$. 
    \item $k(u-\util)-k(u+\util)\ge\frac{2}{\Ltil}=\frac{\sqrt{2}\sqrt{k(0)-k(2u)}}{B}$.
    \item Defining $r_{\max}=\sup_{z\in(0,\util]}\frac{k(u-z)-k(u+z)}{2z}$ and $r_{\min}=\inf_{z\in(0,\util]}\frac{k(u-z)-k(u+z)}{2z}$, it holds that $r_{\max}=\Theta(1)$ and $r_{\min}=\Theta(1)$.
\end{enumerate}
Whenever $k(d_{\xbf,\xbf'})$ is continuous and non-increasing in $d_{\xbf,\xbf'}$ on $\RR^+$ (e.g., the Mat\'ern kernel), condition 1 is satisfied. This condition guarantees that $k(\cdot, u)$ is non-decreasing on $[0,\util]$, and $k(\cdot, -u)$ is non-increasing on $[0,\util]$, which together implies that $\gtil_k$ is non-decreasing on $[0,\util]$. Under condition 1, condition 2 is satisfied as long as $B$ is sufficiently large, and it implies that $\gtil_k(\util)\ge 1$. Examples of $\gtil_k$ for $k=k_\text{Mat\'ern}$ are given in Figure \ref{fig:gtil}.

Let $L=\max\{L',\sup_{z\in(0,\util]}\frac{\gtil_k(z)}{z}\}=\max\{L',r_{\max}\Ltil\}$ and $\alpha=\inf_{z\in(0,\util]}\frac{\gtil_k(z)}{zL}=\frac{r_{\min}\Ltil}{L}$. Then, it holds for all $z\in(0,\util]$ that
\begin{align}
    1<\alpha L\le \frac{\gtil_k(z)}{z} \le L. \label{eq:slope}
\end{align}
Clearly $\alpha$ cannot exceed $1$, and moreover, condition 3 above along with $L'=O(B)$ implies that $\alpha=\min\big\{r_{\min}\sqrt{\frac{2}{k(0)-k(2u)}}\frac{B}{L'},\frac{r_{\min}}{r_{\max}}\big\}=\Theta(1)$.

\begin{figure*}[t!]
	\minipage[t]{0.33\textwidth}
	\includegraphics[width=\linewidth]{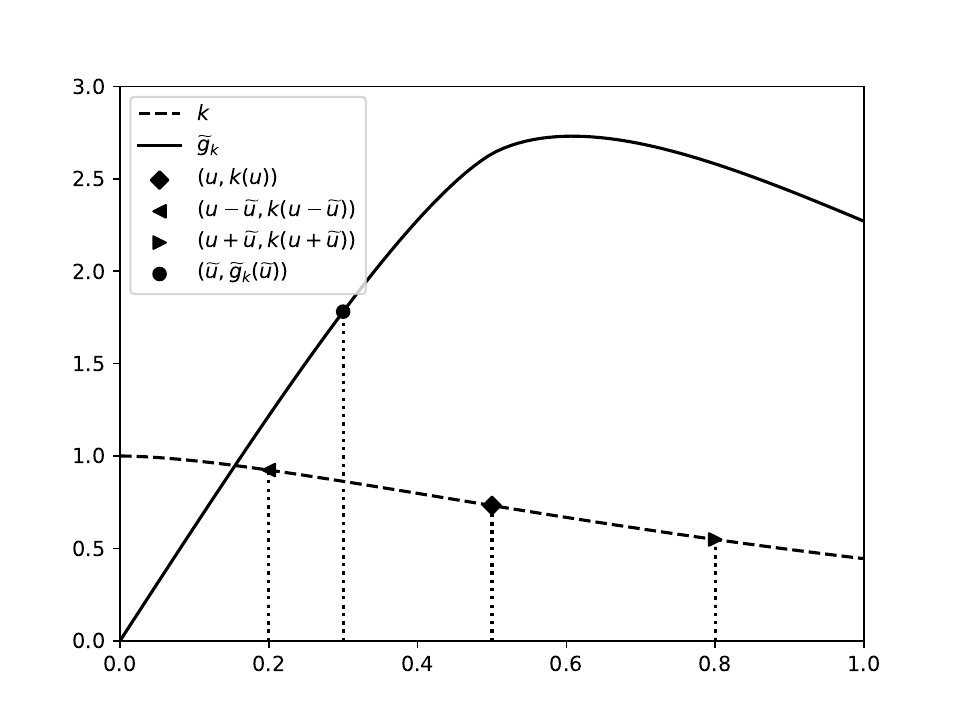}
	\caption*{$\nu=1$}
	\endminipage
	\minipage[t]{0.33\textwidth}
	\includegraphics[width=\linewidth]{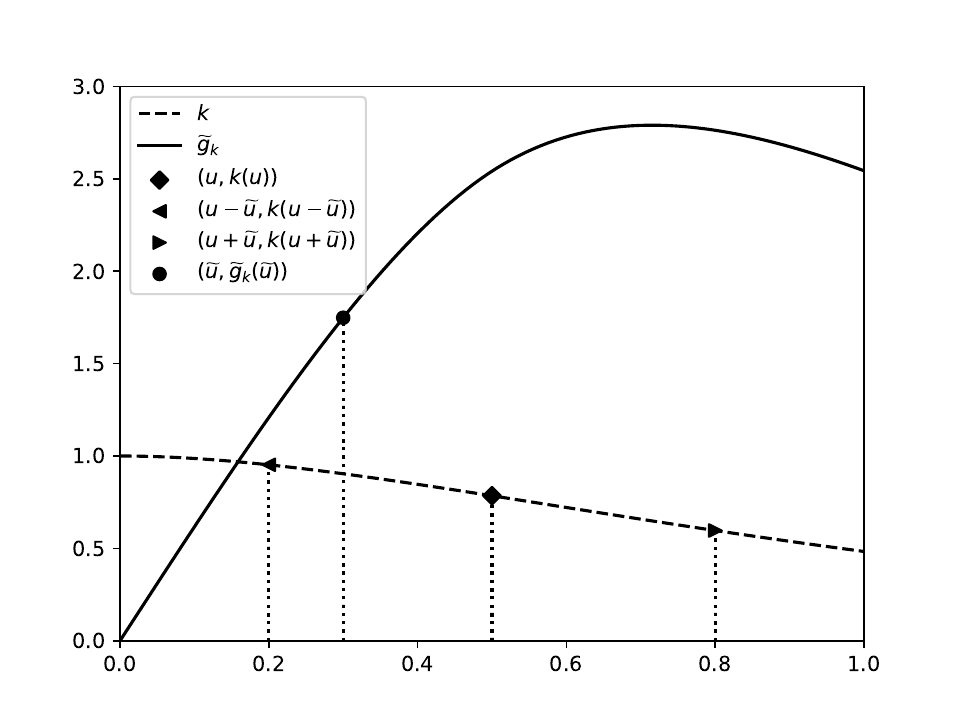}
	\caption*{$\nu=3/2$}
	\endminipage
	\minipage[t]{0.33\textwidth}
	\includegraphics[width=\linewidth]{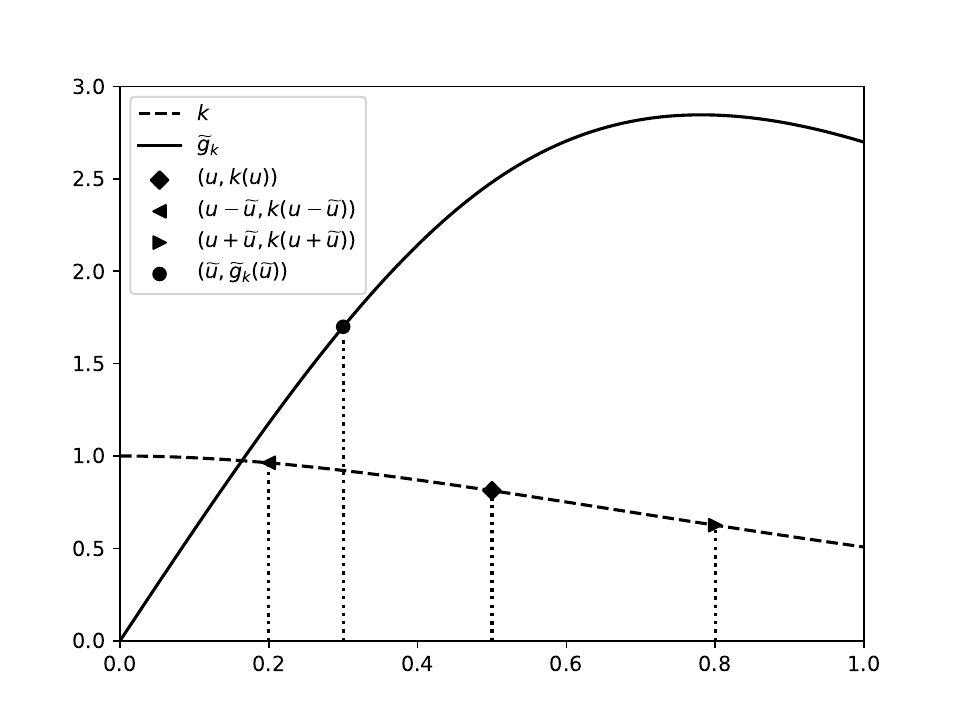}
	\caption*{$\nu=2$}
	\endminipage
	\caption{$k=k_\text{Mat\'ern}$ with lengthscale $l=1$ and smoothness $\nu$, and $\gtil_k$ with $B=5$, $u=0.5$, and $\util=0.3$.}
	\label{fig:gtil}
\end{figure*}


Defining $\gbar_i=\fbar^{(i)}\circ\fbar^{(i-1)}\circ\dots\circ\fbar^{(1)}$, we denote the height of $\gbar_i$ by $h_i=\max_\xbf \gbar_i(\xbf)$. If $h_{i-1}\le\util$, \eqref{eq:slope} implies
\begin{align}
    h_{i-1}<\alpha L h_{i-1}\le h_i \le L h_{i-1}.\label{eq:height}
\end{align}
Then, given $\epsilon\in(0, \gtil_k(\util)/2]$ (recalling that we assume $\epsilon \in (0,1/2]$ and noting that condition 2 implies $\gtil_k(\util)\ge 1$), we choose $\epsilon_1$ to satisfy 
\begin{align}
    (\fbar^{(m)} \circ \cdots \circ \fbar^{(2)}) (2\epsilon_1)=(\gtil_k \circ \cdots \circ \gtil_k) (2\epsilon_1)=2\epsilon,
\end{align}
By $h_1=2\epsilon_1$ and \eqref{eq:height}, this choice of $\epsilon_1$ must also satisfy 
\begin{align}
    2 (\alpha L)^{m-1} \epsilon_1 = (\alpha L)^{m-1} h_1 \le h_m = 2\epsilon \le L^{m-1} h_1 = 2 L^{m-1} \epsilon_1,
\end{align}
implying (via \eqref{eq:slope}) that
\begin{align}
    \frac{\epsilon}{L^{m-1}} \le \epsilon_1 \le \frac{\epsilon}{(\alpha L)^{m-1}}<\epsilon. \label{eq:alpha_fractions}
\end{align}
Since $\htil$ has a compact support with radius $w$ and $\gtil(0)=0$, $\gbar$ also has a compact support with radius
\begin{align}
    w = \Theta\bigg(\Big(\frac{\epsilon_1}{B}\Big)^{1/\nu}\bigg) = \Theta \bigg(\Big(\frac{\epsilon}{B(cL)^{m-1}}\Big)^{1/\nu}\bigg),
\end{align}
for some constant $c=\Theta(1)$.

Lastly, \eqref{eq:slope} implies $\gtil_k$ on $[0,\util]$ is a member of $\Fc(L)$, and $L\ge L'$ guarantees $\htil\in\Fc(L)$.
\end{proof}

\subsection{Hard Function for Multi-Output Chains}

In this section, we consider the operator-valued Mat\'ern kernel $\Gamma^{(i)}(\xbf,\xbf')=k^{(i)}(\xbf,\xbf')\Ibf^{(i+1)}$, where $k^{(i)}$ is the scalar-valued Mat\'ern kernel on $\RR^{d_i}$ and $\Ibf^{(i+1)}$ is the identity matrix of size $d_{i+1}$. Then, for a fixed $\ubf^{(i)}\in\RR^{d_i}$ and $\Ltil^{(i)}=\sqrt{2}B/\sqrt{k^{(i)}(0)-k^{(i)}(2\|\ubf^{(i)}\|_2)}$ where $k^{(i)}(\|\xbf-\xbf'\|_2)=k^{(i)}(\xbf,\xbf')$, we define
\begin{align}
    \gtil^{(i)}(\cdot)& = \frac{\Ltil^{(i)}}{2} \big(\Gamma^{(i)}(\cdot,\ubf^{(i)})-\Gamma^{(i)}(\cdot,-\ubf^{(i)})\big).
\end{align}

\begin{restatable}[Hard function for multi-output chains]{thm}{hardmul}
\label{thm:hard_mul}
Let $\Ibf^{(i)}$ denote the identity matrix of size $d_i$, let $\ebf^{(i)}_1$ denote the first column of $\Ibf^{(i)}$, and let $k^{(i)}$ denote the scalar-valued Mat\'ern kernel on $\RR^{d_i}$ with smoothness $\nu\ge1$. For $\Gamma^{(i)}(\xbf,\xbf')=k^{(i)}(\xbf,\xbf')\Ibf^{(i+1)}$, sufficiently small $\epsilon>0$, and sufficiently large $B$, there exists $\epsilon_1>0,L=\Theta(B),c=\Theta(1),w=\Theta\big((\frac{\epsilon}{ B(cL)^{m-1}})^{1/\nu}\big),$ and $u\in\RR$ such that $\gbar:=\fbar^{(m)}\circ\fbar^{(m-1)}\circ\cdots\circ\fbar^{(1)}$ with $\fbar^{(1)}(\cdot)=\htil(\cdot,\epsilon_1,w)\ebf_1^{(2)}$ and $\fbar^{(s)}(\cdot)=\gtil^{(s)}(\cdot)\ebf_1^{(s+1)}$ with $\ubf^{(s)}=u\ebf_1^{(s)}$ for $s\in [2,m]$ has the following properties:
\begin{itemize}
    \item $\fbar^{(i)}\in\Hc_{\Gamma^{(i)}}(B)\cap\Fc(L)$ for each $i\in[m]$, 
    \item $\max_{\xbf} \gbar(\xbf)= 2\epsilon$,
    \item $\gbar(\xbf)>0$ when $\|\xbf\|_2<w$ , and $\gbar(\xbf)=0$ otherwise.
\end{itemize}
\end{restatable}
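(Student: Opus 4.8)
The plan is to show that this vector-valued construction is an exact coordinate ``lift'' of the scalar chain of Theorem~\ref{thm:hard_chain}, so that all three claimed properties transfer directly. The key structural fact is that every intermediate output is confined to the ray through the first basis vector. Since $\htil(\cdot,\epsilon_1,w)\ge0$, we have $\xx{2}=\fbar^{(1)}(\xbf)=\htil(\xbf,\epsilon_1,w)\,\ebf_1^{(2)}=c_2\,\ebf_1^{(2)}$ with $c_2\ge0$; and inductively, using $\fbar^{(s)}(\cdot)=\gtil^{(s)}(\cdot)\ebf_1^{(s+1)}$ together with $\Gamma^{(s)}(\cdot,\cdot)=k^{(s)}(\cdot,\cdot)\Ibf^{(s+1)}$, each $\xx{s+1}=c_{s+1}\ebf_1^{(s+1)}$. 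Writing $k^{(s)}$ as a function of Euclidean distance and using $\ubf^{(s)}=u\ebf_1^{(s)}$, isotropy of the Mat\'ern kernel gives, for $c\ge0$, that $k^{(s)}(c\ebf_1^{(s)},u\ebf_1^{(s)})=k^{(s)}(|c-u|)$ and $k^{(s)}(c\ebf_1^{(s)},-u\ebf_1^{(s)})=k^{(s)}(c+u)$, so the scalar coefficients obey $c_{s+1}=\tfrac{\Ltil^{(s)}}{2}\big(k^{(s)}(|c_s-u|)-k^{(s)}(c_s+u)\big)$, which is precisely the scalar map $\gtil_k$ analyzed in the chain proof (the same one-dimensional Mat\'ern profile arises at every layer regardless of $d_s$).

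Granting this reduction, I would reuse the parameter choices $u,\util,L,\alpha,\epsilon_1$ from the proof of Theorem~\ref{thm:hard_chain} verbatim: the scalar coefficient along the ray then evolves identically, the composite attains height $2\epsilon$, the support radius is $w=\Theta\big((\epsilon/(B(cL)^{m-1}))^{1/\nu}\big)$, and $\gbar(\xbf)>0$ iff $\|\xbf\|_2<w$ holds because $\htil$ has support of radius $w$, each $\gtil^{(s)}$ sends positive scalar coefficients to positive ones, and $\fbar^{(s)}(\mathbf{0})=\mathbf{0}$. Two residual checks remain. For the RKHS norm with $s\ge2$, writing $\fbar^{(s)}=\Gamma^{(s)}(\cdot,u\ebf_1^{(s)})\tfrac{\Ltil^{(s)}}{2}\ebf_1^{(s+1)}-\Gamma^{(s)}(\cdot,-u\ebf_1^{(s)})\tfrac{\Ltil^{(s)}}{2}\ebf_1^{(s+1)}$ and applying \eqref{eq:mul_norm} with $\Gamma^{(s)}=k^{(s)}\Ibf^{(s+1)}$ and $\|\ebf_1^{(s+1)}\|_2=1$, the coefficient vectors contribute only unit inner-product factors, yielding $\|\fbar^{(s)}\|_{\Gamma^{(s)}}^2=\tfrac{(\Ltil^{(s)})^2}{2}\big(k^{(s)}(0)-k^{(s)}(2u)\big)=B^2$, exactly the scalar computation of $\|\gtil_k\|_k$. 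For the first layer, $\fbar^{(1)}=\sum_i a_i\,\Gamma^{(1)}(\cdot,\xbf_i)\ebf_1^{(2)}$ where $\htil=\sum_i a_i k^{(1)}(\cdot,\xbf_i)$, so $\|\fbar^{(1)}\|_{\Gamma^{(1)}}=\|\htil\|_{k^{(1)}}\le B$.

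For the Lipschitz bound, $\fbar^{(s)}(\zbf)=c^{(s)}(\zbf)\,\ebf_1^{(s+1)}$ with the scalar $c^{(s)}(\zbf)=\tfrac{\Ltil^{(s)}}{2}\big(k^{(s)}(\zbf,u\ebf_1^{(s)})-k^{(s)}(\zbf,-u\ebf_1^{(s)})\big)$ gives $\|\fbar^{(s)}(\zbf)-\fbar^{(s)}(\zbf')\|_2=|c^{(s)}(\zbf)-c^{(s)}(\zbf')|$, so the Lipschitz constant of $\fbar^{(s)}$ equals that of $c^{(s)}$; restricted to the $\ebf_1^{(s)}$-ray, where all evaluations occur, isotropy identifies $c^{(s)}$ with the scalar $\gtil_k$, so \eqref{eq:slope} yields the same constant $L$ as in the chain proof, and $\fbar^{(1)}$ is handled identically through $\htil$. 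The main obstacle is not any single estimate but establishing the reduction cleanly: one must check that the composition never leaves the $\ebf_1$ ray and that every kernel evaluation encountered collapses to the one-dimensional Mat\'ern kernel. Once this is in place, the norm, height, width, and Lipschitz quantities are literally those of the already-proven chain construction, and Theorem~\ref{thm:hard_chain} transfers with no new analysis.
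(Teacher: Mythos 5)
Your proposal is correct and follows essentially the same route as the paper's own proof: confine all intermediate activity to the first-coordinate ray, use isotropy of the Mat\'ern kernel to identify each layer restricted to that ray with the scalar map $\gtil_k$, verify the RKHS norms via \eqref{eq:mul_norm} (giving exactly $B$ for layers $s\ge 2$ and $\|\htil\|_{k^{(1)}}\le B$ for the first layer), and then import the parameters $u,\util,L,\alpha,\epsilon_1$ and the height/width/support conclusions from the chain construction of \cref{thm:hard_chain} unchanged. No gaps beyond those already present in the paper's argument.
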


\begin{proof}
The rough idea is to reduce to the case of regular chains by only making use of a single coordinate throughout the network.  We leave open the question as to whether the lower bound can be improved by utilizing all coordinates.
    
With $\Ibf$ denoting the identity matrix of size $d_2$ and $\ebf_1$ denoting the first column of $\Ibf$, we aim to show that $\fbar^{(1)}\in\Hc_{\Gamma^{(1)}}(B)$ by showing that if $\Gamma(\cdot,\cdot)=k(\cdot,\cdot)\Ibf\in\RR^{d_2\times d_2}$ and $\htil\in\Hc_k(B)$ for some scalar-valued kernel $k$ and some constant $B$, then the function $\htil'(\cdot)=\htil(\cdot)\ebf_1$ satisfies $\htil' \in\Hc_\Gamma(B)$. Since $\htil\in\Hc_k(B)$, there exists a sequence $\{(a_i,\xbf_i)\}_{i=1}^\infty$ such that $\htil(\cdot)=\sum_{i=1}^\infty a_i k(\cdot,\xbf_i)$. Then, using the definition of RKHS norm for vector-valued functions in \eqref{eq:mul_norm}, we have
\begin{align}
    \|\htil'\|_\Gamma^2&=\lim_{n\to\infty}\Big\|\sum_{i=1}^n a_i k(\cdot,\xbf_i)\ebf_1\Big\|_\Gamma^2\\
    &=\lim_{n\to\infty}\Big\|\sum_{i=1}^n a_i \Gamma(\cdot,\xbf_i)\ebf_1\Big\|_\Gamma^2\\
    &=\sum_{i,j=1}^\infty \langle \Gamma(\xbf_i, \xbf_j)(a_i\ebf_1),a_j\ebf_1\rangle \\
    &=\sum_{i,j=1}^\infty a_i a_j k(\xbf_i,\xbf_j)\\
    &=\|\htil\|_k^2 \\
    &\le B^2,
\end{align}
and therefore $\htil'\in\Hc_\Gamma(B)$. Next, for $i\in[2,m]$, with $\ubf^{(i)}=u\ebf_1^{(i)}$ we have
\begin{align}
    \|\fbar^{(i)}\|_\Gamma &= \frac{\Ltil^{(i)}}{2}\sqrt{ \langle \Gamma^{(i)}(\ubf^{(i)},\ubf^{(i)})\ebf_1,\ebf_1\rangle - 2\langle \Gamma^{(i)}(\ubf^{(i)},-\ubf^{(i)})\ebf_1,\ebf_1\rangle + \langle \Gamma^{(i)}(-\ubf^{(i)},-\ubf^{(i)})\ebf_1,\ebf_1\rangle } \\
    &= \frac{\Ltil^{(i)}}{2}\sqrt{k^{(i)}(\ubf^{(i)},\ubf^{(i)})-2k^{(i)}(\ubf^{(i)},-\ubf^{(i)})+k^{(i)}(-\ubf^{(i)},-\ubf^{(i)})}\\
    &= \frac{\Ltil^{(i)}}{2}\sqrt{2k^{(i)}(0)-2k^{(i)}(2u)}\\
    &=B.
\end{align}
Since $\htil\in\Fc(L')$ for some $L'>1$, we also have $\htil'\in\Fc(L')$.

We reuse the choice of $u$ and $\util$ in the previous case. Then, for any $z\in\RR$, with $\zbf^{(i)}=z\ebf_1^{(i)}$ and $\ubf^{(i)}=u\ebf_1^{(i)}$, we have
\begin{align}
    \fbar^{(i)}(\zbf)&=\gtil^{(i)}(\zbf^{(i)},\ubf^{(i)})\ebf_1^{(i+1)}\\
    &=\frac{\Ltil^{(i)}}{2}\big(\Gamma^{(i)}(\zbf^{(i)},\ubf^{(i)})-\Gamma^{(i)}(\zbf^{(i)},-\ubf^{(i)})\big)\ebf_1^{(i+1)}\\
    &=\frac{\Ltil^{(i)}}{2}\big(k^{(i)}(\zbf^{(i)},\ubf^{(i)})-k^{(i)}(\zbf^{(i)},-\ubf^{(i)})\big)\ebf_1^{(i+1)}\\
    &=\frac{\Ltil^{(i)}}{2}\big(k(\|\zbf^{(i)}-\ubf^{(i)}\|)-k(\|\zbf^{(i)}+\ubf^{(i)}\|)\big)\ebf_1^{(i+1)}\\
    &=\frac{\Ltil}{2}\big(k(|z-u|)-k(|z+u|)\big)\ebf_1^{(i+1)}\\
    &=\gtil_k(z)\ebf_1^{(i+1)},
\end{align}
where $\gtil_k(\cdot)$ depending on $u$ is defined in \eqref{eq:gtilk}. Hence, as illustrated in \cref{fig:mul_lower}, for any input $\xbf\in[0,1]^d$ of $\gbar$, we have
\begin{align}
    \xx{2,1}&=\htil(\xbf),\\
    \xx{2,j}&=0 &&\hspace{-25ex}\text{ for }j\ge 2, \\
    \xx{i+1,1}&=\gtil_k(\xx{i,1})&&\hspace{-25ex}\text{ for }i\ge 2,\\
    \xx{i+1,j}&=0&&\hspace{-25ex}\text{ for }i\ge 2\text{ and }j\ge 2.
\end{align}
\begin{figure}[t!]
	\centering
	\includegraphics[width=0.6\textwidth]{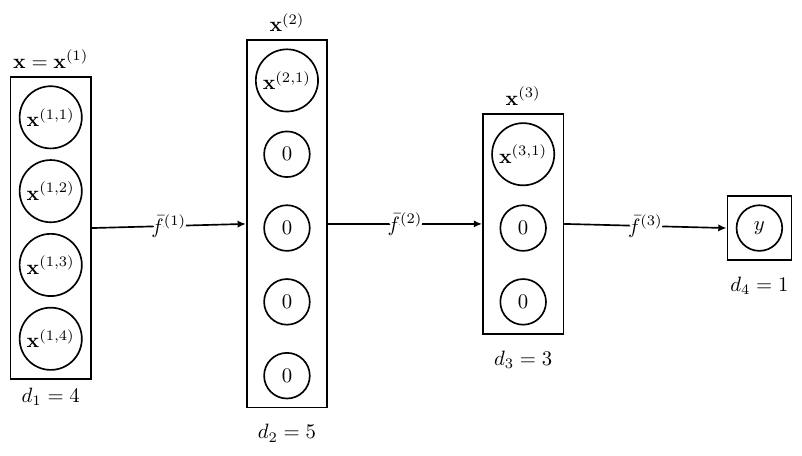}
	\caption{Illustration of $\gbar$ for multi-output chain.}
	\label{fig:mul_lower}
\end{figure}

By a similar argument to the case of single-output chains, there exists $L\ge L'$ and $\alpha=\Theta(1)$ such that for all $z\in[0,\util]$,
\begin{align}
    1<\alpha L\le \frac{\gtil_k(z)}{z}\le L.\label{eq:mul_slope}
\end{align}

For $2\epsilon\le\gtil_k(\util)$, we choose $\epsilon_1$ to satisfy
\begin{align}
    (\fbar^{(m)}\circ\fbar^{(m-1)}\circ\cdots\circ\fbar^{(2)})(2\epsilon_1\ebf_1^{(2)})=(\gtil_k\circ\cdots\circ\gtil_k)(2\epsilon_1)=2\epsilon,
\end{align}
and $\gbar$ has a compact support with radius
\begin{align}
    w = \Theta\Bigg(\Big(\frac{\epsilon_1}{B}\Big)^{1/\nu}\Bigg) = \Theta \Bigg(\Big(\frac{\epsilon}{B(cL)^{m-1}}\Big)^{1/\nu}\Bigg),
\end{align}
for some constant $c=\Theta(1)$.

Lastly, since $L\ge L'$, we immediately deduce that $\fbar^{(i)}\in\Fc(L)$ on its domain for each $i\in[m]$.
\end{proof}

\subsection{Hard Function for Feed-Forward Networks}

For a fixed $\ubf^{(i)}\in\RR^{d_i}$ and $\Ltil^{(i)}=\sqrt{2}B/\sqrt{k^{(i)}(0)-k^{(i)}(2\|\ubf^{(i)}\|_2)}$, where $k^{(i)}(\|\xbf-\xbf'\|_2)=k^{(i)}(\xbf,\xbf')$, we define
\begin{align}
    \gtil_k^{(i)}(\cdot)& = \frac{\Ltil^{(i)}}{2} \big(k^{(i)}(\cdot,\ubf^{(i)})-k^{(i)}(\cdot,-\ubf^{(i)})\big).
\end{align}

\begin{restatable}[Hard function for feed-forward networks]{thm}{hardnet}
\label{thm:hard_net}
Let $\ebf_1^{(i)}$ denote the first column of identity matrix of size $d_i$. For the Mat\'ern kernel $k$ with smoothness $\nu\ge 1$, sufficiently small $\epsilon>0$, and sufficiently large $B$, there exists $\epsilon_1>0,L=\Theta(B),c=\Theta(1), w=\Theta\big((\frac{\epsilon}{ B(cL)^{m-1}})^{1/\nu}\big),$ and $u>0$ such that $\gbar:=\fbar^{(m)}\circ\fbar^{(m-1)}\circ\cdots\circ\fbar^{(1)}$ with $\fbar^{(i)}(\cdot)=[\fbar^{(i,j)}(\cdot)]_{j=1}^{d_{i+1}}$ for $i\in[m]$, $\fbar^{(1,1)}(\cdot)=\htil(\cdot,\epsilon_1,w)$, $\fbar^{(s,1)}(\cdot)=\gtil_k^{(s)}(\cdot)$ with $\ubf^{(s)}=u\ebf_1^{(s)}$, and $\fbar^{(s,r)}(\cdot)=0$ for $s\in [2,m]$ and $r\neq 1$ has the following properties:
\begin{itemize}
    \item $\fbar^{(i,j)}\in\Hc_{k}(B)\cap\Fc(L)$ for each $i\in[m],j\in[d_{i+1}]$;
    \item $\max_{\xbf} \gbar(\xbf)= 2\epsilon$;
    \item $\gbar(\xbf)>0$ when $\|\xbf\|_2<w$ , and $\gbar(\xbf)=0$ otherwise.
\end{itemize}
\end{restatable}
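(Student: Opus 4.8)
The plan is to reduce the entire construction to the scalar chain case of \cref{thm:hard_chain}, exploiting the fact that all signal is funneled through the first coordinate while every other coordinate is identically zero. First I would dispatch the norm and Lipschitz requirements component-by-component. The off-diagonal components $\fbar^{(s,r)}\equiv 0$ for $r\neq 1$ trivially satisfy $\|0\|_k=0\le B$ and $0\in\Fc(L)$. For $\fbar^{(1,1)}=\htil(\cdot,\epsilon_1,w)$, I reuse the bound $\|\htil\|_k\le B$ with the same choice $w=\Theta\big((\epsilon_1/B)^{1/\nu}\big)$, together with $\htil\in\Fc(L')$ and $L'=O(B)$, exactly as established inside the proof of \cref{thm:hard_chain}. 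For $\fbar^{(s,1)}=\gtil_k^{(s)}$, the identity $\|\gtil_k^{(s)}\|_k=B$ follows from the same expansion via \eqref{eq:norm}, since $\ubf^{(s)}=u\ebf_1^{(s)}$ gives $\|\ubf^{(s)}\|_2=u$ and hence $\Ltil^{(s)}=\sqrt{2}B/\sqrt{k(0)-k(2u)}=\Ltil$.

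Second, I would trace the network dynamics to confirm the reduction. Because $\ubf^{(s)}=u\ebf_1^{(s)}$ and each input to layer $s$ has only its first coordinate nonzero, evaluating $\gtil_k^{(s)}$ at $z\ebf_1^{(s)}$ collapses the kernel distances to $|z-u|$ and $|z+u|$, recovering the one-dimensional $\gtil_k(z)$ of \eqref{eq:gtilk}. Thus for any $\xbf\in[0,1]^d$ the intermediate outputs obey $\xx{2,1}=\htil(\xbf)$, $\xx{i+1,1}=\gtil_k(\xx{i,1})$ for $i\ge 2$, and $\xx{i,j}=0$ for all $j\ge 2$; this is precisely the scalar cascade of \cref{thm:hard_chain}. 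Consequently the slope bound $1<\alpha L\le \gtil_k(z)/z\le L$ on $(0,\util]$ carries over verbatim for a suitable $L\ge L'$ and $\alpha=\Theta(1)$.

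Third, the height and support claims follow directly from the identified dynamics. Defining the per-layer height through the (unique) nonzero coordinate, the recursion $h_1=2\epsilon_1$, $h_i=\gtil_k(h_{i-1})$ is the chain recursion, so choosing $\epsilon_1$ with $(\gtil_k\circ\cdots\circ\gtil_k)(2\epsilon_1)=2\epsilon$ yields $\max_\xbf\gbar(\xbf)=2\epsilon$ and, via \eqref{eq:slope}, the bracketing $\epsilon/L^{m-1}\le\epsilon_1\le\epsilon/(\alpha L)^{m-1}$. Since $\htil$ has compact support of radius $w$ and $\gtil_k(0)=0$ with $\gtil_k$ strictly increasing on $[0,\util]$, the support claim that $\gbar(\xbf)>0$ precisely when $\|\xbf\|_2<w$ follows, and the stated $w=\Theta\big((\epsilon/(B(cL)^{m-1}))^{1/\nu}\big)$ is inherited from the chain expression.

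The main obstacle, albeit a mild one, is keeping the component-wise Lipschitz bookkeeping consistent. The theorem only requires each scalar $\fbar^{(i,j)}\in\Fc(L)$, not that the full vector-valued layer map be $L$-Lipschitz (it is in fact $\sqrt{d_{i+1}}L$-Lipschitz, as in \cref{cor:fill_net}). I must therefore make explicit that the height recursion depends solely on the scalar behavior of the active coordinate, which is governed by the one-dimensional $\gtil_k$ and hence by $L$ alone, so the factors $\sqrt{d_{i+1}}$ do not enter the height or support analysis. Once this separation is spelled out, the remaining steps are essentially identical to the two preceding hard-function constructions.
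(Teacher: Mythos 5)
Your proposal is correct and follows essentially the same route as the paper's proof: reduce to the scalar chain construction by routing all signal through the first coordinate (so that $\gtil_k^{(s)}(z\ebf_1^{(s)})=\gtil_k(z)$ and all other coordinates vanish), verify $\|\gtil_k^{(s)}\|_k=B$ and the $\htil$ properties exactly as in \cref{thm:hard_chain}, then reuse the slope bound, the choice of $\epsilon_1$, and the support-radius calculation. Your added remarks — that the zero components trivially lie in $\Hc_k(B)\cap\Fc(L)$, and that only the scalar components (not the $\sqrt{d_{i+1}}L$-Lipschitz vector map) need to satisfy the $\Fc(L)$ requirement — are consistent with, and slightly more explicit than, the paper's treatment.
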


\begin{proof}
We adopt a similar general approach to the case of chains and multi-output chains, but with some different details.  

As noted in our analysis of chains, we have $\htil\in\Hc_{k}(B)$ and $\htil\in\Fc(L')$ for some constant $L'>1$, and we also have
\begin{align}
    \|\gtil^{(i)}_k\|_k&=\frac{\Ltil^{(i)}}{2}\sqrt{k^{(i)}(\ubf^{(i)},\ubf^{(i)})-2k^{(i)}(\ubf^{(i)},-\ubf^{(i)})+k^{(i)}(-\ubf^{(i)},-\ubf^{(i)})}\\
    &=\frac{\Ltil^{(i)}}{2}\sqrt{2k^{(i)}(0)-2k^{(i)}(2\|\ubf^{(i)}\|_2)}\\
    &= B.
\end{align}
Reusing the choice of $u$ and $\util$ in the case of chains, with $\zbf^{(i)}=z\ebf_1^{(i)}$ and $\ubf^{(i)}=u\ebf_1^{(i)}$, we have
\begin{align}
    \gtil^{(i)}_k(\zbf^{(i)})&= \frac{\Ltil^{(i)}}{2} \big(k^{(i)}(\zbf^{(i)},\ubf^{(i)})-k^{(i)}(\zbf^{(i)},-\ubf^{(i)})\big)\\
    &= \frac{\Ltil^{(i)}}{2} \big(k^{(i)}(\|\zbf^{(i)}-\ubf^{(i)}\|_2)-k^{(i)}(\|\zbf^{(i)}+\ubf^{(i)}\|_2)\big)\\
    &= \frac{\Ltil^{(i)}}{2} \big(k^{(i)}(|z-u|)-k^{(i)}(|z+u|)\big)\\
    &= \gtil_k(z)
\end{align}
\begin{figure}[t!]
	\centering
	\includegraphics[width=0.6\textwidth]{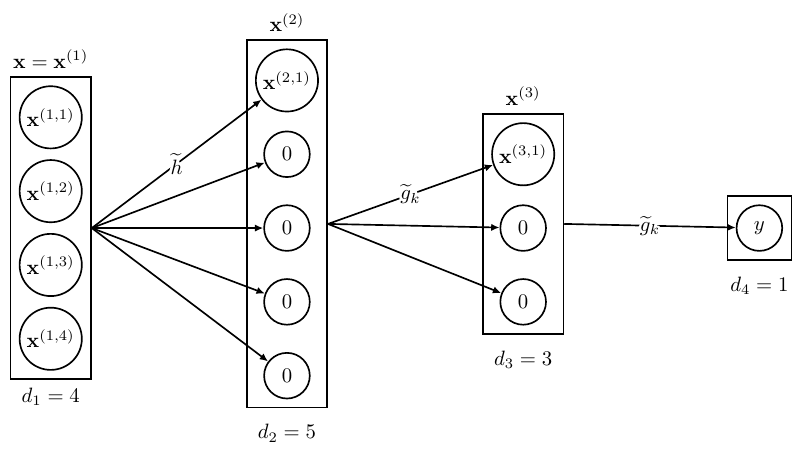}
	\caption{Illustration of $\gbar$ for feed-forward network.}
	\label{fig:net_lower}
\end{figure}

Hence, as illustrated in \cref{fig:net_lower}, for any input $\xbf\in[0,1]^d$ of $\gbar$, we have
\begin{align}
    \xx{2,1}&=\htil(\xbf),\\
    \xx{2,j}&=0 &&\hspace{-25ex}\text{ for }j\ge 2, \\
    \xx{i+1,1}&=\gtil_k(\xx{i,1})&&\hspace{-25ex}\text{ for }i\ge 2,\\
    \xx{i+1,j}&=0&&\hspace{-25ex}\text{ for }i\ge 2\text{ and }j\ge 2.
\end{align}

By a similar argument to the previous cases, there exists $L\ge L'$ and $\alpha=\Theta(1)$ such that for all $z\in[0,\util]$,
\begin{align}
    1<\alpha L\le \frac{\gtil_k(z)}{ z}\le L.\label{eq:net_slope}
\end{align}

For $2\epsilon\le\gtil_k(\util)$, we choose $\epsilon_1$ satisfying
\begin{align}
    (\fbar^{(m)}\circ\fbar^{(m-1)}\circ\cdots\circ\fbar^{(2)})(2\epsilon_1\ebf_1^{(2)})=(\gtil_k\circ\cdots\circ\gtil_k)(2\epsilon_1)=2\epsilon,
\end{align}
and $\gbar$ has a compact support with radius
\begin{align}
    w = \Theta\Bigg(\Big(\frac{\epsilon_1}{B}\Big)^{1/\nu}\Bigg) = \Theta \Bigg(\Big(\frac{\epsilon}{B(cL)^{m-1}}\Big)^{1/\nu}\Bigg),
\end{align}
for some constant $c=\Theta(1)$.

Lastly, due to $L\ge L'$, $\fbar^{(i,j)}\in\Fc(L)$ on its domain for each $i\in[m],j\in[d_{i+1}]$.
\end{proof}

\subsection{Lower Bound on Simple Regret}

With the preceding ``hard functions'' established, the final step is to essentially follow that of \cite{bull2011convergence}.  We provide the details for completeness.

By splitting the domain $\Xc=[0,1]^d$ into a grid of dimension $d$ with spacing $2w$, we construct $M=(\lfloor \frac{1}{2w} \rfloor)^d$ functions with disjoint supports by shifting the origin of $\gbar$ to the center of each cell and cropping the shifted function into $[0,1]^d$, which is denoted by $\Gc=\{g_1,\dots,g_M\}$. For $g$ sampled uniformly from $\Gc$, we first show that the expected simple regret $\EE[r_T^\ast]$ of an arbitrary algorithm is lower bounded, then there must exist a function in $\Gc$ that has the same lower bound.

Now, we prove \cref{thm:lower_sim}, which is restated as follows.
\lowersim*
\begin{proof}
    Since the theorem concerns worst-case RKHS functions, it suffices to establish the same result when the function $g$ is drawn uniformly from the hard subset $\Gc$ introduced above.  Given that $g$ is random, Yao's minimax principle implies that it suffices to consider deterministic algorithms.
    
    For any given deterministic algorithm, let $\Xc'_T=\{\xbf'_1,\xbf'_2,\cdots,\xbf'_T\}$ be the set of points that it would sample if the function were zero everywhere.  We observe that if $g = g_i$ for some $g_i$ satisfying $g(\xbf'_1) = g(\xbf'_2) = \dotsc = g(\xbf'_T) = 0$ (i.e., the bump in $g_i$ does not cover any of the points in $\Xc'_T$), then the algorithm will precisely sample $\xbf'_1,\xbf'_2,\cdots,\xbf'_T$.  Moreover, if there are multiple such functions $g_i$, then the final returned point $\xbf_T^*$ can only be below $2\epsilon$ (i.e., the bump height) for at most one of those functions, since their supports are disjoint by construction.
    
    Now suppose that $T \le \frac{M}{2} - 1$, where $M$ is the number of functions in $\Gc$.  This means that regardless of the values $\xbf'_1,\xbf'_2,\cdots,\xbf'_T$, there are at least $\frac{M}{2}+1$ functions in $\Gc$ such that the sampled values are all zero.  Hence, there are at least $\frac{M}{2}$ functions where a simple regret of at least $2\epsilon$ is incurred, meaning that the average simple regret is at least $\epsilon$.
    
    With this result in place, \cref{thm:lower_sim} immediately follows by substituting $M=\Theta\big((1/w)^d\big)$ and $w=\Theta \big((\frac{\epsilon}{B(cL)^{m-1}})^{1/\nu}\big)$.
\end{proof}

\subsection{Behavior of $c^{m-1}$}
\label{sec:dis_c}

We readily find from \eqref{eq:alpha_fractions} (and the counterparts in the other analyses) that the constant $c$ lies in the range $[\alpha,1]$.  The closer $c$ is to 1, the higher our lower bound is.  To understand how close $c$ can be to 1, we recall from Section \ref{sec:hard_chains} that $\alpha=\frac{r_{\min}\Ltil}{L}$ with $L=\max\{L',r_{\max}\Ltil\}$, $L' = \Theta\big( \frac{\epsilon_1}{w} \big)$, and $\Ltil=\frac{\sqrt{2}B}{\sqrt{k(0)-k(2u)}}$ (as well as $r_{\max}=\sup_{z\in(0,\util]}\frac{k(u-z)-k(u+z)}{2z}$ and $r_{\min}=\inf_{z\in(0,\util]}\frac{k(u-z)-k(u+z)}{2z}$).  

Observe that when $\frac{\epsilon_1}{B} \ll 1$ and $\nu>1$, we have $L'=\Theta\big(\frac{\epsilon_1}{w}\big)=\Theta\big(\frac{\epsilon_1}{(\epsilon_1/B)^{1/\nu}}\big)\ll B$. Since $\Ltil = \Theta(B)$, we conclude that $L = r_{\max}\Ltil$, and hence
\begin{align}
	\alpha=\frac{r_{\min}}{r_{\max}}=\frac{\inf_{z\in(0,\util]}\frac{k(u-z)-k(u+z)}{2z}}{\sup_{z\in(0,\util]}\frac{k(u-z)-k(u+z)}{2z}}.
\end{align}
In our analysis, we need to choose $u$ and $\util$ such that $\gtil_k(\util)\ge 2\epsilon$. Thus, as $\epsilon$ decreases towards zero, $u$ and $\util$ can also be chosen arbitrarily small, which in turn implies that $r_{\min}$ and $r_{\max}$ are arbitrarily close to each other as long as the kernel function has a finite slope near zero.

In fact, in \cref{fig:gtil} we observe that $u$ and $\util$ do not even need to be particularly small to have $\frac{r_{\min}}{r_{\max}}$ close to 1; with $(u,\util) = (0.5,0.3)$ we get this ratio being 0.958, 0.939, and 0.934 for $\nu = 1, 1.5,$ and 2 respectively.

\section{Lower Bound on Cumulative Regret (Proof of \cref{thm:lower_cml})}
\label{sec:lower_cml}
In this section, we prove \cref{thm:lower_cml}, which is restated as follows.
\lowercml*

\begin{proof}
By rearranging \cref{thm:lower_sim}, we have
$\epsilon=\Omega(B(cL)^{m-1}T^{-\nu/d})$, which implies that the lower bound on cumulative regret is
\begin{align}
	\EE[R_T]=\Omega(\epsilon T)=\Omega(B(cL)^{m-1}T^{1-\nu/d}).
\end{align}
However, this lower bound is loose when $d<\nu$. \cref{thm:lower_sim} implies that to have simple regret at most $\epsilon=\Theta(1)$ requires $T=\Omega\big((B(cL)^{m-1})^{d/\nu}\big)$. When $T= \Theta\big((B(cL)^{m-1})^{d/\nu}\big)$, we have $\EE[r^\ast_T]=\Omega(1)$ and $\EE[R_T]=\Omega\big((B(cL)^{m-1})^{d/\nu}\big)$. Since cumulative regret is always non-decreasing in $T$, when $T=\Omega\big((B(cL)^{m-1})^{d/\nu}\big)$, we also have $\EE[R_T]=\Omega\big((B(cL)^{m-1})^{d/\nu}\big)$. 

For $T=o\big((B(cL)^{m-1})^{d/\nu}\big)$ (which is equivalent to $T=o(B(cL)^{m-1}T^{1-\nu/d})$), we show by contradiction that $\EE[R_T]=\Omega(T)$. Suppose on the contrary that there exists an algorithm guaranteeing $\EE[R_T]=o(T)$ when $T=T_0$ for some $T_0 = o\big((B(cL)^{m-1})^{d/\nu}\big)$. Then, by repeatedly selecting the best point among the first $T_0$ time steps, the algorithm attains $\EE[R_T]=o\big((B(cL)^{m-1})^{d/\nu}\big)$ when $T=\Theta\big((B(cL)^{m-1})^{d/\nu}\big)$, which contradicts the lower bound for $T=\Omega\big((B(cL)^{m-1})^{d/\nu}\big)$.

Hence, \cref{thm:lower_cml} follows by combining the two cases.

\end{proof}

\section{Summary of Regret Bounds}
\label{sec:summary}
A detailed summary of our regret bounds for the Mat\'ern kernel is given in \cref{tab:summary}.
\begin{table}[t!]
	\centering
	~\clap{
	
	\begin{tabular}{ll}
		\toprule
		\multicolumn{2}{l}{\bf Algorithm-Independent Cumulative Regret Lower Bound} \\
		Chains/Multi-Output Chains/Feed-Forward Networks & $\begin{cases}
			\Omega\big(\min\{T, B(cL)^{m-1}T^{1-\nu/d}\}\big)&\hspace{-2ex}\text{ when }d>\nu\ge1,\\
			\Omega\big(\min\{T, (B(cL)^{m-1})^{d/\nu}\}\big)&\hspace{-2ex}\text{ when } d\le\nu.
		\end{cases}$ \\
		\textbf{Algorithmic Cumulative Regret Upper Bound} \\ \textbf{(If Conjecture of \cite{vakili2022open} Holds)}\tablefootnote{For bounds not requiring this conjecture, see Section \ref{sec:ucb_disc}.} & \\
		Chains & $\begin{cases}
			O(2^mBL^{m-1}T^{1-\nu/d})&\hspace{7ex}\text{ when }d>\nu,\\
			\Otil(2^mBL^{m-1})&\hspace{7ex}\text{ when }d\le\nu.
		\end{cases}$
		\\
		Multi-Output Chains & $\begin{cases}
			O(5^mBL^{m-1}T^{1-\nu/d_{\max}})&\hspace{4ex}\text{ when }d_{\max}>\nu,\\
			\Otil(5^mBL^{m-1})&\hspace{4ex}\text{ when }d_{\max}\le\nu.
		\end{cases}$
		\\
		Feed-Forward Networks & $\begin{cases}
			O(2^m\sqrt{D_{2,m}}BL^{m-1}T^{1-\nu/d_{\max}})&\hspace{-3ex}\text{ when }d_{\max}>\nu,\\
			\Otil(2^m\sqrt{D_{2,m}}BL^{m-1})&\hspace{-3ex}\text{ when }d_{\max}\le\nu.
		\end{cases}$
		\\
		\midrule
		\multicolumn{2}{l}{\bf Algorithm-Independent Simple Regret Lower Bound} \\
		Chains/Multi-Output Chains/Feed-Forward Networks & $\Omega(B(cL)^{m-1}T^{-\nu/d})$\hspace{14ex}when $\nu\ge1.$ \\ [3mm]
		\textbf{Algorithmic Simple Regret Upper Bound} & \\
		Chains/Multi-Output Chains & $\begin{cases}
			\Otil(BL^{m-1}T^{-\nu^m/d})&\hspace{-1ex}\text{when $\nu\le 1$,}\\
			\Otil(B^{1+\nu+\nu^2+\nu^{m-2}}L^{\nu^{m-1}}T^{-\nu/d})&\hspace{-1ex}\text{when $\nu> 1$.}
		\end{cases}$ \\
	   Chains/Multi-Output Chains (Restrictive Cases) & $\begin{cases}
	   	O(BL^{m-1}T^{-\nu/d})&\hspace{9ex}\text{ when $\nu\le 1$,}\\
	   	O(BL^{(m-1)\nu}T^{-\nu/d})&\hspace{9ex}\text{ when $\nu> 1$.}
	   \end{cases}$ \\
   		Feed-Forward Networks &   $\begin{cases}
   			\Otil(\sqrt{D_{2,m}}BL^{m-1}T^{-\nu^m/d})&\text{ when $\nu\le 1$,}\\
   			\Otil( \Dtil_{2,m}^\nu B^{1+\nu+\nu^2+\nu^{m-2}}L^{\nu^{m-1}}T^{-\nu/d} )&\text{ when $\nu> 1$.}
   		\end{cases}$ \\
   		Feed-Forward Networks (Restrictive Cases) & $\begin{cases}
   		O(\sqrt{D_{2,m}}BL^{m-1}T^{-\nu/d})&\hspace{-1ex}\text{ when $\nu\le 1$,}\\
   		O\big((D_{2,m})^{\nu/2}BL^{(m-1)\nu}T^{-\nu/d}\big)&\hspace{-1ex}\text{ when $\nu> 1$.}
   		\end{cases}$ \\
		\bottomrule                 
	\end{tabular}
	}
	\caption{Summary of regret bounds for the Mat\'ern kernel. (The algorithmic simple regret upper bounds are valid when the domain of each layer is a hyperrectangle.) $T$ denotes the time horizon; $m$ denotes the number of layers; $B$ denotes the RKHS norm upper bound of each layer; $L$ denotes the Lipschitz constant upper bound of each layer; $\nu$ denotes the smoothness of the Mat\'ern kernel; $d$ denotes the domain dimension of $g$; $d_{\max}=\max_{i\in[m]}d_i$ denotes the maximum dimension among all the $m$ layers; $D_{2,m}=\prod_{i=2}^m d_i$ denotes the product of the dimensions from the second to the last layer; $\Dtil_{2,m}^\nu=\prod_{i=2}^m (d_i)^{\nu^{m+1-i}/2}$. The lower bounds hold for some $c=\Theta(1)$.}
	\label{tab:summary}
\end{table}

\section{Comparison to Related Works}
\label{sec:comparison}
\subsection{Comparison to \cite{kusakawa2021bayesian}}
In this section, we compare our theoretical result to two confidence bound based algorithms cascade UCB (cUCB) and optimistic improvement (OI) proposed by \cite{kusakawa2021bayesian}. Both algorithms utilize a novel posterior standard deviation defined using the Lipschitz constant. 

\textbf{Problem Setup.} 
\begin{itemize}
	\item In each layer, \cite{kusakawa2021bayesian} assumes the entries in the same layer are mutually independent, which is equivalent to our feed-forward network structure.
	\item Different from our assumption of $\ff{i,j}\in\Hc_k(B)\cap\Fc(L, \|\cdot\|_2)$ for each $i\in[m]$ and $j\in[d_{i+1}]$, where the Lipschitz continuity is with respect to $\|\cdot\|_2$, \cite{kusakawa2021bayesian} assumes $\ff{i,j}\in\Hc_k(B)\cap\Fc(L_f, \|\cdot\|_1)$, where the Lipschitz continuity is with respect to $\|\cdot\|_1$. Therefore, for a fixed scalar-valued function $\ff{i,j}\in\Fc(L, \|\cdot\|_2)\cap\Fc(L_f, \|\cdot\|_1)$ with the smallest possible $L$ and $L_f$, it is satisfied that $L_f\le L\le\sqrt{d_i}L_f$.
	\item \cite{kusakawa2021bayesian} has an additional assumption that $\sig{i,j}_t\in\Fc(L_\sigma,\|\cdot\|_1)$ for all $t\ge 1$ for some constant $L_\sigma$. The constant $L_\sigma$ is not used in the algorithms, and only appears in the regret bounds.
\end{itemize}

\textbf{Regret Bounds. }
\begin{itemize}
	\item With $D_{2,m}=\prod_{i=2}^m d_i$ and $D^+_{2,m}=\sum_{i=2}^m d_i$, noise-free cUCB achieves cumulative regret
	\begin{align*}
		R_T=O\big(B(BL_\sigma+L_f)^{m-1}D_{2,m}D^+_{2,m} \sqrt{T\gamma_T}\big),
	\end{align*}
	  with a $T$-independent factor no smaller than that in $R_T=O(2^mBL^{m-1}\sqrt{D_{2,m}T\gamma_T})$ of our GPN-UCB (\cref{algo:ucb}) since $BL^{m-1}\sqrt{D_{2,m}}\le BL_f^{m-1}D_{2,m}\le B(BL_\sigma+L_f)^{m-1}D_{2,m}$. 
	

	\item Noise-free OI achieves simple regret
	\begin{align*}
		r^\ast_T=O\big(m^{m^2+m+\frac{1}{2}}B^{m+1}L_f^{m^3}(BL_\sigma+L_f)^{3m^3}(D_{2,m})^{2m^2}(D^+_{2,m})^{m^2+1}T^{-\frac{\nu}{2\nu+d}}\big)
	\end{align*}
	 for the Mat\'ern kernel with smoothness $\nu$. When $\nu\le 1$, our non-adaptive sampling (\cref{algo:fd}) achieves $r^\ast_T=\Otil(\sqrt{D_{2,m}}BL^{m-1}T^{-\nu^m/d})$, which has a significantly smaller $T$-independent factor. When $\nu>1$, our \cref{algo:fd} achieves $r^\ast_T=\Otil( \Dtil_{2,m}^\nu B^{1+\nu+\nu^2+\nu^{m-2}}L^{\nu^{m-1}}T^{-\nu/d} )$, which has a smaller $T$-dependent factor.  The $T$-independent factors are more difficult to compare, though ours certainly become preferable under the more restrictive scenarios discussed leading up to \eqref{eq:more_restrictive}.
\end{itemize}

\textbf{Generality.} \cite{kusakawa2021bayesian} allows additional (multi-dimensional) input independent of previous layers for each layer. GPN-UCB and non-adaptive sampling can also be adapted to accept additional input. Let $\widehat{\Xc}^{(i)}\subset\RR^{\dhat_i}$ denote the domain of the additional input of $\ff{i}$, and let $\widehat{\xbf}_{[i:j]}=[\widehat{\xbf}^{(i)},\dots,\widehat{\xbf}^{(j)}]$ denote the concatenation of the addition inputs from $\ff{i}$ to $\ff{j}$.
\begin{itemize}
	\item For GPN-UCB (\cref{algo:ucb}), we simply modify the upper confidence bound to be
	\begin{align}
		\UCB_t(\xbf,\widehat{\xbf})&=\max_{\zbf\in\Delta^{(m)}_t(\xbf,\widehat{\xbf}_{[2:m-1]})}\UCBBB{m,1}_t([\zbf,\widehat{\xbf}^{(m)}]),
	\end{align}
	where
	\begin{align}
		\Delta^{(1)}_t(\xbf)=&\{\xbf\},\\
		\Delta^{(i+1,j)}_t(\xbf,\widehat{\xbf}_{[2:i]})=&\Bigg[\min_{\zbf\in\Delta^{(i)}_t(\xbf,\widehat{\xbf}_{[2:i-1]})}\LCBBB{i,j}_t([\zbf,\widehat{\xbf}^{(i)}]) , \max_{\zbf\in\Delta^{(i)}_t(\xbf,\widehat{\xbf}_{[2:i-1]})}\UCBBB{i,j}_t([\zbf,\widehat{\xbf}^{(i)}]) \Bigg]\nonumber\\
		&\hspace{42ex}\text{ for }i\in[m-1],j\in[d_{i+1}],\\
		\Delta^{(i)}_t(\xbf,\widehat{\xbf}_{[2:i-1]})=& \Delta^{(i,1)}_t(\xbf,\widehat{\xbf}_{[2:i-1]})\times\cdots\times\Delta^{(i,d_i)}_t(\xbf,\widehat{\xbf}_{[2:i-1]}) \nonumber\\
		&\hspace{57ex}\text{ for }i\in[m].
	\end{align}
	The cumulative regret upper bound will remain the same, since $\dia{\del{i}_t(\xbf,\widehat{\xbf}_{[2:i-1]})}$ remains the same.
	\item For non-adaptive sampling (\cref{algo:fd}), with $\dhat=d+\sum_{i=2}^m \dhat_i$, we choose $\{\xbf_s, \widehat{\xbf}^{(2)}_s,\dots,\widehat{\xbf}^{(m)}_s\}_{s=1}^T$ such that
	\begin{align}
		\max_{[\xbf,\widehat{\xbf}^{(2)},\dots,\widehat{\xbf}^{(m)}]\in \Xc\times\widehat{\Xc}^{(2)}\times\cdots\times\widehat{\Xc}^{(m)}} \min_{s\in[T]} \| [\xbf,\widehat{\xbf}^{(2)},\dots,\widehat{\xbf}^{(m)}] - [\xbf_s,\widehat{\xbf}^{(2)}_s,\dots,\widehat{\xbf}^{(m)}_s] \|_2=O(T^{-1/\dhat}).
	\end{align}
	Then, modify the composite mean to be
	\begin{align}
		\mu_T^g(\xbf,\widehat{\xbf})=\muu{m,1}_T([\zz{m},\widehat{\xbf}^{(m)}])
	\end{align}
	with
	\begin{align}
		\zz{1}&=\xbf,\\
		\zz{i+1,j}&=\muu{i,j}_T([\zz{i},\widehat{\xbf}^{(i)}])&&\text{ for }i\in[m-1],j\in[d_{i+1}],\\
	\zz{i+1}&=\muu{i}_T([\zz{i},\widehat{\xbf}^{(i)}])=[\zz{i+1,1},\dots,\zz{i+1,d_i}]&&\text{ for }i\in[m-1].
	\end{align}
	Then, the only change in simple regret upper bound is that $d$ is replaced by $\dhat$. 
\end{itemize} 

\subsection{Comparison to \cite{sussex2023modelbased} \label{sec:cmp_noisy}}
Another related work \cite{sussex2023modelbased} seeks the best intervention action for a given causal graph structure. Focusing on DAGs, they have a similar setup to \cite{kusakawa2021bayesian}:
\begin{itemize}
	\item $\ff{i}\in\Hc_k(B)\cap\Fc(L, \|\cdot\|_2)$ for each node $i$, where the Lipschitz continuity is with respect to $\|\cdot\|_2$.
	\item $\sig{i}_t\in\Fc(L_\sigma,\|\cdot\|_2)$ for each node $i$ and all $t\ge 1$ for some constant $L_\sigma$.
\end{itemize}
When the causal graph structure is a feed-forward network, their expected improvement based method achieves $R_T=O(B^m L_\sigma^m L^m  d_{\max}^m D^+_{2,m}\sqrt{T\gamma_T})$ when specialized to the noise-free setting, thus containing significantly larger $T$-independent terms compared to GPN-UCB similarly to the above discussion.  For fairness, we note that \cite{sussex2023modelbased} is concerned mainly with the noisy setting, which we do not handle in this paper, instead leaving analogous improvements as potential future work.


\section{Experiments}
\label{sec:exp}

In this section, we experimentally evaluate the performance of our proposed algorithms on chains composed of synthetic functions, and compare to three grey-box algorithms (cascaded UCB (cUCB), optimistic improvement (OI), and cascaded expected improvement (cEI) \cite{kusakawa2021bayesian}) and to two classic black-box algorithms. We note that with our main contributions all being theoretical, these experiments are only intended to suggest the potential plausibility of our algorithms for practical use, rather than being comprehensive or definitive.

\subsection{Synthetic Chains}
We generate two chains $g_1=f_3\circ f_2\circ f_1$ and $g_2=h_3\circ h_2\circ h_1$ (see \cref{fig:gs}) with $d=2$ and $m=3$ by sampling $\{f_1,f_2,f_3,h_1,h_2,h_3\}$ from GP prior with zero mean and squared exponential kernel with lengthscale $l=1$. $g_1$ and $g_2$ share the same domain $\Xc$, which contains $2500$ points obtained by evenly splitting $[-5,5]^2$ into a $50\times 50$ grid. We set $B=2$ and $L=2$ for both chains.

\begin{figure}
	\centering
	\begin{subfigure}[b]{0.32\textwidth}
		\includegraphics[width=\textwidth]{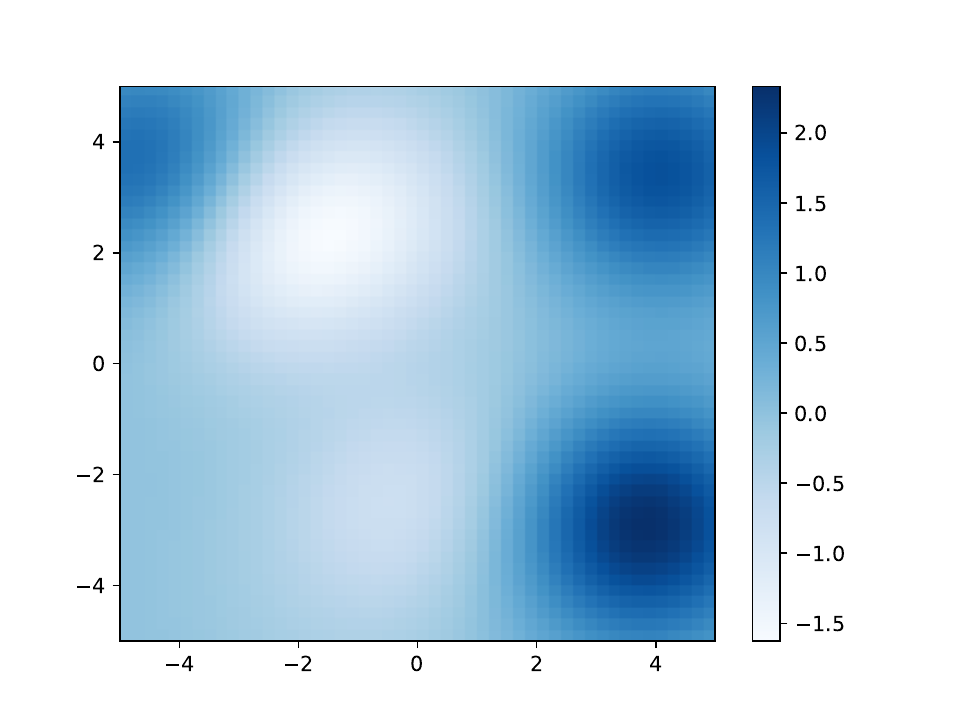}
		\caption{$f_1$}
	\end{subfigure}
	\begin{subfigure}[b]{0.32\textwidth}
		\includegraphics[width=\textwidth]{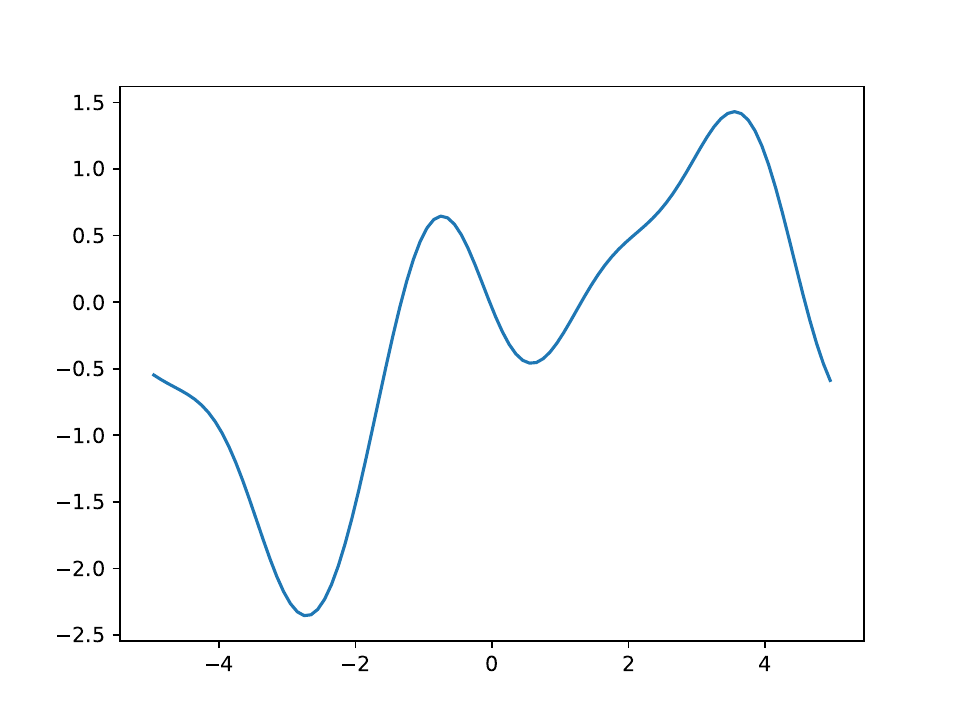}
		\caption{$f_2$}
	\end{subfigure}
	\begin{subfigure}[b]{0.32\textwidth}
		\includegraphics[width=\textwidth]{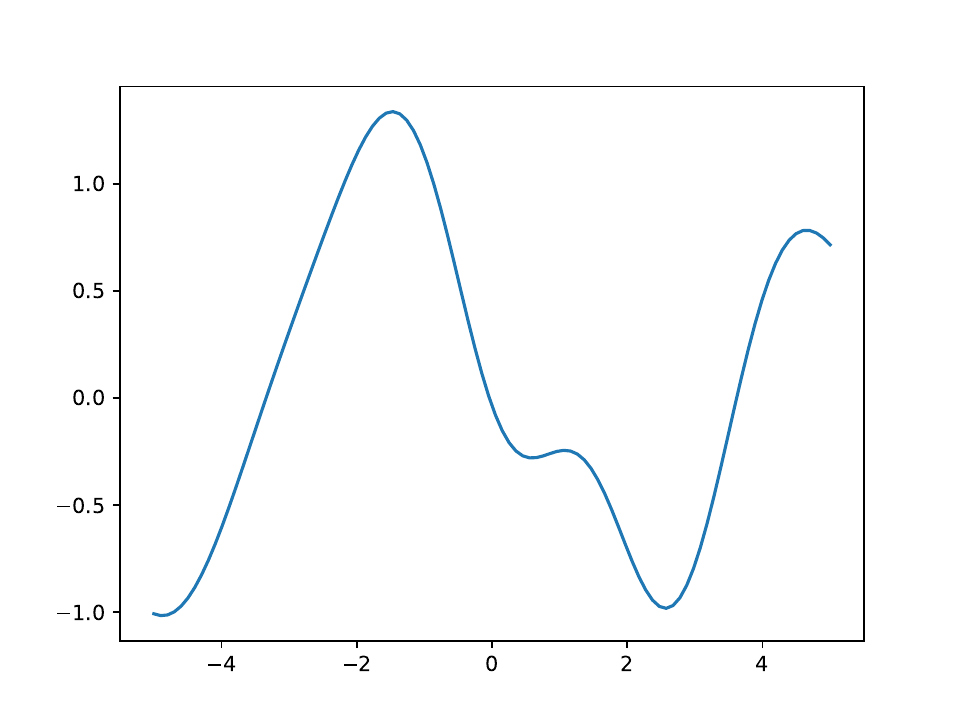}
		\caption{$f_3$}
	\end{subfigure}
	\newline
	\begin{subfigure}[b]{0.32\textwidth}
		\includegraphics[width=\textwidth]{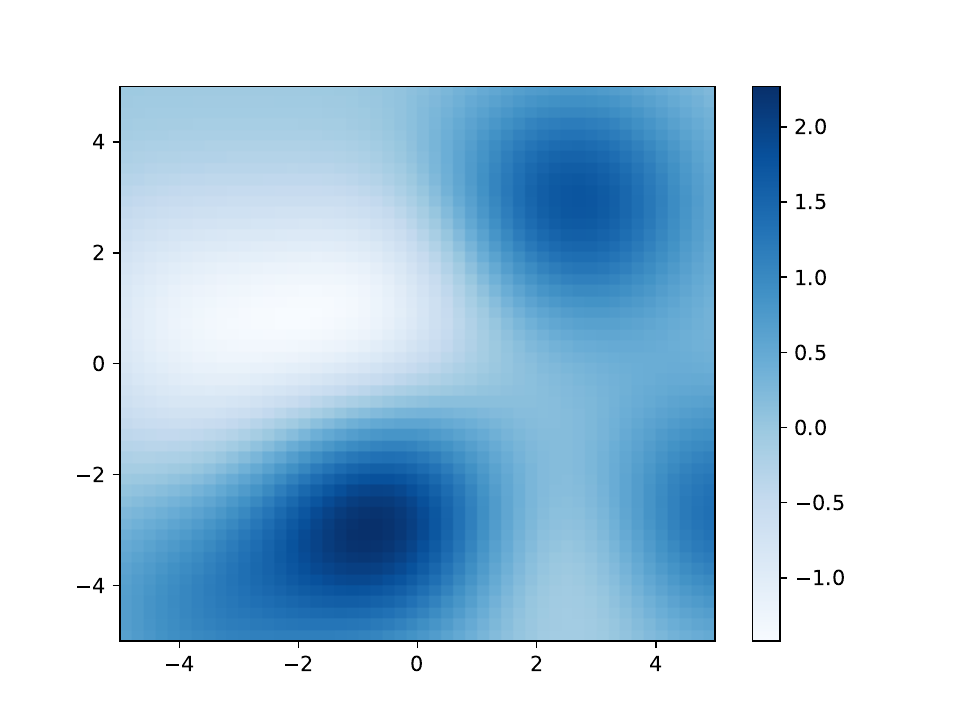}
		\caption{$h_1$}
	\end{subfigure}
	\begin{subfigure}[b]{0.32\textwidth}
		\includegraphics[width=\textwidth]{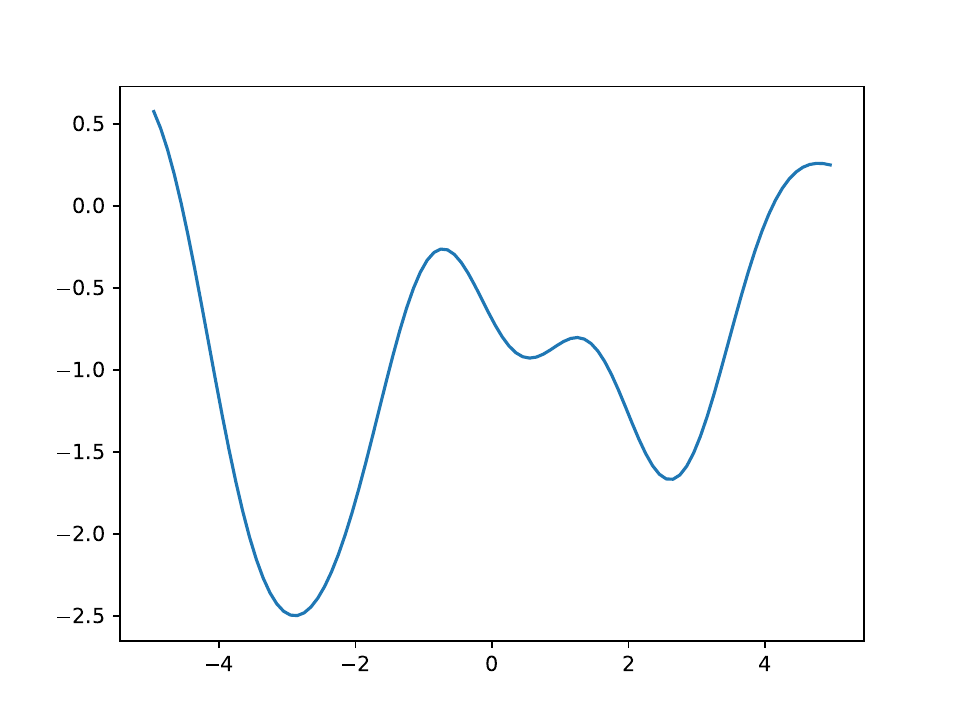}
		\caption{$h_2$}
	\end{subfigure}
	\begin{subfigure}[b]{0.32\textwidth}
		\includegraphics[width=\textwidth]{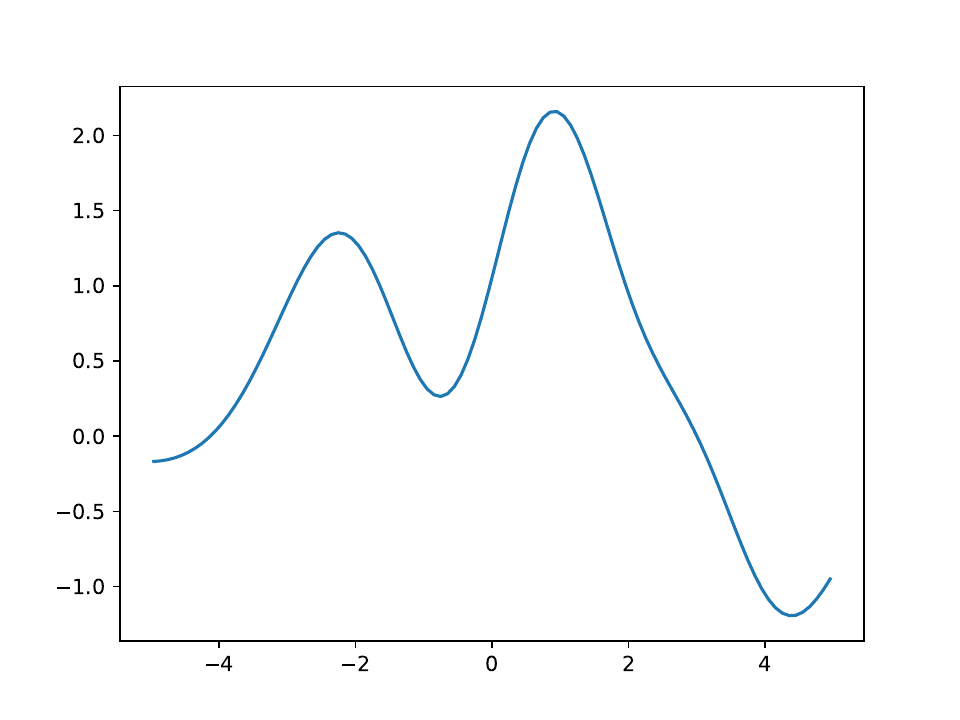}
		\caption{$h_3$}
	\end{subfigure}
	\newline
	\begin{subfigure}[b]{0.45\textwidth}
		\includegraphics[width=\textwidth]{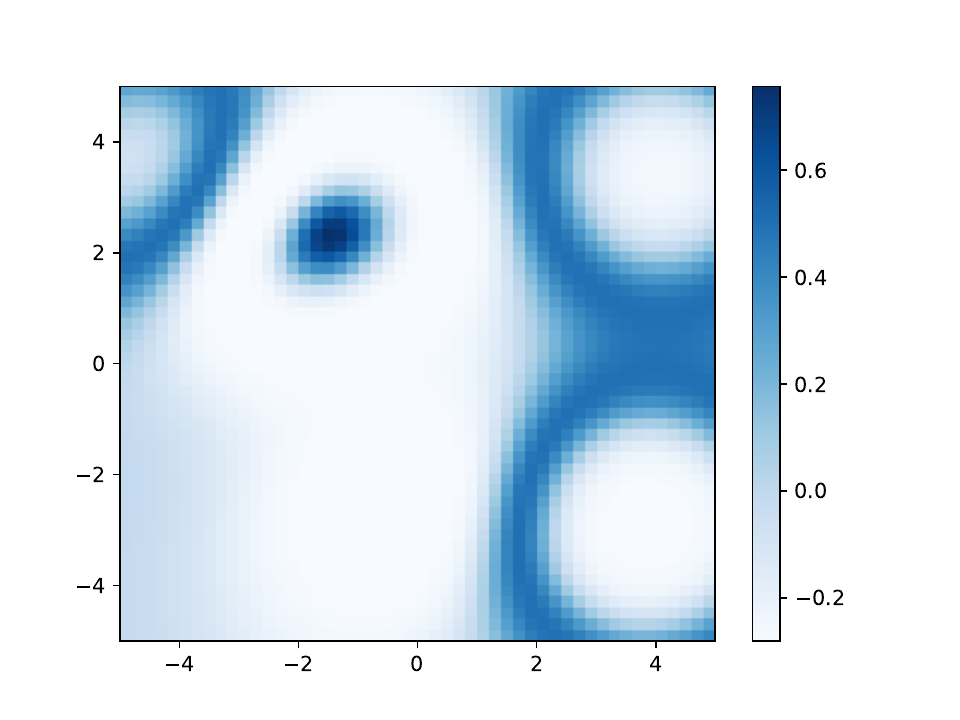}
		\caption{$g_1=f_3\circ f_2\circ f_1$}
	\end{subfigure}
	\begin{subfigure}[b]{0.45\textwidth}
		\includegraphics[width=\textwidth]{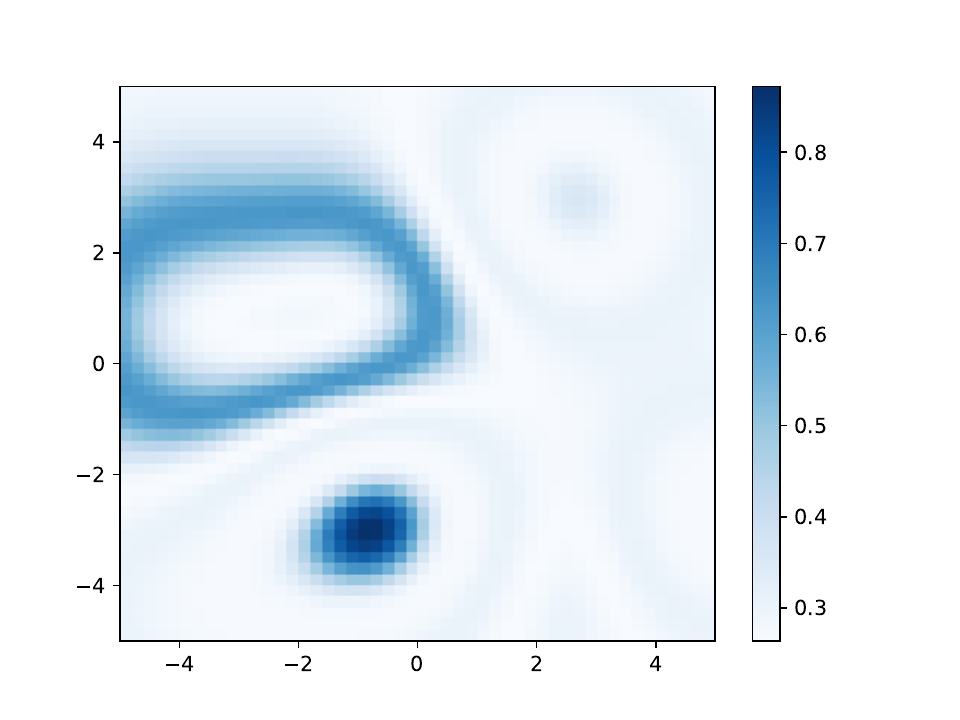}
		\caption{$g_2=h_3\circ h_2\circ h_1$}
	\end{subfigure}
	\caption{Synthetic chains $g_1=f_3\circ f_2\circ f_1$ and $g_2=h_3\circ h_2\circ h_1$}
	\label{fig:gs}
\end{figure}

\subsection{Algorithms} 
We consider our proposed algorithms GPN-UCB and NonAda, three grey-box algorithms cUCB, OI, and cEI from \cite{kusakawa2021bayesian}, and two black-box algorithms GP-UCB and EI.

\textbf{GPN-UCB.} 
To reduce the computational cost of GPN-UCB, with $\Dc$ denoting $100$ points evenly selected from $[-5,5]$, we make the following adjustments for each $i\in\{2, \dots, m\}$ in the implementation:
\begin{itemize}
	\item When computing $\overline{\mathrm{UCB}}^{(i)}_t(\zbf)$ (resp. $\overline{\mathrm{LCB}}^{(i)}_t(\zbf)$), we only take minimum (resp. maximum) over $\{\zbf'\in \Dc: \|\zbf-\zbf'\|\le 1\}$
	\item When computing $\Delta^{(i+1)}_t(\xbf)$, we always replace $\Delta^{(i)}_t(\xbf)$ with $\Delta^{(i)}_t(\xbf)\cap \Dc$. If $\Delta^{(i)}_t(\xbf)$ is too small making the intersection empty, we use the two endpoints of $\Delta^{(i)}_t(\xbf)$ instead.
\end{itemize}

\textbf{NonAda.}
Since the non-adaptive sampling strategy depends on the value of $T$, we run NonAda with $T\in\{2^2, 3^2,\dots, 14^2\}$ in parallel, and compute the simple regret of the returning point for each $T$.

\textbf{cUCB and OI.} \cite{kusakawa2021bayesian} introduces confidence bounds as follows:
\begin{align}
	\mathrm{UCB}_t(\xbf) &= \mutil_t^{(m)}(\xbf)+ B \sigtil_t^{(m)}(\xbf),\\
	\mathrm{LCB}_t(\xbf) &= \mutil_t^{(m)}(\xbf)- B \sigtil_t^{(m)}(\xbf),
\end{align}
where
\begin{align}
	\mutil_t^{(1)}(\xbf) &= \mu_t^{(1)}(\xbf), \\
	\mutil_t^{(i)}(\xbf) &= \mu_t^{(i)}\circ\cdots\circ\mu_t^{(1)}(\xbf) &\text{ for }i=2,\dots,m, \\
	\sigtil_t^{(1)}(\xbf) &= \sigma_t^{(1)}(\xbf), \\
	\sigtil_t^{(i)}(\xbf) &= \sigma_t^{(i)}\big(\mutil_t^{(i-1)}(\xbf)\big)+L\sigtil_t^{(i-1)}(\xbf)&\text{ for }i=2,\dots,m.\label{eq:kusa_sig}
\end{align}
Then, cUCB iteratively selects
\begin{align}
	\xbf^{\mathrm{cUCB}}_t &= \argmax \mathrm{UCB}_{t-1}(\xbf).
\end{align}
OI iteratively selects
\begin{align}
	\xbf^{\mathrm{OI}}_t &= \argmax \{ \mathrm{UCB}_{t-1}(\xbf)-\max\mathrm{LCB}_{t-1},\eta_{t-1}\sigtil_{t-1}(\xbf) \},
\end{align}
where $\eta_t=(1+\log t)^{-1}$ is an additional parameter. 

In our experiments, we introduce a parameter $b$ and set $\eta_t=b\cdot(1+\log t)^{-1}$. To find the best choice of $b$ for OI, we conduct experiments for OI with $b\in\{10^{-4},10^{-3},\dots,10^4\}$, as well as cUCB. We conduct $10$ independent trials and plot (see \cref{fig:oi}) the average simple regret of the best observed points $r_T^\ast= g(\xbf^\ast)-\max_{t\le T} y_t$. The experimental results show that OI with $b\le 1$ has similar performance to cUCB, and we will use OI with $b=1$ as one of the baselines in the following section.

\begin{figure}
	\centering
	\begin{subfigure}[b]{0.45\textwidth}
		\includegraphics[width=\textwidth]{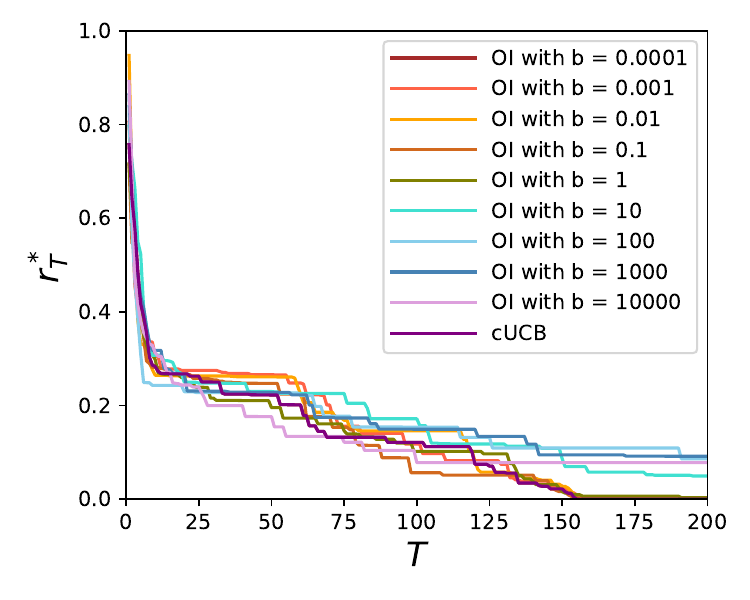}
		\caption{$g_1$}
	\end{subfigure}
	\begin{subfigure}[b]{0.45\textwidth}
		\includegraphics[width=\textwidth]{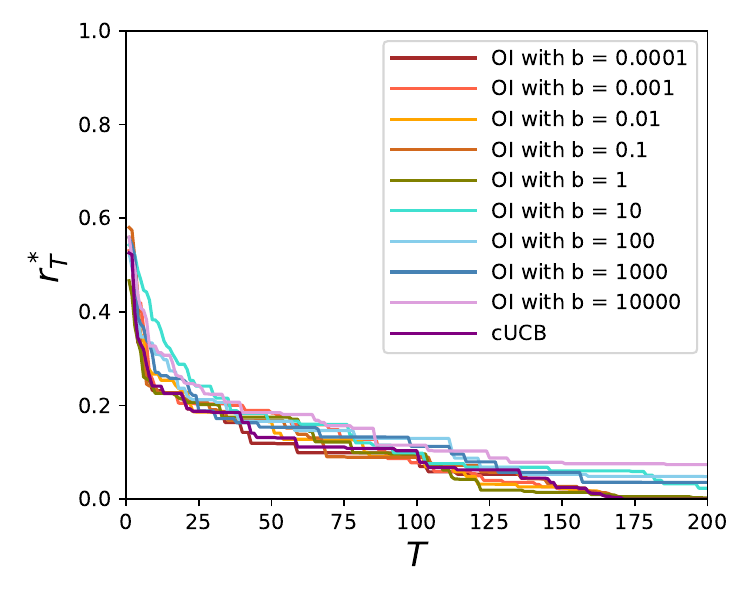}
		\caption{$g_2$}
	\end{subfigure}
	\caption{Simple regret of OI with different values of $b$ and cUCB. (The curves of OI with $b=0.0001$ and cUCB for $g_1$ are indistinguishable.)}
	\label{fig:oi}
\end{figure}

\textbf{cEI.} The exact cascaded expected improvement is
\begin{align}
	\mathrm{cEI}_t(\xbf)=\EE_{f^{(1)},\dots,f^{(m)}}[\max\{(f^{(m)}\circ\cdots\circ f^{(1)})(\xbf)-y_{\max},0\}],
\end{align}
where $\ff{i}$ follows the posterior distribution based on $t$ observations and $y_{\max}$ is the highest observed value. 

Similar to \cite{kusakawa2021bayesian}, we approximate this acquisition function by sampling. Specifically, given $\xbf$, to generate a sample of $g(\xbf)$, we do the following:
\begin{itemize}
	\item draw a sample $y^1_t(\xbf)$ from $N\big(\mu_t^{(1)}(\xbf), \sigma_t^{(1)}(\xbf)^2\big)$;
	\item recursively draw a sample $y^{i+1}_t(\xbf)$ from $N\big(\mu_t^{(i+1)}(\zbf), \sigma_t^{(i+1)}(\zbf)^2\big)$, where $\zbf=y^i_t(\xbf)$.
\end{itemize}
Repeating this process $S$ times, we obtain $Y_t(\xbf)$, containing $S$ samples of $g(\xbf)$ from the cascaded posterior distribution, and $\mathrm{cEI}_t(\xbf)$ can be approximated by
\begin{align}
	\widehat{\mathrm{cEI}}_t(\xbf)= \frac{1}{S} \sum_{s\in Y_t(\xbf)} \max\{s-y_{\max},0\}.
\end{align}
The algorithm selects
\begin{align}
	\xbf^{\mathrm{cEI}}_t &= \argmax \widehat{\mathrm{cEI}}_{t-1}(\xbf).
\end{align}
In the experiments, we set the sample size as $S=1000$.

\textbf{GP-UCB and EI.} Both algorithms consider $g$ as a black-box function and ignore the intermediate observations. With $\mu_t(\xbf)$ and $\sigma_t(\xbf)$ denoting the posterior mean and standard deviation of the overall function $g(\xbf)$, GP-UCB selects
\begin{align}
	\xbf^{\mathrm{GP-UCB}}_t &= \argmax \mu_{t-1}(\xbf)+ B\sigma_{t-1}(\xbf),
\end{align}
and EI selects
\begin{align}
	\xbf^{\mathrm{EI}}_t &= \argmax \EE_{g}[\max\{g(\xbf)-y_{\max},0\}]\\
	&=\big(\mu_{t-1}(\xbf)-y_{\max}\big)\Phi\Big(\frac{\mu_{t-1}(\xbf)-y_{\max}}{\sigma_{t-1}(\xbf)}\Big) +\sigma_{t-1}(\xbf)\phi\Big(\frac{\mu_{t-1}(\xbf)-y_{\max}}{\sigma_{t-1}(\xbf)}\Big),
\end{align}
where $\phi$ and $\Phi$ are the pdf and cdf of the standard Gaussian distribution, and $y_{\max}$ is the highest observed value.

\subsection{Experimental Results}
We let $T=200$, and compute posterior mean and variance with $\lambda=10^{-7}$ to avoid numerical error. The experimental results are displayed in \cref{fig:cml} and \cref{fig:sim}, where the average regret is computed based on $10$ independent trials with error bars indicating one standard deviation. Note that the randomness of cEI mainly comes from sampling Gaussian random variables, and the randomness of other algorithms comes from random tie-breaking in the $\argmax$ operation. In \cref{fig:sim}, except for NonAda, the reported point for computing simple regret is the best observed point, i.e., $r_T^\ast= g(\xbf^\ast)-\max_{t\le T} y_t$. The experimental results show that both algorithms are able to locate a near-optimal point in a short time (by $T=50$). 

In terms of cumulative regret, GPN-UCB outperforms all of the baselines. In terms of simple regret, for $g_2$ with an ``obvious'' maximizer, NonAda outperforms all the baselines, and GPN-UCB has very similar performance to cEI.  For $g_1$ with several potential maximizers, our algorithms are slightly outperformed by cEI, but still surpass other baselines. However, it should be noted that the slightly better performance of cEI requires significantly higher computation resources due to the large sample size. Perhaps unexpectedly, we found that the grey-box OI and cUCB algorithms (but not GPN-UCB and cEI) are sometimes outperformed by the black-box GP-UCB and EI algorithms, at least in this simple setting with relatively small $m$.  This may be caused by the large uncertainty in \eqref{eq:kusa_sig} when $L>1$, which makes the confidence bounds loose. This observation, on the other hand, provides some justification for our envelope technique in constructing confidence bounds. Comparing the cumulative and simple regret, we observe that although some trials of cEI and cUCB query a good point at an early time (due to randomness), this does not always help them find the near-optimal point earlier.

\begin{figure}
	\centering
	\begin{subfigure}[b]{0.45\textwidth}
		\includegraphics[width=\textwidth]{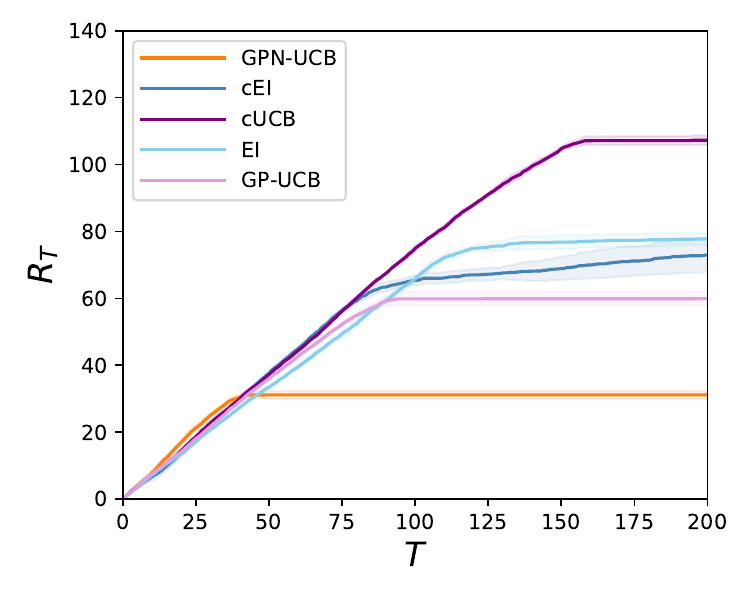}
		\caption{$g_1$}
	\end{subfigure}
	\begin{subfigure}[b]{0.45\textwidth}
		\includegraphics[width=\textwidth]{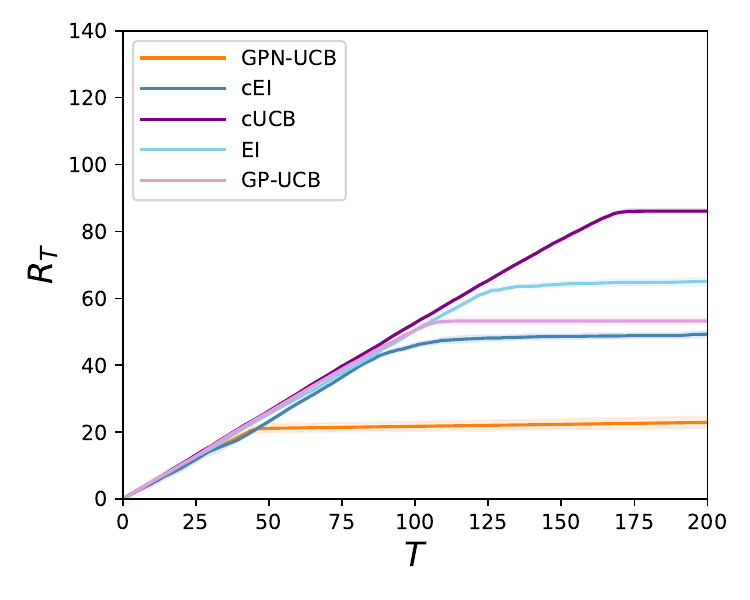}
		\caption{$g_2$}
	\end{subfigure}
	\caption{Cumulative regret of GPN-UCB, cEI, cUCB, EI, and GP-UCB .}
	\label{fig:cml}
\end{figure}

\begin{figure}
	\centering
	\begin{subfigure}[b]{0.9\textwidth}
		\includegraphics[width=\textwidth]{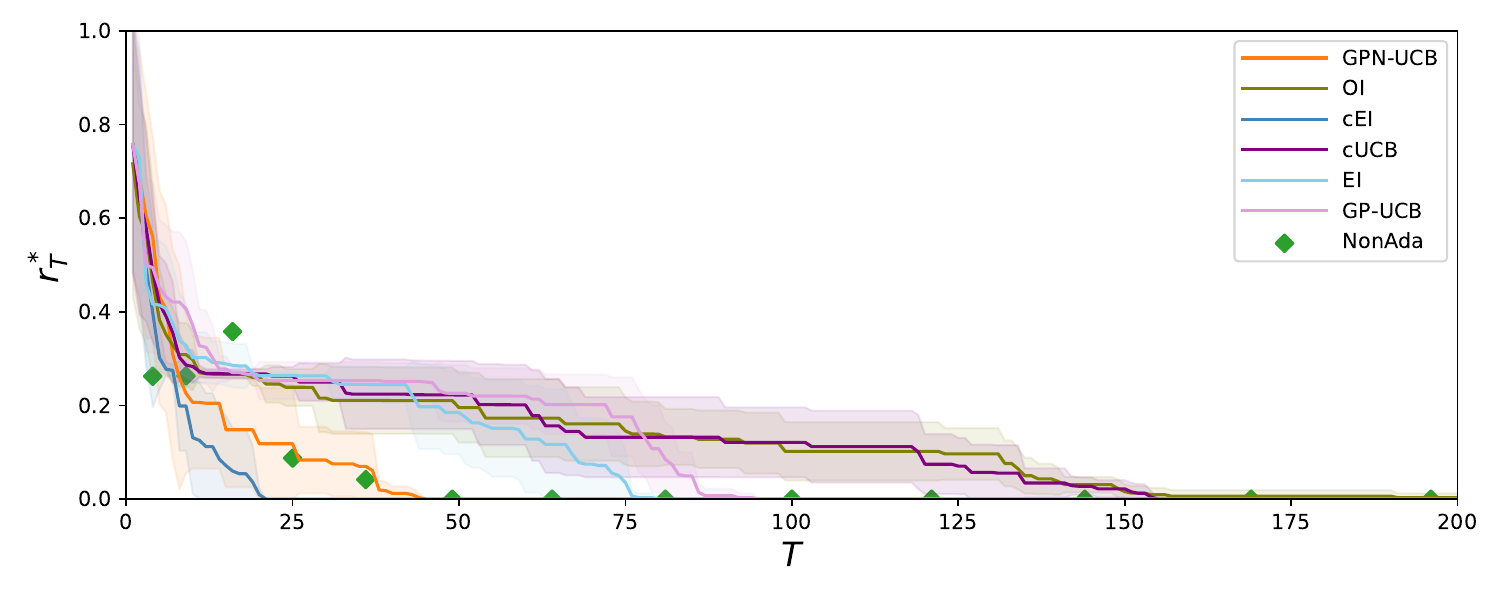}
		\caption{$g_1$}
	\end{subfigure}
	\begin{subfigure}[b]{0.9\textwidth}
		\includegraphics[width=\textwidth]{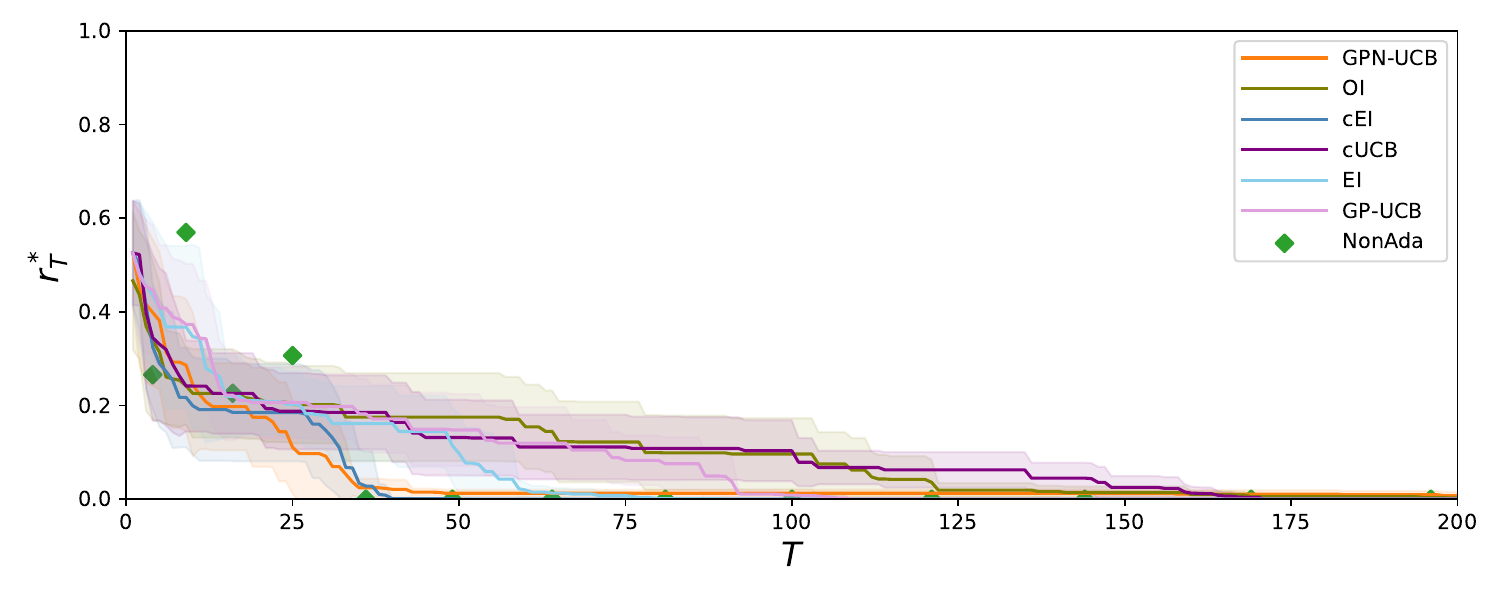}
		\caption{$g_2$}
	\end{subfigure}
	\caption{Simple regret of GPN-UCB, NonAda, OI, cEI, cUCB, EI, and GP-UCB.}
	\label{fig:sim}
\end{figure}

\end{document}